\documentclass[11pt]{article}
\usepackage{arxiv}

\usepackage[utf8]{inputenc}
\usepackage[T1]{fontenc}
\usepackage{microtype}
\usepackage{amsmath,amssymb,amsfonts,amsthm}
\usepackage{graphicx}
\usepackage{enumerate}
\usepackage{xcolor}
\usepackage{natbib}
\usepackage{url}
\usepackage{comment}
\usepackage{mathrsfs}
\usepackage{amsbsy}
\usepackage{mathtools}
\usepackage{latexsym}
\usepackage{bbm}
\usepackage{bm}
\usepackage{nicefrac}
\usepackage{booktabs}
\usepackage{multirow}
\usepackage{siunitx}
\usepackage{enumitem}
\usepackage{wrapfig}
\usepackage[caption=false,subrefformat=parens,labelformat=parens]{subfig}
\usepackage{chemformula}
\usepackage{verbatim}
\usepackage[hidelinks]{hyperref}

\usepackage{algpseudocodex}
\newcounter{breakablealgorithm}
\renewcommand{\thebreakablealgorithm}{\arabic{breakablealgorithm}}

\def\E{{\mathbb E}}

\DeclareMathOperator*{\argmax}{argmax}

\theoremstyle{plain}
\newtheorem{theorem}{Theorem}

\newtheorem{proposition}[theorem]{Proposition}

\newtheorem{assumption}[theorem]{Assumption}

\theoremstyle{definition}

\theoremstyle{remark}

\newtheorem{example}[theorem]{Example}

\title{Bias-Aware Adjoint Calibration and Optimal Control of Stochastic Multiscale Bioprocess Digital Twins}
\renewcommand{\shorttitle}{Bias-Aware Adjoint Calibration and Control of Bioprocess Twins}
\author{Keilung Choy\\Northeastern University
\And Wei Xie \thanks{Corresponding author. Email: w.xie@northeastern.edu} \\
Northeastern University}
\date{}

\begin{document}

\maketitle

\begin{abstract}
We develop a bias-aware digital-twin calibration and control framework for multiscale bioprocess models within a biological systems-of-systems (Bio-SoS) paradigm.
The digital twin is represented by a stochastic differential equation (SDE) model and calibrated from sparse observations using quasi-likelihood estimation and adjoint sensitivity analysis. SDE generator-based moment expansions characterize truncation-induced parameter bias, while forward--backward adjoints quantify how calibration uncertainty propagates to value functions and policy performance. The resulting parameter-error distribution supports both policy-directed data acquisition and uncertainty-aware policy optimization through a second-order Gaussian-averaged objective. We characterize the asymptotic behavior of the resulting exploration criterion and derive a physical-system performance under the optimized policy.
To implement these ideas, we develop an Actor--Simulator algorithm that jointly updates model parameters, selects informative experiments, and optimizes control policies. Numerical studies demonstrate improved calibration accuracy, sample efficiency, and control performance relative to state-of-the-art baselines.

\end{abstract}
\keywords{Digital Twin Calibration; Biological Systems-of-Systems (Bio-SoS); Stochastic Differential Equations (SDEs); Adjoint Sensitivity Analysis; Uncertainty Quantification; Quasi-Likelihood Estimation; Model-Based Reinforcement Learning; Optimal Control}

\section{INTRODUCTION}
\label{sec:intro}
Advances in biopharmaceutical manufacturing, particularly personalized cell and gene therapies, have increased the need for reliable, adaptive, and data-driven process control. Because living cells serve as production engines, bioprocesses function as Biological Systems-of-Systems (Bio-SoS), where biological, physical, and chemical (a.k.a. \textit{biophysicochemical}) mechanisms interact across molecular, cellular, and macroscopic scales.
These multiscale interactions govern productivity and critical quality attributes (CQAs) while exhibiting strong nonlinearity, stochasticity, environmental sensitivity, and delayed effects.
Consequently, product quality is inherently trajectory-dependent, reflecting the principle that \textit{the process is the product}.
A useful digital twin must therefore capture the multiscale mechanisms, causal pathways, and uncertainties linking control actions to future process outcomes.

Stochastic differential equations (SDEs) provide a natural framework for modeling Bio-SoS dynamics, capturing reaction kinetics, molecular interactions, and intrinsic stochasticity in continuous time.
Guided by domain ontologies \citep{choy2026biological}, these dynamics can be decomposed into reusable mechanistic modules, such as molecular and cellular agents, interaction rules, and reaction regulatory mechanisms, and assembled into multiscale digital twins. This modular architecture enables the rapid development of mechanistic-hybrid digital twins across diverse biomanufacturing platforms, including monoclonal antibodies \citep{wang2025multi,zeng2026glycomac}, cell and gene therapies \citep{wang2024metabolic,zheng2024stochastic,cheng2026intracellular}, vaccines \citep{KeqiIVT}, and nanoparticle delivery systems
\citep{yang2025multi}.
Coupled across scales, the SDE modules form interpretable digital twins in which molecular and cellular mechanisms govern macroscopic process behavior. 
The resulting parameters retain biophysicochemical interpretability, enabling calibration and control sensitivities to be linked to specific pathways and regulatory mechanisms.

We assume that the mechanistic SDE structure is specified from established process knowledge, while unknown parameters are inferred from a limited number of discretely observed state-action trajectories. Since exact finite-time transition densities are generally unavailable for nonlinear systems, calibration relies on Gaussian quasi-likelihoods based on approximate conditional moments. Truncation of these moment expansions introduces systematic approximation bias alongside finite-sample uncertainty. Although additional data reduce estimation variability, they do not eliminate the pseudo-true parameter shift induced by a fixed-order approximation at a nonzero observation interval. Consequently, process-control-oriented calibration must quantify both uncertainty and approximation bias and propagate their effects through policy evaluation and decision-making.

The ultimate goal of digital twin calibration is to support optimal control of the physical systems. Because transient dynamics influence yield and CQAs, steady-state approximations are often inadequate for batch and fed-batch operations.
We therefore embed the calibrated SDE digital twin within a finite-horizon Markov decision process (MDP), where parameter uncertainty drives experiment design and model refinement, while intrinsic stochasticity governs process variability during policy evaluation.
This framework separates mechanism learning from process optimization. At each learning round, we jointly update an \emph{exploration policy} that selects informative physical experiments to improve digital-twin fidelity and a \emph{target control policy} that maximizes a bias-corrected estimate of physical-system performance.

Building on sensitivity analysis and model-based reinforcement learning (RL) for biomanufacturing systems \citep{xie2022sequential,zheng2024stochastic}, we develop a bias-aware Actor--Simulator framework that unifies digital-twin calibration, uncertainty quantification, experiment design, and control optimization. After each physical trajectory, Gaussian quasi-likelihood estimation updates mechanistic parameters while quantifying both finite-sample uncertainty and fixed-interval calibration bias. A shared trajectory-level adjoint framework couples forward simulation with backward stochastic adjoint equations to efficiently compute sensitivities for calibration, uncertainty propagation, and policy evaluation. By propagating sensitivities across process trajectories, the framework preserves Bio-SoS interpretability and interoperability, consistent with the principle that the process is the product.

To improve digital-twin fidelity, we design physical experiments using a policy-relevant uncertainty criterion based on digital--physical $Q$-value discrepancies, prioritizing state--action regions with the greatest impact on control performance. Exploration decisions adapt online to observed states, while accumulated data are recursively incorporated to recalibrate the digital twin, update calibration-error estimates, and redesign subsequent experiments.

For control optimization, we maximize a second-order approximation of the expected return under the calibration-error distribution, explicitly accounting for the propagation of calibration bias and parameter uncertainty through the digital twin's mechanistic pathways and affect policy performance. The resulting framework enables sample-efficient policy learning under sparse observations and model misspecification by unifying uncertainty quantification, adaptive experimentation, and bias-aware control.

Physics-informed digital twins primarily focus on state prediction and parameter inference, while classical computer-model calibration emphasizes input--output discrepancies and surrogate representations \citep{kennedy2001bayesian,raissi2019physics,karniadakis2021physics}. Model-based RL leverages mechanistic models for decision-making but rarely propagates calibration bias and parameter uncertainty into control objectives. Our framework bridges these areas by unifying calibration, uncertainty quantification, adaptive experimentation, and bias-aware policy optimization.
A key feature of the proposed framework is the separation of calibration and control objectives. The exploration sequential design of experiments (DoEs) targets policy-relevant digital--physical value discrepancies to improve digital-twin fidelity, whereas the control policy maximizes an expected return that explicitly incorporates calibration bias and parameter uncertainty.

The primary contributions are:
\begin{enumerate}
\item \textbf{Bias-Aware Calibration.} 
We develop a Gaussian quasi-likelihood based on truncated conditional moments, quantify both finite-sample uncertainty and fixed-interval pseudo-true parameter bias, and derive a one-step correction for the leading truncation error.

\item \textbf{Adjoint Sensitivity Analysis and Value Correction.} We derive forward--backward stochastic adjoint equations for efficient first- and second-order sensitivity analysis. Exploiting the duality between state and parameter sensitivities, the framework scales to high-dimensional Bio-SoS mechanistic models and propagates calibration bias and covariance into a Gaussian-averaged approximation of finite-horizon policy value.

\item \textbf{Actor--Simulator Learning.} 
We develop a trajectory-level framework that separates exploration DoEs from control, establish a near-optimality bound accounting for sampling uncertainty, calibration bias, and moment-truncation error, and demonstrate its effectiveness on multiscale biomanufacturing applications.
\end{enumerate}

Section~\ref{sec:literature review} reviews the related literature. Section~\ref{sec:problemDescription} formulates the multiscale SDE and finite-horizon MDP frameworks and outlines the proposed approach. Section~\ref{sec: Calibration Framework} presents the calibration, uncertainty quantification, adjoint sensitivity analysis, and policy optimization methods. Section~\ref{sec:convergence} establishes the theoretical results, Section~\ref{sec: experiments} reports empirical results, and Section~\ref{sec:conclusion} concludes. Technical proofs and algorithmic details are deferred to the appendix.


\section{Literature Review}
\label{sec:literature review}
This work lies at the intersection of physics-informed digital twins, adaptive experimental design, uncertainty-aware calibration, and model-based optimal control. We review these areas and highlight the gap addressed by the proposed framework.

\textbf{(i) Physics-Informed Digital Twins and Calibration.}
Physics-informed machine learning incorporates differential-equation structure into learning, improving physical consistency and reducing data requirements relative to black-box models \citep{raissi2019physics,karniadakis2021physics}. 
Such models are increasingly deployed as digital twins and calibrated using simulation and experimental data \citep{yang2024data,sel2023physics,barimah2025scalable,kazemzadeh2025physics}. In biomanufacturing, mechanistic and hybrid digital twins support process monitoring, experiment design, and control optimization \citep{zhao2023biodt,choy2025symbolic}. However, existing studies focus on state prediction and parameter inference, with limited attention to how calibration uncertainty and approximation bias in SDE-based digital twins propagate to downstream control decisions.

Classical computer-model calibration, including Gaussian-process discrepancy models \citep{kennedy2001bayesian}, primarily adopts an input--output perspective. In contrast, we consider finite-horizon trajectories generated by continuous-time stochastic nonlinear dynamics. 
A key challenge is that exact SDE transition densities are rarely available, requiring finite-interval approximations of conditional moments. These approximations can induce a pseudo-true parameter bias that persists even as the sample size grows. Our focus is therefore on quantifying this calibration bias, correcting its leading effects, and assessing its impact on downstream decision-making.

\textbf{(ii) Adjoint Sensitivity Analysis for SDEs.}
Adjoint methods compute parameter sensitivities by propagating system states forward and adjoint variables backward, avoiding separate sensitivity equations for each parameter. They underpin differentiable simulators and neural differential equations \citep{chen2018neuralode,rackauckas2020universal}, with extensions to stochastic systems enabling scalable SDE sensitivity analysis \citep{li2020scalable}. We leverage stochastic adjoint equations to efficiently compute first- and second-order sensitivities of value functions and policy returns, enabling uncertainty and bias propagation to control performance while accelerating policy optimization.

\textbf{(iii) Model-Based Control and Adaptive Experimentation.}
Model-based reinforcement learning and model predictive control use learned dynamics to improve decision-making while reducing reliance on costly experiments \citep{deisenroth2011pilco,moerland2023model}. Bayesian RL further exploits parameter uncertainty to balance exploration and exploitation \citep{russo2018tutorial,moradipari2023improved}. In biomanufacturing, recent studies have combined mechanistic models, sequential experimentation, and policy optimization \citep{xie2022sequential,zheng2024stochastic}, while physics-informed surrogates have been integrated into feedback-control frameworks \citep{nicodemus2022physics}.

Despite these advances, existing approaches rarely separate the effects of calibration uncertainty and calibration bias on decision quality. In our setting, both contribute to the digital--physical value mismatch driving experiment design, whereas target-policy optimization depends on both directional return shifts induced by bias and curvature effects induced by parameter uncertainty. The proposed Actor--Simulator framework explicitly distinguishes these roles: exploration is guided by a policy-relevant value-discrepancy metric, while exploitation employs a bias-corrected return approximation. This unifies SDE calibration, uncertainty quantification, adaptive experimentation, and finite-horizon control within a single decision-oriented digital-twin framework.


\section{Problem Description and Proposed Framework}
\label{sec:problemDescription}

We study the calibration of a multiscale mechanistic digital twin from sparse trajectory data and its use for optimal control of an underlying bioprocess.
A key challenge is that limited discrete observations of nonlinear stochastic dynamics induce both statistical uncertainty and calibration bias in mechanistic parameter estimates. These errors propagate through the digital twin and affect process control policy evaluation, optimization, and ultimately physical-system performance.
To address this challenge, we develop an uncertainty-aware Actor--Simulator framework that integrates policy-relevant digital-twin calibration, uncertainty quantification, adaptive experimentation, and bias-corrected policy optimization. In this section, we first formulate the Bio-SoS dynamics and the associated physical and digital-twin MDPs, and then present the proposed framework.

\subsection{Physical System and Digital Twin}
\label{sec:multi-scaleBioprocess}

Biomanufacturing systems operate as complex Bio-SoS, where biophysicochemical mechanisms interact across molecular, cellular, and macroscopic scales. Consistent with the principle that the process is the product, process conditions continuously shape molecular structure and interactions, cellular behavior, and the local microenvironment, ultimately determining product yield and CQAs. A representative application is biomanufacturing such as induced pluripotent stem cell (iPSC) aggregate culture, where intracellular metabolism, cell aggregation, extracellular transport, population dynamics, and bioreactor heterogeneity are tightly coupled \citep{kwok2018scalable,zheng2024stochastic}. Consequently, process interventions, such as feeding, medium exchange, pH and agitation control, propagate across scales and influence aggregate growth, population composition, productivity, and product quality.

\vspace{0.05in}
\begin{example}[Modular Bio-SoS Digital Twin]  \label{ex:iPSC_AggregateCulture}
Figure~\ref{fig:iPSC_example2} illustrates a multiscale digital twin for iPSC aggregate culture \citep{zheng2024stochastic}. The model integrates: (i) a population-balance module for aggregate growth and collisions, (ii) a reaction--diffusion module for nutrient and metabolite transport within aggregates, and (iii) a stochastic single-cell metabolic-network module capturing intracellular dynamics and heterogeneity.
Together, these modules link operating conditions, intracellular metabolism, aggregate heterogeneity, and CQAs across scales. Spatial gradients in oxygen and nutrient availability generate metabolic heterogeneity, leading to distinct flux distributions between inner and outer cell populations.

\end{example}

\vspace{-0.1in}
\begin{figure}[htbp]
\centering
\includegraphics[width=1\textwidth]{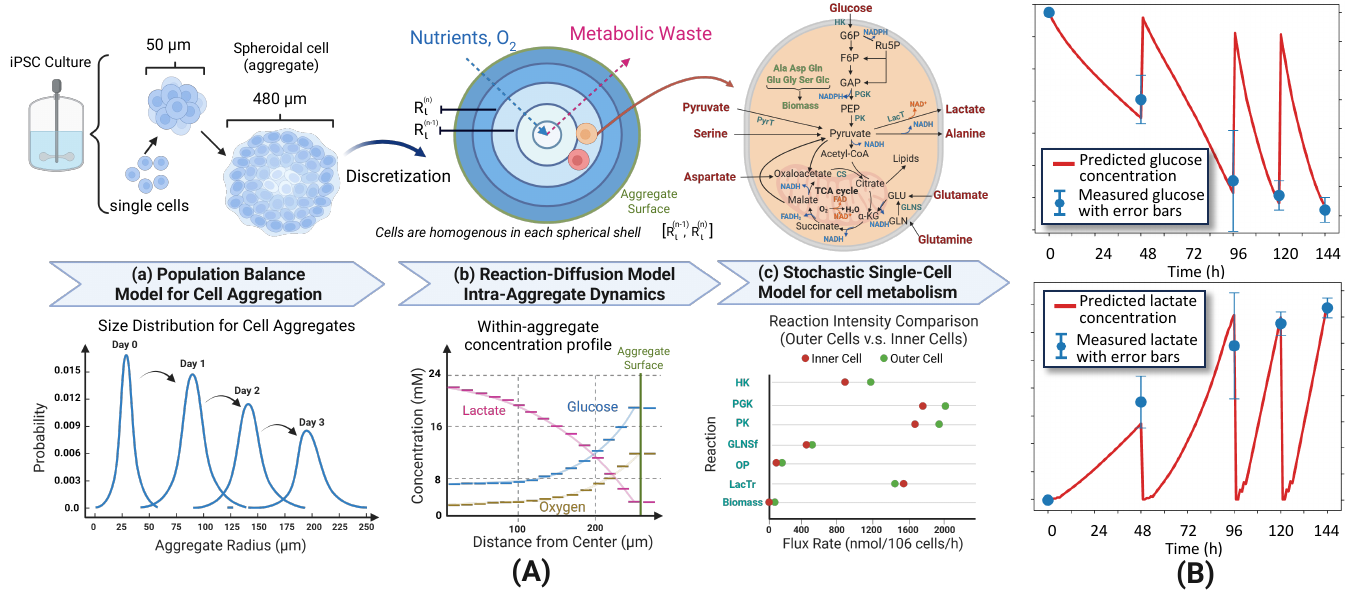}
\vspace{-0.3in}
\caption{\textbf{(A)} An illustration of multi-scale mechanistic
        foundation model with modular design characterizing iPSC cultures \citep{zheng2024stochastic} including the {modules}. 
        \textbf{(B)} Interoperability and in-context learning: Glucose and lactate trends predicted by our Bio-SoS model—trained solely on 2D monolayer data \citep{wang2024metabolic}—closely align with observations from 3D aggregate cultures.}
\label{fig:iPSC_example2}
\end{figure}
\vspace{-0in}

These interactions can be modeled by modular SDEs, in which reusable modules representing distinct biophysicochemical processes are coupled through shared states to form interoperable mechanistic-hybrid digital twins.
While motivated by cell culture application, the proposed framework generalizes to modular multiscale digital twins for engineered systems such as cell and gene therapy manufacturing, organoid production, 3D bioprinting, additive manufacturing, advanced materials processing, and semiconductor fabrication.

We assume that the digital-twin structure is specified by established scientific knowledge, while its mechanistic parameters remain unknown. 
Let $\pmb s_t\in\mathcal S\subset\mathbb R^M$ denote the process state and $\bm\theta\in\Theta\subset\mathbb R^K$ the vector of mechanistic parameters governing Bio-SoS dynamics and variations, with true value $\bm\theta^\ast$.
Between interventions, the physical system evolves according to the controlled It\^o stochastic differential equation (SDE)
\begin{equation}
d\pmb s_t
=
\pmb\mu(\pmb s_t,\bm\theta)\,dt
+
\pmb\sigma(\pmb s_t,\bm\theta)\,dW_t,
\label{SDE_state}
\end{equation}
where the drift $\pmb\mu$ represents mechanistic reaction, transport, and regulatory dynamics, while the diffusion $\pmb\sigma$ captures intrinsic biological variability (e.g., thermodynamic fluctuations and cell-to-cell heterogeneity) and other unresolved stochastic effects. Appendix~\ref{app:reaction-network} gives a reaction-network derivation motivating this SDE representation.

At each decision epoch $t_i$, a control action $\pmb a_{t_i}$ induces a known post-decision state transition 
$\pmb s_{t_i^+}
=
\pmb f(\pmb s_{t_i},\pmb a_{t_i})$,
representing deterministic operational effects such as dilution from feeding. The stochastic dynamics \eqref{SDE_state} then govern the evolution from $\pmb s_{t_i^+}$ to $\pmb s_{t_{i+1}}$.
For simplicity, observations are collected at equally spaced decision epochs $t_i=t_0+i\Delta t$.

The $j$-th physical experiment generates a trajectory
$\pmb\tau_j
:=
\{(\pmb s_{t_i}^j,\pmb a_{t_i}^j,\pmb s_{t_{i+1}}^j)\}_{i=0}^{T-1},
$
and the cumulative dataset after $n$ experiments is $\mathcal D_n:=\{\pmb\tau_1,\ldots,\pmb\tau_n\}$ with $ 1\le n\le N_{\max},$ where $N_{\max}$ denotes the available experimental budget.
The intervention map $\pmb f$ separates known operational effects from unknown bioprocess dynamics. Consequently, the calibration procedure conditions on the post-decision state $\pmb s_{t_i^+}$, preventing the effects of known interventions from being incorrectly attributed to mechanistic parameters.

The physical system ($p$) and its calibrated digital twin ($d$) are modeled as finite-horizon MDPs that share the same state space $\mathcal S$, action space $\mathcal A$, reward function $r$, and initial-state distribution $p_0$, but differ in their transition dynamics. Let $\Pi$ denote the set of admissible policies. The one-step transition kernel induced by \eqref{SDE_state} over interval $\Delta t=t_{i+1}-t_i$ is
\begin{equation*} P_{\bm\theta}(B \mid \pmb s,\pmb a) := \Pr_{\bm\theta} \!\left( \pmb s_{t_{i+1}}\in B \;\middle|\; \pmb s_{t_i^+} = \pmb f(\pmb s,\pmb a) \right), \qquad B\in\mathcal B(\mathcal S), \end{equation*}
where $\mathcal B(\mathcal S)$ denotes the Borel $\sigma$-algebra on $\mathcal S$. 
The physical system and digital twin are therefore represented as 
$ \mathcal M^p = (\mathcal S,\mathcal A,r,p_0,P_{\bm\theta^\ast})$ and $ \mathcal M_n^d = (\mathcal S,\mathcal A,r,p_0,P_{\widehat{\bm\theta}_n}),$
where $\widehat{\bm\theta}_n$ denotes the parameter estimate obtained from historical data $\mathcal{D}_n$.

For a policy $\pi\in\Pi$, define the parameterized finite-horizon discounted return as
\begin{equation} J(\pi;\bm\theta) := \mathbb E^\pi_{\bm\theta} \left[ \sum_{i=0}^{T-1} \gamma^i r(\pmb s_{t_i},\pmb a_{t_i}) \right],
\label{eq:parameterized-policy-value}
\end{equation} 
where $\gamma\in(0,1]$ is the discount factor. The dependence on $\bm\theta$ reflects the fact that multiscale mechanistic foundation parameters determine the stochastic transition dynamics and therefore the resulting policy performance.
For a state $\pmb s$ and an action $\pmb a$ selected at decision stage $i$, define the action-value Q-function
\begin{equation}
Q_i^\pi(\pmb s,\pmb a;\bm\theta)
:=
\mathbb E_{\bm\theta}^{\pi}\!\left[
\sum_{k=i}^{T-1}\gamma^{k-i}
r(\pmb s_{t_k},\pmb a_{t_k})
\,\middle|\,
\pmb s_{t_i}=\pmb s,\,
\pmb a_{t_i}=\pmb a
\right],
\label{eq:conditional-Q}
\end{equation}
which represents the expected future return when action $\pmb a$ is applied at stage $i$ and policy $\pi$ governs all subsequent decisions. 

The optimal policy for the physical system is
\begin{equation*} 
\pi^\ast \in \arg\max_{\pi\in\Pi} J(\pi;\bm\theta^\ast). 
\end{equation*}
In biomanufacturing, where \emph{the process is the product}, uncertainty in mechanistic parameters and state perturbations propagates through process trajectories to policy performance. Because the true parameter $\bm\theta^\ast$ must be inferred from sparse trajectory data, parameter estimates inevitably exhibit both bias and uncertainty, which can distort policy evaluation and optimization. This raises three fundamental questions: (i) how parameter bias and uncertainty propagate through mechanistic trajectories to decision performance; (ii) how these effects can be quantified and corrected during policy optimization; and (iii) how experiments should be designed to most efficiently reduce policy-relevant uncertainty and accelerate the discovery of end-to-end optimal control policies. Addressing these questions is the central objective of this work.

\subsection{Proposed Actor--Simulator Framework}
\label{sec:framework}
To address the challenges outlined above, we develop a bias-aware Actor--Simulator framework that unifies uncertainty quantification, policy-relevant sequential design of experiments, digital-twin calibration, and process optimal control within a closed-loop learning architecture. At each iteration, the framework estimates mechanistic parameters together with their calibration bias and uncertainty, designs physical-system experiments that maximally reduce uncertainty in policy performance, and updates the control policy using a bias-corrected estimate of physical-system value.

A central component of the framework is a shared trajectory-level adjoint sensitivity analysis that supports both digital-twin calibration and policy optimization. In biomanufacturing, where \emph{the process is the product}, uncertainty in mechanistic parameters propagates through the entire evolution of molecular, cellular, and process states, ultimately affecting control performance. By quantifying how calibration bias and uncertainty influence process trajectories and policy value, the proposed framework establishes a closed-loop connection among uncertainty quantification, experiment design, calibration, and decision-making.

Figure~\ref{fig:flowchart} summarizes the proposed framework. Operating at the trajectory level, it consists of four interconnected learning modules:
\begin{enumerate}
\item \textbf{Trajectory-Level Adjoint Sensitivity Analysis.}
Action-induced state changes and parameter perturbations propagate through process trajectories, shaping rewards, product quality, and ultimately policy performance.
Treating $(\pmb s_t,\pmb\theta)$ as an augmented state, we develop a unified forward--backward adjoint framework that computes first- and second-order sensitivities of both conditional action values and cumulative returns.
These sensitivities quantify how parameter bias and uncertainty propagate to decision performance.

\item \textbf{Policy-Relevant Digital Twin Calibration.} 
Given the current target policy $\widehat{\pi}_{n-1}$, we design an exploration policy that prioritizes state--action regions where model uncertainty has the greatest impact on policy performance. Specifically, define the physical--digital value discrepancy
\begin{equation} 
\Delta Q_{n-1,i}^{\widehat{\pi}_{n-1}}(\pmb s,\pmb a) := Q_i^{\widehat{\pi}_{n-1}}(\pmb s,\pmb a;\bm\theta^\ast) - Q_i^{\widehat{\pi}_{n-1}}(\pmb s,\pmb a;\widehat{\bm\theta}_{n-1}), 
\label{eq.physical--digital value discrepancy}
\end{equation}
and select actions according to \[ \pmb{a}_i=\eta_n(\pmb{s}_i), \qquad \eta_n(\pmb{s}_i) \in \arg\max_{\pmb a} \Delta Q_{n-1,i}^{\widehat{\pi}_{n-1}}(\pmb s_i,\pmb a), \quad i=0,1,\ldots,T-1. \] 
The resulting trajectory $\pmb\tau_n$ is incorporated into the dataset through
$\mathcal D_n=\mathcal D_{n-1}\cup\{\pmb\tau_n\}$. This step actively acquires data that are most informative for reducing policy-relevant uncertainty.

\item \textbf{Bias-Aware Model Estimation.}
Using the accumulated trajectory data $\mathcal D_n$, we estimate the mechanistic parameters and characterize calibration uncertainty through
\[
\pmb\theta^*-\widehat{\pmb\theta}_n
\ \dot\sim\ 
N\!\left(\widehat{\pmb b}_n,\widehat\Sigma_n\right),\]
where $\widehat{\pmb b}_n$ and $\widehat\Sigma_n$ are the estimated bias and covariance.
These quantities capture digital-twin uncertainty due to sparse observations and model approximation error.

\item \textbf{Bias-Corrected Policy Optimization.} Using full-return adjoint sensitivities and $(\widehat{\pmb b}_n,\widehat\Sigma_n)$, we propagate calibration bias and parameter uncertainty into policy evaluation. Conditional on $\mathcal D_n$, let $F_n$ denote the distribution of 
\[
\widetilde{\bm\theta}_n:=\widehat{\bm\theta}_n+\widetilde{\bm e}_n, \qquad \widetilde{\bm e}_n\sim N(\widehat{\bm b}_n,\widehat\Sigma_n).
\]
The resulting bias-aware objective is
\begin{equation}
J_n^{\mathrm G}(\pi)
:=\mathbb E_{\widetilde{\bm\theta}_n\sim F_n}
\bigl[J(\pi;\widetilde{\bm\theta}_n)\bigr].
\label{eq.DT-policy optimization}
\end{equation}
Section~\ref{sec:policyOptimizationstep} derives an implementable second-order approximation $\bar J_n(\pi)$ and updates $\widehat\pi_n\in\arg\max_{\pi\in\Pi}\bar J_n(\pi)$.
\end{enumerate}

\begin{figure}[htbp]
\centering
\vspace{-0.1in}
\includegraphics[width=0.86\textwidth]{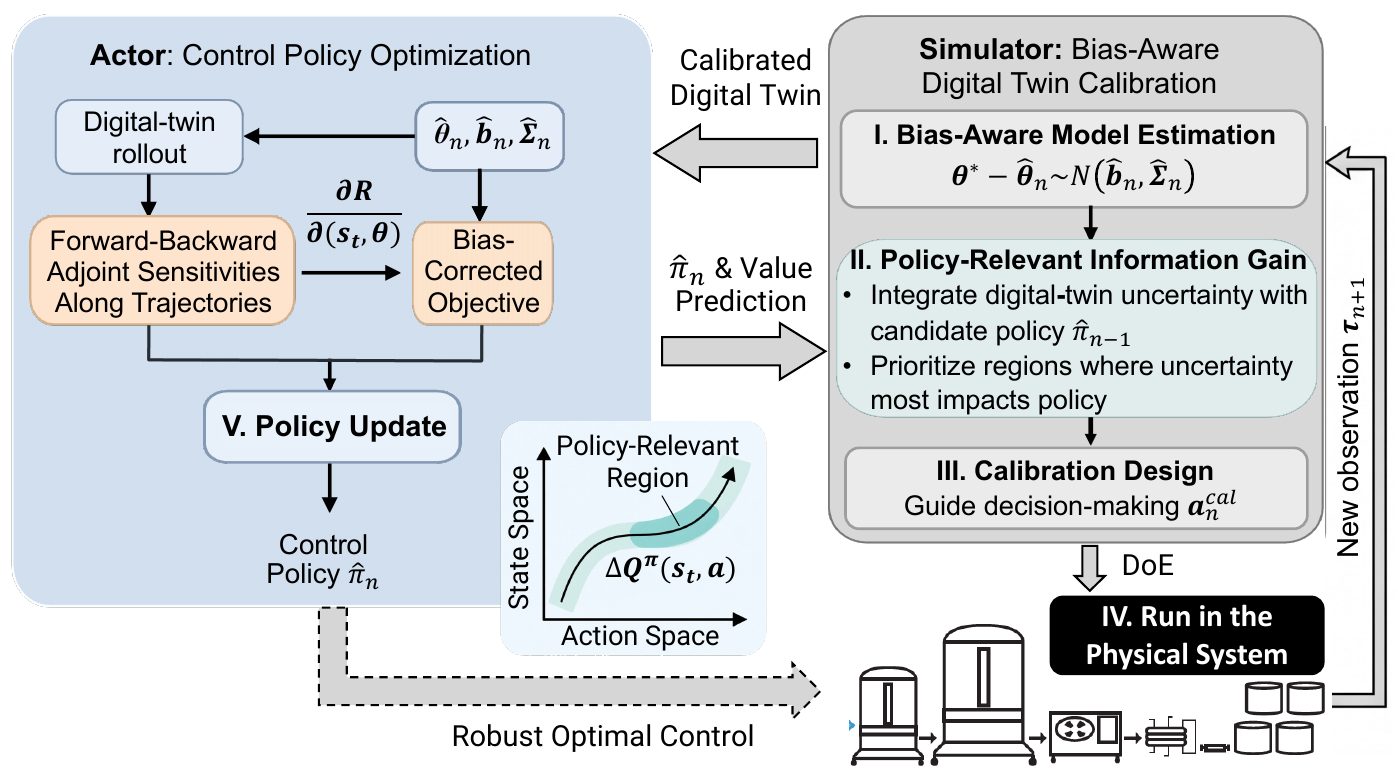}
\vspace{-0.in}
\caption{Schematic flowchart illustration of the proposed actor-simulator framework.}
\vspace{-0.in}
\label{fig:flowchart}
\end{figure}

The physical system provides expensive but informative trajectories, while the digital twin supplies inexpensive simulation rollouts, sensitivity analysis, and uncertainty propagation. At each $n$-th iteration, the resulting Actor--Simulator cycle is
\[
\underbrace{\widehat\pi_{n-1}\to\eta_n}_{\text{Adjoint-Guided Exploration}}
\overbrace{\to\pmb\tau_n\to\mathcal D_n}^{\mathclap{\text{Data Acquisition}}}
\underbrace{\to
(\widehat{\bm\theta}_n,\widehat{\bm b}_n,\widehat\Sigma_n)}_{\text{Bias-Aware Estimation}}
\overbrace{\to\widehat\pi_n}^{\mathclap{\text{Control Policy Optimization}}}.
\]
The exploration policy $\eta_n$ and target policy $\widehat{\pi}_n$ play complementary roles. The exploration policy targets state--action regions with large policy-relevant value discrepancy, whereas the target policy maximizes a bias-corrected Gaussian-averaged value approximation. Algorithm~\ref{actorsimulatoralgo} in Appendix~\ref{sec:algorithm} summarizes the complete iterative procedure.


\section{Calibration of SDE-Based Digital Twins}
\label{sec: Calibration Framework}

This section presents the proposed Actor--Simulator framework, which jointly calibrates the digital twin through policy-directed experimentation and optimizes process control policies.
It comprises four tightly integrated components.
First, we estimate the SDE parameters while quantifying both moment-truncation bias and finite-sample estimation uncertainty (Section~\ref{subsec:MU&SA}). Second, we develop a trajectory-level forward--backward adjoint framework to efficiently compute sensitivities of conditional action values and policy returns with respect to model parameters and state perturbations (Section~\ref{sec:adjoint-engine}). 
Third, these sensitivities are used to guide policy-relevant digital twin calibration by adaptively selecting physical experiments in state--action regions that are most informative for improving control performance under sparse observations
(Section~\ref{sec:exploration-objective}). Finally, we propagate the estimated calibration bias and parameter uncertainty through mechanistic pathways to obtain a bias-corrected policy optimization procedure (Section~\ref{sec:policyOptimizationstep}).

Together, these components establish a closed-loop framework linking uncertainty-aware calibration, adaptive experimental design, and policy optimization.
For simplicity, we focus on a scalar performance metric, although the framework extends naturally to multivariate responses.
Unlike black-box response-surface approaches, such as Gaussian-process (GP) metamodeling, which often struggle to capture complex multivariate dependencies \citep{higdon2008computer}, this framework preserves the mechanistic coupling among responses and propagates parameter uncertainty through the underlying system dynamics.

\subsection{Bias-Aware Model Parameter Estimation}
\label{subsec:MU&SA}
Given the physical trajectories $\pmb\tau_j:=\{(\pmb s_{t_i}^j,\pmb a_{t_i}^j,\pmb s_{t_{i+1}}^j)\}_{i=0}^{T-1}$, $j=1,\ldots,n$, the intervention map determines the post-decision state $\pmb s_{t_i^+}^j=\pmb f(\pmb s_{t_i}^j,\pmb a_{t_i}^j)$. Parameter estimation would ideally rely on the finite-time transition law of \eqref{SDE_state} from $\pmb s_{t_i^+}^j$ to $\pmb s_{t_{i+1}}^j$. Because this distribution is generally unavailable for nonlinear SDEs, we approximate its first two conditional moments using a finite-order generator expansion and estimate unknown $\bm\theta^*$ via Gaussian quasi-maximum likelihood (QMLE). We then quantify the systematic bias introduced by moment truncation and combine it with the sandwich covariance of the QMLE to construct a bias-aware Gaussian approximation of the parameter error $\bm\theta^\ast-\widehat{\bm\theta}_n$.

The following assumptions justify the conditional-moment expansion, score-bias characterization, and QMLE asymptotics. Assumptions (i)--(ii) ensure the validity of the moment approximation and truncation-bias analysis, while (iii)--(iv) establish consistency and the sandwich covariance limit. For a specified kinetic model, smoothness and growth bounds can be checked analytically, while covariance and Hessian eigenvalues and trajectory-score moments can be diagnosed over the calibrated regime. The stronger dense-sampling results in Section~\ref{sec:convergence} additionally require a nonsingular limiting covariance-information matrix.
\begin{assumption}[Moment-expansion and QMLE regularity]
\label{Regularity Cond}
For a fixed conditional moment expansion order $l\geq1$:
\begin{enumerate}[label=(\roman*)]
    \item $\Theta$ is compact with $\bm\theta^\ast$ in its interior. The drift and diffusion are globally Lipschitz in state uniformly over $\Theta$, three times continuously differentiable in $\bm\theta$, and sufficiently smooth in state. The SDE has a unique nonexplosive strong solution with moments dominating the generator expansions and their parameter derivatives.
    \item Coordinate and quadratic test functions admit generator iterates through order $l+2$ with polynomial-growth bounds uniform over $\Theta$. The order-$l$ truncated conditional covariance is uniformly positive definite on the relevant state--parameter domain.
    \item Under any data-collection policy $\eta$, trajectories are i.i.d. and the score has finite second moment. The population quasi-likelihood has a unique interior maximizer with nonsingular expected Hessian.

\end{enumerate}
\end{assumption}

\textbf{(1) Generator-Based Moment Approximation.}
For a transition from post-decision state $\pmb s$ over interval $\Delta t$, define the exact first and second conditional moments
\begin{equation}
\begin{aligned}
\pmb M_1(\Delta t,\pmb s,\bm\theta)
:=\mathbb E_{\bm\theta}[\pmb s_{t_{i+1}}\mid\pmb s_{t_i^+}=\pmb s], ~~~ 
\pmb M_2(\Delta t,\pmb s,\bm\theta)
:=\operatorname{Var}_{\bm\theta}(\pmb s_{t_{i+1}}\mid\pmb s_{t_i^+}=\pmb s).
\end{aligned}
\label{eq:exact-conditional-moments}
\end{equation}
Because these finite-time moments are generally unavailable for nonlinear SDEs, we approximate them using the infinitesimal generator of \eqref{SDE_state},
\[
\mathcal L_{\bm\theta}g(\pmb s)
=
\pmb\mu(\pmb s,\bm\theta)^\top\nabla_sg(\pmb s)
+\frac12\operatorname{tr}\!\left[
\pmb\sigma(\pmb s,\bm\theta)\pmb\sigma(\pmb s,\bm\theta)^\top
\nabla_s^2g(\pmb s)
\right].
\]
Under Assumption~\ref{Regularity Cond}, the Markov semigroup expansion \citep{kessler1997estimation}  yields
\begin{equation}
\mathbb E_{\bm\theta}[g(\pmb s_{t+\Delta t})\mid\pmb s_t=\pmb s]
=
\sum_{r=0}^{l}\frac{(\Delta t)^r}{r!}\mathcal L_{\bm\theta}^rg(\pmb s)
+\mathcal O((\Delta t)^{l+1}),
\label{eq:semigroup-expansion}
\end{equation}
for any admissible test function $g$.
Applying Equation~\eqref{eq:semigroup-expansion} to the coordinate and quadratic functions gives the explicit order-$l$ conditional mean and covariance
\begin{equation}
\begin{aligned}
\widetilde{\pmb m}_{1,l}(\Delta t,\pmb s,\bm\theta)
:=
\sum_{r=0}^{l}\frac{(\Delta t)^r}{r!}\mathcal L_{\bm\theta}^r\pmb s, ~~~
\widetilde{\pmb m}_{2,l}(\Delta t,\pmb s,\bm\theta)
:=
\sum_{r=0}^{l}\frac{(\Delta t)^r}{r!}
\mathcal L_{\bm\theta}^r(\pmb s\pmb s^\top)
-
\widetilde{\pmb m}_{1,l}\widetilde{\pmb m}_{1,l}^\top.
\end{aligned}
\label{eq:moment-expansion-summary}
\end{equation}

The exact and truncated moments satisfy
\begin{equation*}
\pmb M_q(\Delta t,\pmb s,\bm\theta)
=
\widetilde{\pmb m}_{q,l}(\Delta t,\pmb s,\bm\theta)
+R_{q,l}(\Delta t,\pmb s,\bm\theta),
\qquad q=1,2,
\end{equation*}
where the remainder terms satisfy the uniform bound
$
\|R_{1,l}\|+\|R_{2,l}\|_F \le C_1(\Delta t)^{l+1} \bigl(1+\|\pmb s\|\bigr)^{C_2}$
for finite constants $C_1,C_2>0$ independent of $\Delta t$, $\pmb s$, and $\bm\theta$. The leading omitted generator terms provide $\mathcal O((\Delta t)^{l+1})$ approximations of $R_{1,l}$ and $R_{2,l}$ and are later used to estimate truncation bias. Importantly, a finite observation interval does not itself induce bias when the exact finite-time moments are used. The systematic bias studied here arises solely from replacing $\pmb M_1$ and $\pmb M_2$ by the truncated approximations in \eqref{eq:moment-expansion-summary}, and is therefore distinct from numerical integration error associated with the simulation step size $h_{\mathrm{sim}}$.

\vspace{0.05in}
\textbf{(2) Quasi-Likelihood and Truncation Bias.}
Replacing the exact moments in \eqref{eq:exact-conditional-moments} by \eqref{eq:moment-expansion-summary} yields a computable Gaussian quasi-likelihood. Let
$
m_{q,i}^j(\bm\theta)
:=
\widetilde{\pmb m}_{q,l}(\Delta t,\pmb s_{t_i^+}^j,\bm\theta)$ and $
\varepsilon_i^j(\bm\theta)
:=
\pmb s_{t_{i+1}}^j-m_{1,i}^j(\bm\theta)$.
The corresponding order-$l$ quasi-log-likelihood contribution is
\begin{equation}
\ell_{l,i}^j(\bm\theta)
:=
-\frac12\left[
\varepsilon_i^j(\bm\theta)^\top
m_{2,i}^j(\bm\theta)^{-1}\varepsilon_i^j(\bm\theta)
+\log\det m_{2,i}^j(\bm\theta)
\right],
\label{eq:transition-quasilikelihood}
\end{equation}
and the QMLE is
\[\widehat{\bm\theta}_n
\in
\arg\max_{\bm\theta\in\Theta}
\frac1n\sum_{j=1}^n\ell_l(\pmb\tau_j;\bm\theta) 
~~~ \mbox{with} ~~~
\ell_l(\pmb\tau_j;\bm\theta)
:=
\sum_{i=0}^{T-1}\ell_{l,i}^j(\bm\theta).\]

The Gaussian criterion does not require Gaussian transition dynamics. If the exact conditional moments $(\pmb M_1,\pmb M_2)$ were available, its population score would be centered at the physical parameter $\bm\theta^\ast$; higher-order moments affect efficiency but not score centering. Replacing the exact moments with their order-$l$ approximations breaks this property and shifts the population maximizer away from $\bm\theta^\ast$.

To quantify this truncation effect, define the population criterion, score, and {pseudo-true parameter} $\widetilde{\bm\theta}_l$ under the data-collection policy $\eta$:
\begin{equation*}
L_l^{\mathrm{pop}}(\bm\theta)
:=
\mathbb E_{\bm\theta^\ast,\eta}[\ell_l(\pmb\tau;\bm\theta)],
~~~
G_l(\bm\theta):=\nabla_{\bm\theta}L_l^{\mathrm{pop}}(\bm\theta),
~~~
\widetilde{\bm\theta}_l
:=
\arg\max_{\bm\theta\in\Theta}L_l^{\mathrm{pop}}(\bm\theta).
\end{equation*}
By construction, $G_l(\widetilde{\bm\theta}_l)=\bm0$. Under Assumption~\ref{Regularity Cond}, differentiation and expectation can be interchanged. The nonzero population score at $\bm\theta^\ast$ is
\begin{equation}
\bm\delta_l:=G_l(\bm\theta^\ast)
=\sum_{i=0}^{T-1}\bm\delta_{l,i},
\qquad
\bm\delta_{l,i}
:=
\mathbb E_{\bm\theta^\ast,\eta}
[\nabla_{\bm\theta}\ell_{l,i}(\pmb\tau;\bm\theta^\ast)],
\label{eq:population-score-bias}
\end{equation}
where the expectation is taken with respect to the trajectory distribution induced by $\bm\theta^\ast$ under policy $\eta$.
Applying the mean-value theorem to $G_l(\widetilde{\bm\theta}_l)-G_l(\bm\theta^\ast)$ gives
\[
-\bm\delta_l
=G_l(\widetilde{\bm\theta}_l)-G_l(\bm\theta^\ast)
=\overline\Lambda_l(\widetilde{\bm\theta}_l-\bm\theta^\ast),
\]
where the average score-curvature matrix along the segment joining $\bm\theta^\ast$ and $\widetilde{\bm\theta}_l$ is
\begin{equation}
\overline\Lambda_l
:=
\int_0^1
\nabla_{\bm\theta}G_l\!\left(
\bm\theta^\ast
+u(\widetilde{\bm\theta}_l-\bm\theta^\ast)
\right)\,du.
\label{eq:integrated-sensitivity}
\end{equation}
Under correct specification, $\overline\Lambda_l$ reduces locally to the Fisher-information curvature. Under moment truncation, however, the information identity generally fails, so $\overline\Lambda_l$ is interpreted as an average score-curvature matrix rather than Fisher information.

Thus, the moment remainders contribute to $\bm\delta_l$, and the curvature of the population criterion converts this nonzero score into parameter bias. Appendix~\ref{app:score-bias} derives the contributions of $R_{1,l}$ and $R_{2,l}$ to $\bm\delta_l$ and proves the following proposition.
\begin{proposition}[Score-induced parameter bias]
\label{asymptotic bias}
If $\overline\Lambda_l$ is nonsingular, then
\begin{equation}
\bm b_{\Delta,l}
:=
\bm\theta^\ast-\widetilde{\bm\theta}_l
=
\overline\Lambda_l^{-1}\bm\delta_l.
\label{eq:exact-parameter-bias}
\end{equation}
\end{proposition}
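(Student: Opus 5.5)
The plan is a direct mean-value argument applied to the population score $G_l$. I would first establish that $G_l$ is continuously differentiable on $\Theta$, so that the integral in \eqref{eq:integrated-sensitivity} makes sense and the fundamental theorem of calculus applies along the segment $u\mapsto \bm\theta^\ast+u(\widetilde{\bm\theta}_l-\bm\theta^\ast)$, $u\in[0,1]$.

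Regularity of $G_l$ follows from Assumption~\ref{Regularity Cond}. By (i)--(ii), each $\ell_{l,i}$ is $C^2$ in $\bm\theta$. This holds because $m_{1,i},m_{2,i}$ are finite sums of generator iterates of drift and diffusion, which are $C^3$ in $\bm\theta$, and because $m_{2,i}$ is uniformly positive definite, so $m_{2,i}^{-1}$ and $\log\det m_{2,i}$ are smooth. Its first and second $\bm\theta$-derivatives are bounded by polynomials in the states, uniformly over the compact $\Theta$. The moment conditions in (i) then supply an integrable envelope under $\mathbb E_{\bm\theta^\ast,\eta}$. Dominated convergence therefore lets us differentiate $L^{\mathrm{pop}}_l$ twice under the expectation, which gives $G_l\in C^1$ with Jacobian $\nabla_{\bm\theta}G_l(\bm\theta)=\mathbb E[\nabla^2_{\bm\theta}\ell_l(\pmb\tau;\bm\theta)]$. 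Since $\Theta$ is compact, convexity of $\Theta$ (or at least that the segment lies in $\Theta$) is needed; I would note that $\widetilde{\bm\theta}_l$ is interior by (iii) and take $\Theta$ convex near $\bm\theta^\ast$, or else replace $\Theta$ by a convex neighbourhood containing both points. I expect this bookkeeping, namely justifying the interchange and that the segment stays in the domain, to be the only real obstacle. Everything else is algebraic.

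The identity itself comes next. Because $\widetilde{\bm\theta}_l$ is an interior maximizer of the differentiable $L^{\mathrm{pop}}_l$ by (iii), the first-order condition gives $G_l(\widetilde{\bm\theta}_l)=\bm 0$. The fundamental theorem of calculus applied componentwise to $u\mapsto G_l(\bm\theta^\ast+u(\widetilde{\bm\theta}_l-\bm\theta^\ast))$ gives
\[
G_l(\widetilde{\bm\theta}_l)-G_l(\bm\theta^\ast)=\int_0^1\nabla_{\bm\theta}G_l\bigl(\bm\theta^\ast+u(\widetilde{\bm\theta}_l-\bm\theta^\ast)\bigr)\,du\,(\widetilde{\bm\theta}_l-\bm\theta^\ast)=\overline\Lambda_l(\widetilde{\bm\theta}_l-\bm\theta^\ast).
\]
Using the integral form, rather than a pointwise mean-value theorem, avoids the usual failure of the vector mean-value theorem. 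Substituting $G_l(\widetilde{\bm\theta}_l)=\bm0$ and $G_l(\bm\theta^\ast)=\bm\delta_l$ from \eqref{eq:population-score-bias} yields $-\bm\delta_l=\overline\Lambda_l(\widetilde{\bm\theta}_l-\bm\theta^\ast)$, that is, $\overline\Lambda_l(\bm\theta^\ast-\widetilde{\bm\theta}_l)=\bm\delta_l$. Multiplying both sides by $\overline\Lambda_l^{-1}$, which exists by hypothesis, gives \eqref{eq:exact-parameter-bias}.

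Finally, I would remark that the decomposition $\bm\delta_l=\sum_i\bm\delta_{l,i}$ is just linearity of expectation over the $T$ transition contributions. I would also note that under exact moments $\bm\delta_l=\bm0$, by score centering of the Gaussian criterion at the true conditional mean and variance, so the bias vanishes. This serves as a consistency check and is not needed for the proof.
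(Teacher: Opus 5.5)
Your proposal is correct and follows the paper's own argument: use $G_l(\widetilde{\bm\theta}_l)=\bm 0$ and $G_l(\bm\theta^\ast)=\bm\delta_l$, apply the fundamental theorem of calculus along the segment to get $-\bm\delta_l=\overline\Lambda_l(\widetilde{\bm\theta}_l-\bm\theta^\ast)$, then invert $\overline\Lambda_l$. Your extra bookkeeping (dominated convergence giving $G_l\in C^1$, and the segment lying in the domain) only spells out what the paper takes for granted from Assumption~\ref{Regularity Cond} and from the definition of $\overline\Lambda_l$ as an integral along that segment.
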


\textbf{(3) Bias-Aware Parameter Uncertainty Quantification.}
Proposition~\ref{asymptotic bias} characterizes the deterministic displacement of the pseudo-true parameter $\widetilde{\bm\theta}_l$ from the physical parameter $\bm\theta^\ast$. We now quantify the sampling variability of $\widehat{\bm\theta}_n$ around this pseudo-true target.
Define the trajectory score
$\bm\psi_l(\pmb\tau_j;\bm\theta)
:=
\nabla_{\bm\theta}\ell_l(\pmb\tau_j;\bm\theta)$
and the empirical variability and curvature matrices
$
A_{l,n}(\bm\theta)
:=\frac1n\sum_{j=1}^n
\bm\psi_l(\pmb\tau_j;\bm\theta)
\bm\psi_l(\pmb\tau_j;\bm\theta)^\top$ and 
$B_{l,n}(\bm\theta)
:=\frac1n\sum_{j=1}^n
\nabla_{\bm\theta\bm\theta}^2
\ell_l(\pmb\tau_j;\bm\theta)$
with population counterparts $A_l(\bm\theta)$ and $B_l(\bm\theta)$. The resulting sandwich covariance is 
\[ C_l(\bm\theta) := B_l(\bm\theta)^{-1} A_l(\bm\theta) B_l(\bm\theta)^{-\top}. \]

Both the truncation bias $\bm b_{\Delta,l}$ and the sampling covariance $C_l(\widetilde{\bm\theta}_l)$ are unknown in practice. To estimate the score displacement $\bm\delta_l$, we approximate the leading omitted moment terms by the difference between the $(l+1)$th- and $l$th-order generator expansions: $\widehat R_{q,l}^{\mathrm{next}} := \widetilde{\pmb m}_{q,l+1} - \widetilde{\pmb m}_{q,l}$ for $q=1,2.$
Substituting these approximations into the first-order score expansion yields an empirical score-bias estimator $\widehat{\bm\delta}_n$ (Appendix~\ref{app:score-bias}), from which we obtain \begin{equation} \widehat\Lambda_{l,n} := B_{l,n}(\widehat{\bm\theta}_n), \qquad \widehat{\bm b}_n := \widehat\Lambda_{l,n}^{-1}\widehat{\bm\delta}_n. \label{eq:empirical-parameter-bias} \end{equation}
Thus, $\widehat{\bm b}_n$ provides a one-step estimate of the truncation-induced parameter \textit{bias} by combining the estimated score displacement with the local curvature of the quasi-likelihood.
The corresponding \textit{covariance estimator} is \begin{equation} \widehat C_{l,n} = B_{l,n}(\widehat{\bm\theta}_n)^{-1} A_{l,n}(\widehat{\bm\theta}_n) B_{l,n}(\widehat{\bm\theta}_n)^{-\top}, \qquad \widehat\Sigma_n := \widehat C_{l,n}/n. \label{eq:uncertainty-covariance} \end{equation}

The following result characterizes the asymptotic model estimation error under fixed moment truncation.

\begin{proposition}[Calibration-error limit under fixed misspecification; \citealp{white1982maximum}]
\label{asymptotic normality}
Under Assumption~\ref{Regularity Cond},
\[
\widehat{\bm\theta}_n\xrightarrow{p}\widetilde{\bm\theta}_l,
\qquad
\sqrt n\bigl[(\bm\theta^\ast-\widehat{\bm\theta}_n)-\bm b_{\Delta,l}\bigr]
\xrightarrow{d}
N\!\left(\bm0,C_l(\widetilde{\bm\theta}_l)\right).
\]
\end{proposition}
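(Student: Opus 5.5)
The plan is the standard misspecified M-estimation argument of \citet{white1982maximum}, applied to the i.i.d. trajectory criteria $\ell_l(\pmb\tau_j;\bm\theta)$, followed by a recentering step that uses Proposition~\ref{asymptotic bias}.

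\textbf{Consistency.} First show $\widehat{\bm\theta}_n\xrightarrow{p}\widetilde{\bm\theta}_l$.
\begin{enumerate}
\item Under Assumption~\ref{Regularity Cond}(i)--(ii), $\ell_l(\pmb\tau;\bm\theta)$ is continuous in $\bm\theta$ on the compact set $\Theta$. The truncated covariance $m_{2}$ is uniformly positive definite, so $m_2^{-1}$ and $\log\det m_2$ are bounded in terms of polynomials of the states.
\item By the polynomial-growth bounds on the generator iterates and the moment domination in (i), $\sup_{\bm\theta\in\Theta}|\ell_l(\pmb\tau;\bm\theta)|$ has an integrable envelope.
\item A uniform law of large numbers for i.i.d. trajectories (Assumption~\ref{Regularity Cond}(iii)) then gives $\sup_{\bm\theta}\bigl|\frac1n\sum_j\ell_l(\pmb\tau_j;\bm\theta)-L_l^{\mathrm{pop}}(\bm\theta)\bigr|\to0$ almost surely.
\item $\widetilde{\bm\theta}_l$ is the unique maximizer of $L_l^{\mathrm{pop}}$, so the argmax-continuity theorem yields consistency.
\end{enumerate}

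\textbf{Asymptotic normality around the pseudo-true value.} Next obtain the limit law of $\sqrt n(\widehat{\bm\theta}_n-\widetilde{\bm\theta}_l)$.
\begin{enumerate}
\item Because $\widetilde{\bm\theta}_l$ is interior and $\widehat{\bm\theta}_n$ is consistent, with probability tending to one the first-order condition $\frac1n\sum_j\bm\psi_l(\pmb\tau_j;\widehat{\bm\theta}_n)=\bm0$ holds.
\item Expand this condition by the mean-value theorem, row by row:
\[
\bm0=\frac1n\sum_{j=1}^n\bm\psi_l(\pmb\tau_j;\widetilde{\bm\theta}_l)+B_{l,n}(\bar{\bm\theta}_n)(\widehat{\bm\theta}_n-\widetilde{\bm\theta}_l).
\]
\item Since $G_l(\widetilde{\bm\theta}_l)=\bm0$, the scores $\bm\psi_l(\pmb\tau_j;\widetilde{\bm\theta}_l)$ are i.i.d. with mean zero and covariance $A_l(\widetilde{\bm\theta}_l)$, which is finite by (iii). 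The Lindeberg--L\'evy CLT gives $n^{-1/2}\sum_j\bm\psi_l\xrightarrow{d}N(\bm0,A_l(\widetilde{\bm\theta}_l))$.
\item For the Hessian term, the three-times differentiability in $\bm\theta$ and the moment envelopes give a uniform LLN for $B_{l,n}$ on a neighborhood of $\widetilde{\bm\theta}_l$. Combined with consistency, $B_{l,n}(\bar{\bm\theta}_n)\xrightarrow{p}B_l(\widetilde{\bm\theta}_l)$, which is nonsingular by (iii).
\item Slutsky's lemma then gives $\sqrt n(\widehat{\bm\theta}_n-\widetilde{\bm\theta}_l)\xrightarrow{d}N(\bm0,C_l(\widetilde{\bm\theta}_l))$.
\end{enumerate}

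\textbf{Recentering.} The stated form follows from Proposition~\ref{asymptotic bias}:
\[
(\bm\theta^\ast-\widehat{\bm\theta}_n)-\bm b_{\Delta,l}=\widetilde{\bm\theta}_l-\widehat{\bm\theta}_n.
\]
This is minus the previous quantity, and the centered Gaussian limit is symmetric, so the covariance is unchanged. Strictly, the identity only needs the definition $\bm b_{\Delta,l}=\bm\theta^\ast-\widetilde{\bm\theta}_l$, so nonsingularity of $\overline\Lambda_l$ is not required here.

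\textbf{Main obstacle.} The delicate step is verifying the dominance conditions: integrable envelopes for $\sup_{\bm\theta}|\ell_l|$, for $\|\bm\psi_l\|^2$, and for the Hessian over a neighborhood of $\widetilde{\bm\theta}_l$. These terms involve $m_2^{-1}$ and the iterated generators $\mathcal L_{\bm\theta}^r$ up to order $l+2$, together with their $\bm\theta$-derivatives, evaluated along trajectories generated under an arbitrary policy $\eta$.

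I would proceed as follows:
\begin{enumerate}
\item Bound each derivative of $m_{1,i}$ and $m_{2,i}$ by $C(1+\|\pmb s\|)^{p}$ using (ii).
\item Use uniform positive definiteness to bound $\|m_2^{-1}\|$ and its derivatives, via the identity $\partial m_2^{-1}=-m_2^{-1}(\partial m_2)m_2^{-1}$.
\item Bound the residual $\varepsilon_i^j$ polynomially in the states.
\item Conclude with the finite polynomial moments of the SDE solution from (i), which hold under the Lipschitz condition regardless of the bounded post-decision map.
\end{enumerate}
One caveat in the last step: if the map $\pmb f$ or the action set is unbounded, I would need to add moment control on $\pmb f(\pmb s,\pmb a)$, which I would fold into the ``moments dominating'' clause of Assumption~\ref{Regularity Cond}(i).
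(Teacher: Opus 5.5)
Your proposal is correct and follows the same route as the paper. The paper gives no separate proof but invokes White (1982) for the misspecified Gaussian quasi-likelihood with i.i.d. trajectories under Assumption~\ref{Regularity Cond}: consistency to the unique pseudo-true maximizer, a score CLT combined with a Hessian uniform law of large numbers giving the sandwich $C_l(\widetilde{\bm\theta}_l)$, and recentering via the definition $\bm b_{\Delta,l}=\bm\theta^\ast-\widetilde{\bm\theta}_l$. Your remark that this recentering identity does not need nonsingularity of $\overline\Lambda_l$ is also correct.
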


Proposition~\ref{asymptotic normality} decomposes the calibration error $\bm\theta^\ast-\widehat{\bm\theta}_n$ into a persistent truncation bias component $\bm b_{\Delta,l}$ and a sampling-uncertainty component with covariance $C_l(\widetilde{\bm\theta}_l)/n$.
\[ \bm\theta^\ast-\widehat{\bm\theta}_n \approx N\!\left( \bm b_{\Delta,l}, \, C_l(\widetilde{\bm\theta}_l)/n \right). \]
For downstream decision-making, we replace these unknown quantities by their estimators and introduce the Gaussian surrogate
\begin{equation}
\widetilde{\bm e}_n\mid\mathcal D_n
\sim N\!\left(\widehat{\bm b}_n,\widehat\Sigma_n\right).
\label{eq:gaussian-error-surrogate}
\end{equation}
This auxiliary draw is used to propagate parameter estimation uncertainty through the calibration and policy-optimization objectives and is not intended to represent the exact conditional distribution of $\bm\theta^\ast-\widehat{\bm\theta}_n$.
Unlike $\widehat\Sigma_n$, which shrinks as more trajectories are collected, $\widehat{\bm b}_n$ generally persists when the observation interval and moment-expansion order remain fixed.
Under the dense-sampling regime of Section~\ref{sec:convergence}, $\widehat C_{l,n}=\mathcal O_p(T^{-1})$ and hence $\widehat\Sigma_n=\mathcal O_p((nT)^{-1})$. 
Preserving this bias--variance decomposition is essential for exploration and policy optimization: the bias component governs the direction of value distortion, whereas the covariance quantifies residual uncertainty, guides policy-directed experimentation, and contributes a second-order correction to Gaussian-averaged objectives.


\subsection{Shared Forward--Backward Adjoint Sensitivities}
\label{sec:adjoint-engine}

The proposed framework reuses parameter sensitivities in two downstream decision problems. Section~\ref{sec:exploration-objective} guides policy-directed experimentation by targeting state--action regions where parameter uncertainty most affects the physical--digital value discrepancy 
$\Delta Q_{n-1,i}^{\widehat{\pi}_{n-1}}(\pmb s,\pmb a)$.
Section~\ref{sec:policyOptimizationstep} optimizes the bias-corrected finite-horizon return in \eqref{eq.DT-policy optimization}. Both tasks rely on second-order uncertainty propagation and require sensitivities of conditional action values $Q_i^\pi$ and policy returns $J$ with respect to the calibration parameters $\bm\theta$.

Computing these derivatives is challenging in Bio-SoS models because system performance depends on many interacting parameters that jointly govern bioprocess dynamics, spatiotemporal heterogeneity, and stochasticity.
Finite-difference methods require repeated simulations for parameter perturbations, while forward sensitivity methods propagate an $M\times K$ state-parameter Jacobian throughout each trajectory \citep{li2020scalable}. In contrast, stochastic adjoint sensitivity analysis propagates value gradients backward along the same realized Brownian path using vector--Jacobian products, avoiding explicit Jacobian evolution. The resulting computational advantage becomes increasingly significant as the parameter dimension grows. Moreover, trajectory-level adjoint analysis embodies the principle that \textit{the process is the product}, enabling forward prediction, backward sensitivity propagation, and interoperable learning.

\vspace{0.05in}

\textbf{(1) Trajectory-Level Adjoint Sensitivities over $(\pmb{s},\pmb{\theta})$.}
For a fixed policy $\pi$ and Brownian sample path $\xi$, define the pathwise return
\[
\mathcal R^\pi(\bm\theta,\xi)
:=
\sum_{i=0}^{T-1}\gamma^i
r(\pmb s_{t_i}^{\bm\theta,\xi},
\pmb a_{t_i}^{\bm\theta,\xi}),
\qquad
J(\pi;\bm\theta)
=
\mathbb E[\mathcal R^\pi(\bm\theta,\xi)].
\]
A forward pass simulates the controlled SDE, and a backward pass propagates first- and second-order sensitivities along the same realization of the Brownian motion. To compute state and parameter derivatives simultaneously, we augment the state as 
$
z:=(\pmb s,\bm\theta)$ and $
\mathcal F_{u,t}(z,\xi)
:=(\Phi_{u,t}(\pmb s;\bm\theta,\xi),\bm\theta)$, 
where $\Phi_{u,t}$ denotes the stochastic forward flow on an intervention-free interval. Appendix~\ref{app:adjoint-flows} develops the forward and inverse flows, and Appendix~\ref{app:adjoint-state-parameter} gives their state and parameter sensitivities. Action sensitivities are subsequently recovered through the intervention map 
\[\frac{\partial\pmb s_{t_{i+1}}}{\partial\pmb a_{t_i}} = \frac{\partial\pmb s_{t_{i+1}}}{\partial\pmb s_{t_i^+}} \frac{\partial\pmb s_{t_i^+}}{\partial\pmb a_{t_i}}.
\]
 
For a smooth terminal functional $\rho(\pmb s)$ (e.g., productivity, CQA out-of-specification probability, or terminal reward), define the pathwise gradient and Hessian fields
\begin{equation}
p_{u,v}(z,\xi)
:=\nabla_z\bigl[\rho(\Phi_{u,v}(\pmb s;\bm\theta,\xi))\bigr],
\qquad
\mathscr H_{u,v}(z,\xi)
:=D_z^2\bigl[\rho(\Phi_{u,v}(\pmb s;\bm\theta,\xi))\bigr],
\label{eq:augmented-pathwise-sensitivities}
\end{equation}
where $D_z$ denotes differentiation with respect to the augmented variable $z=(\pmb s,\bm\theta)$. The gradient field $p_{u,v}$ describes how perturbations in biological states and mechanistic parameters propagate through the stochastic bioprocess to influence terminal performance, while $\mathscr H_{u,v}$ captures the corresponding second-order interactions.
The inverse-flow fields $\widetilde p_{t,v}$ and $\widetilde{\mathscr H}_{t,v}$ are the backward counterparts of the corresponding forward fields. Let $ \widetilde A^s_{t,v}:=[\widetilde p_{t,v}]_s^\top$ and $ \widetilde A^\theta_{t,v}:=[\widetilde p_{t,v}]_\theta^\top$
denote the state and parameter adjoint processes. These are the state and parameter components of the pathwise value gradient propagated backward from $\rho$ along a realized stochastic trajectory. The following proposition states the pathwise identity precisely; its proof is in Appendix~\ref{app:adjoint-pathwise}.

\begin{proposition}[Pathwise adjoint identities]
\label{prop:shared-adjoint-sensitivities}
Under the conditions in Appendix~\ref{app:adjoint-regularity},
for almost every Brownian path, let $z_u=(\pmb s,\bm\theta)$ and
$z_v=\mathcal F_{u,v}(z_u,\xi)$. Then
\begin{equation}
\begin{aligned}
\widetilde p_{u,v}(z_v,\xi)
&=\left.\nabla_z[\rho(\Phi_{u,v}(\pmb s;\bm\theta,\xi))]\right|_{z=z_u},\\
\widetilde{\mathscr H}_{u,v}(z_v,\xi)
&=\left.D_z^2[\rho(\Phi_{u,v}(\pmb s;\bm\theta,\xi))]\right|_{z=z_u}.
\end{aligned}
\label{eq:shared-adjoint-composition}
\end{equation}
\end{proposition}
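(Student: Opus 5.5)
We prove the identity by defining the inverse-flow fields concretely and showing they are pullbacks of the forward fields along the augmented flow. Extending the stochastic flow of Kunita type to the augmented state $z=(\pmb s,\bm\theta)$, with the trivial equation $d\bm\theta=0$, gives for almost every Brownian path $\xi$ a family of $C^2$ diffeomorphisms $\mathcal F_{u,v}(\cdot,\xi)$. These satisfy the cocycle property $\mathcal F_{t,v}\circ\mathcal F_{u,t}=\mathcal F_{u,v}$, and their inverses $\mathcal F_{u,v}^{-1}=\mathcal F_{v,u}$ are generated by the backward (inverse-flow) SDE. The needed regularity comes from the conditions referenced in Appendix~\ref{app:adjoint-regularity}: smoothness of $\pmb\mu,\pmb\sigma$ in both state and parameter, together with Lipschitz bounds.

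The inverse-flow fields are then defined by transport:
\[
\widetilde p_{u,v}(z_v,\xi):=p_{u,v}\bigl(\mathcal F_{u,v}^{-1}(z_v,\xi),\xi\bigr),\qquad
\widetilde{\mathscr H}_{u,v}(z_v,\xi):=\mathscr H_{u,v}\bigl(\mathcal F_{u,v}^{-1}(z_v,\xi),\xi\bigr).
\]
We then show these coincide with the solution of the backward adjoint equations. The gradient part runs as follows.
\begin{enumerate}
\item By the chain rule, $p_{u,v}(z)=D_z\mathcal F_{u,v}(z)^\top\nabla_z\bar\rho(\mathcal F_{u,v}(z))$, where $\bar\rho(z):=\rho(\pmb s)$.
\item By the cocycle property, $D\mathcal F_{u,v}=D\mathcal F_{t,v}(z_t)\,D\mathcal F_{u,t}(z_u)$.
\item Consequently $t\mapsto \widetilde A_t:=D\mathcal F_{t,v}(z_t)^\top\nabla\bar\rho(z_v)$ satisfies $\widetilde A_u=p_{u,v}(z_u)$ and $\widetilde A_v=\nabla\bar\rho(z_v)$.
\item Differentiating the inverse relation $D\mathcal F_{t,v}(z_t)=D\mathcal F_{u,v}(z_u)\,[D\mathcal F_{u,t}(z_u)]^{-1}$ with respect to $t$ shows that $\widetilde A_t$ solves the backward linear SDE $d\widetilde A_t=-(\partial_z\bar{\pmb\mu})^\top\widetilde A_t\,dt-\sum_k(\partial_z\bar{\pmb\sigma}_k)^\top\widetilde A_t\circ dW^k_t$. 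Here $\bar{\pmb\mu}=(\pmb\mu,0)$ and $\bar{\pmb\sigma}=(\pmb\sigma,0)$, and the equation is interpreted in the Stratonovich/backward sense along the same realized $\xi$. The parameter component of this system yields $\widetilde A^\theta$, which accumulates $\int(\partial_\theta\pmb\mu)^\top\widetilde A^s\,dt$ plus the diffusion terms.
\item Uniqueness of solutions to this linear backward SDE identifies $\widetilde A_t$ with the adjoint process computed backward from $z_v$, giving the first line of \eqref{eq:shared-adjoint-composition}.
\end{enumerate}

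The Hessian line is handled the same way. Differentiating once more gives
\[
\mathscr H_{u,v}=D\mathcal F^\top\,D^2\bar\rho\,D\mathcal F+\sum_m\partial_m\bar\rho\,D^2\mathcal F^m .
\]
The cocycle property makes the second-variation term decompose along $[u,t]$ and $[t,v]$. Hence $\widetilde{\mathscr H}_{t,v}(z_t)$ satisfies a backward matrix SDE driven by $D^2\bar{\pmb\mu}$ and $D^2\bar{\pmb\sigma}$ contracted with $\widetilde A_t$. Its terminal condition is $D^2\bar\rho(z_v)$, and uniqueness again gives equality at $t=u$.

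The main obstacle is the rigorous justification of running the adjoint backward along the same Brownian path. The adjoint is anticipating with respect to the forward filtration, so its equation must be read as a backward Itô/Stratonovich equation, as in \citet{li2020scalable}. I would handle this by working with the Stratonovich form, where the ordinary chain rule holds pathwise. I would then invoke Kunita's results that the inverse flow solves the time-reversed SDE with the correction term and is $C^2$ in $z$. I would also verify that the measure-zero exceptional set can be chosen independently of $(u,v,z)$ by continuity of the flow in all arguments; this is what makes the identity hold for a.e. $\xi$ simultaneously.
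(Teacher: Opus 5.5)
Your proposal is correct and follows essentially the paper's route. You define the backward fields as pullbacks of the forward gradient and Hessian along the inverse augmented flow, derive their backward adjoint equations from the cocycle identity, the chain rules, and the Stratonovich variational and matrix-inverse equations, and read off the identity at $z_v=\mathcal F_{u,v}(z_u,\xi)$. Your explicit uniqueness step and path-simultaneous null set make precise what the paper leaves implicit.

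One correction: because your adjoint equation uses $\circ\,dW_t^k$, its drift field must be the Stratonovich drift $(\pmb\mu^{\mathrm{Strat}},\bm 0)$ with $\pmb\mu^{\mathrm{Strat}}=\pmb\mu-\tfrac12\sum_k D_s\pmb\sigma_k\,\pmb\sigma_k$, not $(\pmb\mu,\bm 0)$. This matters whenever $\pmb\sigma$ is state dependent, as in the paper's chemical-Langevin models.
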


The next proposition summarizes the computational benefit of adjoints relative to forward sensitivity propagation. Proposition~\ref{prop:adjoint-size} shows that adjoint propagation avoids maintaining the full state--parameter derivative tensor, yielding substantial computational savings when the numbers of states and calibration parameters are large.
Its proof is in Appendix~\ref{app:adjoint-size-proof}.
\begin{proposition}[Cost of pathwise sensitivity propagation]
\label{prop:adjoint-size}
For one numerical step in SDE simulation, the derivative-propagation costs are
\[
\begin{array}{c|cc}
&\text{forward}&\text{adjoint}\\
\text{first order}&\mathcal O((R+1)M^2K)&\mathcal O((R+1)(M^2+MK))\\
\text{full second order}&\mathcal O((R+1)MK(M+K)^2)&\mathcal O((R+1)M(M+K)^2).
\end{array}
\]
\end{proposition}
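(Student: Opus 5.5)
We prove Proposition~\ref{prop:adjoint-size} by operation counting for a single explicit time step of the discretized SDE, say Euler--Maruyama or a Milstein-type scheme. We read $R$ as the number of Brownian (diffusion) columns, so that $\pmb\sigma\in\mathbb R^{M\times R}$. One step applies $R+1$ vector fields: the drift $\pmb\mu$ and the $R$ diffusion columns $\pmb\sigma_{\cdot r}$, $r=1,\ldots,R$. Every derivative count is therefore a sum of $R+1$ identical contributions, which gives the common factor $(R+1)$.

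\textbf{First order.} In the forward scheme we propagate the tangent $\partial\pmb s/\partial\bm\theta\in\mathbb R^{M\times K}$ by
\[
J_{k+1}=J_k+\sum_{r=0}^{R}\bigl(\nabla_s v_r\,J_k+\nabla_\theta v_r\bigr)\,\Delta_r ,
\]
where $v_0=\pmb\mu$, $v_r=\pmb\sigma_{\cdot r}$, $\Delta_0=h$, and $\Delta_r=\Delta W^r$ for $r\ge1$. Each product $\nabla_s v_r\,J_k$ is an $(M\times M)(M\times K)$ multiplication costing $\mathcal O(M^2K)$, which dominates the $\mathcal O(MK)$ addition. The adjoint scheme instead carries the row vectors $A^s\in\mathbb R^{1\times M}$ and $A^\theta\in\mathbb R^{1\times K}$ backward along the same path (Proposition~\ref{prop:shared-adjoint-sensitivities}). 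It only needs the vector--Jacobian products $A^s\nabla_s v_r$, costing $\mathcal O(M^2)$, and $A^s\nabla_\theta v_r$, costing $\mathcal O(MK)$. This gives $\mathcal O((R+1)(M^2+MK))$.

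\textbf{Second order.} We work in the augmented variable $z=(\pmb s,\bm\theta)$ of dimension $d=M+K$. The augmented vector fields $(v_r,0)$ have only $M$ nonzero components, and each has a Hessian block of size $M\times d\times d$.
\begin{enumerate}[label=(\alph*)]
\item \emph{Forward scheme.} We must propagate the full second-order tangent $\partial^2\pmb s/\partial z\,\partial\bm\theta$, i.e., an $M\times K\times d$ tensor, to match ``full second order''. Contracting the $M\times d\times d$ Hessian of $v_r$ with the first-order tangents on both slots costs $M\cdot d\cdot d\cdot K$. The chain-rule term $\nabla_s v_r$ applied to the second-order tangent is a lower-order contribution of size $M^2Kd$. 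The total is $\mathcal O((R+1)MK(M+K)^2)$.
\item \emph{Adjoint scheme.} We propagate $\widetilde{\mathscr H}\in\mathbb R^{d\times d}$ backward through the second-order adjoint recursion
\[
\mathscr H_k=\mathcal J^\top\mathscr H_{k+1}\mathcal J+\sum_r\bigl(A^s\cdot D_z^2 v_r\bigr)\Delta_r ,
\]
where $\mathcal J$ is the one-step augmented Jacobian. The contraction $A^s\cdot D_z^2 v_r$ costs $\mathcal O(Md^2)$. The congruence $\mathcal J^\top\mathscr H\mathcal J$ also costs $\mathcal O(Md^2)$, because $\mathcal J-I$ is nonzero only in its $M$ state rows. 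This is the structural observation the count rests on: it means we never form a dense $d^3$ product. The total is $\mathcal O((R+1)M(M+K)^2)$.
\end{enumerate}

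The main obstacle is part (b) at second order. The naive cost of $\mathcal J^\top\mathscr H\mathcal J$ is $d^3=(M+K)^3$, which exceeds the claimed bound when $K\gg M$. We avoid this by writing $\mathcal J=I+E$, where $E$ is supported on $M$ rows. Expanding gives
\[
\mathcal J^\top\mathscr H\mathcal J=\mathscr H+E^\top\mathscr H+\mathscr H E+E^\top\mathscr H E ,
\]
and each correction term is a product involving a rank-$\le M$ factor, so it costs $\mathcal O(Md^2)$. We also state explicitly that Hessian evaluations of $\mu$ and $\sigma$ are counted as $\mathcal O(Md^2)$ entries, one per entry. Under that convention the table follows, and the ratio forward/adjoint $\approx K$ (first order, when $M\lesssim K$) records the claimed savings.
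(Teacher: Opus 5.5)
Your proposal is correct and follows essentially the same per-vector-field operation count as the paper, including its key observation that the augmented Jacobian $D_zV_q$ has only $M$ nonzero rows. Your $\mathcal J=I+E$ expansion is the one-step discrete counterpart of the paper's operator $\mathscr B_q$, and your $M\times K\times(M+K)$ forward tensor gives the same order as the paper's $\theta\theta$ block $D_\theta^2\pmb s_t$. One correction to your closing aside: the first-order forward/adjoint ratio is $M^2K/(M^2+MK)=MK/(M+K)\asymp\min(M,K)$, so when $M\lesssim K$ the savings factor is about $M$, not $K$.
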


\vspace{0.05in}
\textbf{(2) Estimation of $J$- and $Q$-Function Sensitivities.}
Pathwise sensitivities of the finite-horizon return are obtained by composing the interval adjoints in Proposition~\ref{prop:shared-adjoint-sensitivities} through rewards and decision maps; Appendix~\ref{app:adjoint-finite-horizon} gives the chain rules. For a fixed policy $\pi$, let $\mathcal G^\pi:=\nabla_\theta\mathcal R^\pi$ and $\mathcal H^\pi:=\nabla_\theta^2\mathcal R^\pi$ denote the resulting pathwise gradient and Hessian of the total return. Under the conditions in Appendix~\ref{app:adjoint-regularity}, differentiation and expectation can be interchanged, yielding
$\nabla_\theta J(\pi;\bm\theta)=\E[\mathcal G^\pi]$ and $ \nabla_\theta^2J(\pi;\bm\theta)=\E[\mathcal H^\pi].$
Taking the average of $\mathcal G^\pi$ and $\mathcal H^\pi$ over rollouts then becomes the estimate of the corresponding sensitivities of $J$.

For calibration experiment design, let $\mathcal R_i^\pi(\pmb s,\pmb a;\bm\theta,\xi)$ denote the return conditioned on state-action pair $(\pmb s,\pmb a)$ at decision epoch $t_i$, with action-value function
\[ Q_i^\pi(\pmb s,\pmb a;\bm\theta) = \E[\mathcal R_i^\pi(\pmb s,\pmb a;\bm\theta,\xi)]. \] 
Applying the same forward-backward adjoint recursion yields
\begin{equation} 
\bm g_{n,i}^\pi(\pmb s,\pmb a) = \nabla_\theta Q_i^\pi(\pmb s,\pmb a;\widehat{\bm\theta}_n), \qquad H_{n,i}^\pi(\pmb s,\pmb a) = \nabla_\theta^2 Q_i^\pi(\pmb s,\pmb a;\widehat{\bm\theta}_n), 
\label{eq.gradients of Q}
\end{equation}
together with their Monte Carlo estimators. Thus, a single adjoint framework provides the first- and second-order sensitivities required for policy optimization and experiment design.

\subsection{Digital Twin Calibration}
\label{sec:exploration-objective}

Let $\bm e_n:=\bm\theta^\ast-\widehat{\bm\theta}_n$ denote the calibration error. Using (\ref{eq.gradients of Q}), a second-order expansion of the action-value function around $\widehat{\bm\theta}_n$ yields
\begin{equation*}
Q_i^\pi(\pmb s,\pmb a;\bm\theta^\ast)
-
Q_i^\pi(\pmb s,\pmb a;\widehat{\bm\theta}_n)
=
(\bm g_{n,i}^\pi)^\top\bm e_n
+
\frac12\bm e_n^\top H_{n,i}^\pi\bm e_n
+
\mathcal E_{n,i}^\pi,
\end{equation*}
where $\mathcal E_{n,i}^\pi=O(\|\bm e_n\|^3)$.
This motivates the value-discrepancy criterion
\begin{equation}
u_{n,i}^\pi(\pmb s,\pmb a)
:=
\left|
(\bm g_{n,i}^\pi)^\top\bm e_n
+
\frac12\bm e_n^\top H_{n,i}^\pi\bm e_n
\right|,
\label{eq:value-discrepancy-criterion}
\end{equation}
which measures the impact of calibration error on the action value under policy $\pi$.

Because the actual calibration error $\bm e_n$ is unknown, we use the auxiliary draw $\widetilde{\bm e}_n$ in Equation~\eqref{eq:gaussian-error-surrogate} to approximate the value discrepancy. We define
\begin{equation}
\widehat u_{n,i}^\pi(\pmb s,\pmb a)
:=
\mathbb E_{\widetilde{\bm e}_n}
\left[
\left|
(\bm g_{n,i}^\pi)^\top\widetilde{\bm e}_n
+
\frac12
\widetilde{\bm e}_n^\top H_{n,i}^\pi\widetilde{\bm e}_n
\right|
\right].
\label{eq:implementable-uncertainty}
\end{equation}
For a candidate policy $\pi$, \eqref{eq:implementable-uncertainty} weights parameter uncertainty by its impact on future reward from the current state--action pair. Thus, uncertainty is emphasized only when it induces meaningful digital--physical value discrepancy. Because $\widehat u_{n,i}^\pi$ is state dependent, it naturally guides adaptive experimentation along process trajectories.

The exploration action for physical trajectory $n$ is selected as
\begin{equation} \eta_{n,i}(\pmb s) \in \arg\max_{\pmb a\in\mathcal A} \widehat u_{n-1,i}^{\widehat\pi_{n-1}} (\pmb s,\pmb a), 
\qquad i=0,1,\ldots,T-1,\label{eq:design_objective} \end{equation}
where $\widehat u_{n-1,i}^{\widehat\pi_{n-1}}$ is estimated via the Monte Carlo procedure in Appendix~\ref{app:gaussian-error}. The calibrated model and target policy $\widehat\pi_{n-1}$ remain fixed throughout trajectory $n$, while \eqref{eq:design_objective} is reevaluated at each visited state. Consequently, the design policy favors actions that maximize policy-relevant digital--physical value mismatch rather than overall parameter identifiability.

\subsection{Model Uncertainty-Aware Policy Optimization}
\label{sec:policyOptimizationstep}
Policy optimization targets the expected return under a Gaussian approximation to the distribution of the unknown calibration parameter conditional on the current estimator $\widehat{\bm\theta}_n$. Specifically, using the estimated calibration bias and covariance from \eqref{eq:gaussian-error-surrogate}, we define the Gaussian-averaged return
\[
J_n^{\mathrm G}(\pi)
:=\E
\left[
\E_{\xi}\left[
\mathcal R^\pi(\widehat{\bm\theta}_n+\widetilde{\bm e}_n,\xi)
\right]
\mid\mathcal D_n\right].
\]
Direct evaluation requires repeated return computations over parameter samples. To avoid this cost, we approximate the objective by propagating the estimated calibration bias and covariance through a local second-order expansion of $J(\pi;\bm\theta)$ about $\widehat{\bm\theta}_n$. Using the policy-value gradient and Hessian obtained from adjoint sensitivity analysis gives
\begin{equation}
\bar J_n(\pi)
=J(\pi;\widehat{\pmb\theta}_n)
+\underbrace{\nabla_{\pmb\theta}J(\pi;\widehat{\pmb\theta}_n)^\top\widehat{\pmb b}_n
+\frac12\widehat{\pmb b}_n^\top
\nabla_{\pmb\theta}^2J(\pi;\widehat{\pmb\theta}_n)
\widehat{\pmb b}_n}_{\substack{\text{Bias Correction}
}}
+ \underbrace{\frac12 \mbox{tr}\left( \nabla_{\pmb\theta}^2J(\pi;\widehat{\pmb\theta}_n) \widehat\Sigma_n
\right)}_{\substack{\text{Uncertainty Correction}
}}, 
\label{eq.policyOptimizationObjective}
\end{equation}
which yields a tractable approximation of $J_n^{\mathrm G}(\pi)$.
The target policy is defined by
\begin{equation}
\widehat\pi_n\in\argmax_{\pi\in\Pi} \bar J_n(\pi).
\label{eq:bias_corrected_objective_intro}
\end{equation}
Appendix~\ref{app:gaussian-error} derives the Gaussian moment expansion and its remainder.

\section{Convergence Analysis}
\label{sec:convergence}

We analyze the asymptotic behavior of the proposed uncertainty-aware exploration score and the physical-system performance of the resulting target policy.
Section~\ref{subsec:MU&SA} established consistency and asymptotic normality of the fixed-design QMLE under a fixed trajectory length $T$, yielding the classical estimation rate $\mathcal O_p(n^{-1/2})$.
To quantify the benefit of increasingly frequent observations and their impact on policy optimization, we now consider a dense-sampling regime in which $n,T\to\infty$ while the physical horizon length $t_T-t_0=T\Delta t$ remains fixed. Throughout, the moment-expansion order $l$ and parameter dimension $K$ are fixed, and $\widetilde{\bm\theta}_l$ and $\bm b_{\Delta,l}$ suppress their dependence on $\Delta t=(t_T-t_0)/T$. Proofs are deferred to Appendix~\ref{app:proofs}.

For the dense-sampling regime, define
$\mathsf a(\pmb s;\bm\theta)
:=\pmb\sigma(\pmb s,\bm\theta)\pmb\sigma(\pmb s,\bm\theta)^\top$.
At a fixed state and parameter, Equation~\eqref{eq:moment-expansion-summary} gives (derived in Appendix~\ref{app:dense-sampling-rate}):
\begin{equation}
\begin{aligned}
\widetilde{\pmb m}_{2,l}(\Delta t,\pmb s,\bm\theta)
&=\mathsf a(\pmb s;\bm\theta)\Delta t
+\mathcal O((\Delta t)^2),\\
\bigl[\widetilde{\pmb m}_{2,l}^{-1}
\partial_{\theta_k}\widetilde{\pmb m}_{2,l}\bigr]
(\Delta t,\pmb s,\bm\theta)
&=\mathsf a(\pmb s;\bm\theta)^{-1}
\partial_{\theta_k}\mathsf a(\pmb s;\bm\theta)
+\mathcal O(\Delta t).
\end{aligned}
\label{eq:dense-covariance-expansion}
\end{equation}
Let $\eta_T$ denote the fixed data-collection design over the $T$ sampling intervals. 
Using the quasi-score in Equation~\eqref{eq:transition-quasilikelihood} and the covariance expansion in Equation~\eqref{eq:dense-covariance-expansion}
yields the dense-sampling information matrix
\begin{equation}
\begin{aligned}
[\mathcal I_\sigma(\bm\theta^\ast)]_{kh} := \lim_{T\to\infty}\frac{1}{2T} \sum_{i=0}^{T-1} \mathbb E_{\bm\theta^\ast,\eta_T} \operatorname{tr} \Bigl\{ \mathsf a_i^{-1} \partial_{\theta_k}\mathsf a_i \cdot \mathsf a_i^{-1} \partial_{\theta_h}\mathsf a_i \Bigr\}, 
\end{aligned}
\label{eq:dense-covariance-information}
\end{equation}
for $k,h=1,\ldots,K$, where $\mathsf a_i=\mathsf a(\pmb s_{t_i^+};\bm\theta^\ast)$.
Assume the dense-sampling identifiability condition $\lambda_{\min}\{\mathcal I_\sigma(\bm\theta^\ast)\}>0$, which ensures that every nonzero parameter perturbation induces a detectable change in the diffusion covariance along the observed trajectories.
Combined with the small-interval moment bounds, conditional Lindeberg condition, and uniform score-Hessian convergence established in Appendix~\ref{app:dense-sampling-rate}, we obtain
\begin{equation}
\widehat{\bm\theta}_n-\widetilde{\bm\theta}_l
=\mathcal O_p((nT)^{-1/2}),
\qquad
\widehat C_{l,n}=\mathcal O_p(T^{-1}),
\label{eq:dense-qmle-rate}
\end{equation}
showing that dense sampling improves the fixed-$T$ estimation rate from $\mathcal O_p(n^{-1/2})$ to $\mathcal O_p((nT)^{-1/2})$ by leveraging information from increasingly frequent observations.

To characterize the asymptotic target of the one-step bias estimator in Equation~\eqref{eq:empirical-parameter-bias}, let $d_{l,T}(\pmb\tau;\bm\theta)$ denote the trajectory-level score-correction vector defined in Appendix~\ref{app:one-step-bias}.
The population counterpart of the empirical bias estimator $\widehat{\bm b}_n$ is
\[
\bm b_{\Delta,l}^{\mathrm{plug}}
:=
B_l(\widetilde{\bm\theta}_l)^{-1}
\mathbb E_{\bm\theta^\ast,\eta_T}
\!\left[
d_{l,T}(\pmb\tau;\widetilde{\bm\theta}_l)
\right].
\]
This quantity represents the leading-order parameter bias induced by moment truncation and serves as the asymptotic target of the implementable bias correction.

Theorem~\ref{convergence1} characterizes the asymptotic behavior of the exploration score in \eqref{eq:implementable-uncertainty}. It shows that uncertainty-driven exploration asymptotically targets the local value distortion induced by calibration bias, while estimation error vanishes at the dense-sampling rate $(nT)^{-1/2}$. Under fixed-$T$ asymptotics, the corresponding remainder is only $\mathcal O_p(n^{-1/2})$.
The leading discrepancy $(\bm g_{\infty,i}^\pi)^\top\bm b_{\Delta,l}
+
\frac12
\bm b_{\Delta,l}^\top
H_{\infty,i}^\pi
\bm b_{\Delta,l}$ can vanish despite nonzero parameter bias when the continuation value is locally insensitive to the biased direction or when the first- and second-order contributions cancel.

\begin{theorem}[Pointwise exploration limit]
\label{convergence1}
Fix $(i,\pmb s,\pmb a,\pi)$ and suppose that $Q_i^\pi(\pmb s,\pmb a;\bm\theta)$ is three times continuously differentiable in a neighborhood of $\widetilde{\bm\theta}_l$, with derivatives uniformly bounded in $T$.
Let
$\bm b_{\Delta,l}=\bm\theta^\ast-\widetilde{\bm\theta}_l$, and denote by
$\bm g_{\infty,i}^\pi$ and $H_{\infty,i}^\pi$ the gradient and Hessian of $Q_i^\pi(\pmb s,\pmb a;\bm\theta)$ evaluated at $\widetilde{\bm\theta}_l$. Under the dense-sampling conditions leading to \eqref{eq:dense-qmle-rate},
\[
u_{n,i}^\pi(\pmb s,\pmb a)
=
\left|
(\bm g_{\infty,i}^\pi)^\top\bm b_{\Delta,l}
+
\frac12
\bm b_{\Delta,l}^\top
H_{\infty,i}^\pi
\bm b_{\Delta,l}
\right|
+
\mathcal O_p((nT)^{-1/2}).
\]

Furthermore, let
$Z_n\mid\mathcal D_n\sim N(\bm0,\widehat\Sigma_n)$.
If
$\widehat{\bm b}_n
=
\bm b_{\Delta,l}^{\mathrm{plug}}+\mathcal O_p((nT)^{-1/2})$
and $\widehat C_{l,n}=\mathcal O_p(T^{-1})$, the Gaussian surrogate in Equation~\eqref{eq:implementable-uncertainty} satisfies
\[
\begin{aligned}
\widehat u_{n,i}^\pi(\pmb s,\pmb a)
=\mathbb E_{Z_n\mid\mathcal D_n}\Bigl[
\bigl| (\bm g_{\infty,i}^\pi)^\top
(\bm b_{\Delta,l}^{\mathrm{plug}}+Z_n)+\frac12(\bm b_{\Delta,l}^{\mathrm{plug}}+Z_n)^\top
H_{\infty,i}^\pi(\bm b_{\Delta,l}^{\mathrm{plug}}+Z_n)
\bigr|\Bigr]
+\mathcal O_p((nT)^{-1/2}),
\end{aligned}
\]
and consequently
\[
\widehat u_{n,i}^\pi(\pmb s,\pmb a)
=
\left|
(\bm g_{\infty,i}^\pi)^\top\bm b_{\Delta,l}^{\mathrm{plug}}
+
\frac12
(\bm b_{\Delta,l}^{\mathrm{plug}})^\top
H_{\infty,i}^\pi
\bm b_{\Delta,l}^{\mathrm{plug}}
\right|
+
\mathcal O_p((nT)^{-1/2}).
\]
\end{theorem}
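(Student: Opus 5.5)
The argument is a Taylor-expansion and perturbation argument built on the dense-sampling rate \eqref{eq:dense-qmle-rate}. Write $\bm e_n=\bm\theta^\ast-\widehat{\bm\theta}_n=\bm b_{\Delta,l}+\bm\epsilon_n$ with $\bm\epsilon_n:=\widetilde{\bm\theta}_l-\widehat{\bm\theta}_n=\mathcal O_p((nT)^{-1/2})$. The proof has three parts: control the moving expansion point, control the random error, and control the Gaussian surrogate.

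\textbf{Step 1 (first display).} The gradient and Hessian in $u_{n,i}^\pi$ are evaluated at $\widehat{\bm\theta}_n$. Since $Q_i^\pi$ is $C^3$ near $\widetilde{\bm\theta}_l$ with derivatives bounded uniformly in $T$, the mean-value theorem gives
\[
\|\bm g_{n,i}^\pi-\bm g_{\infty,i}^\pi\|+\|H_{n,i}^\pi-H_{\infty,i}^\pi\|\le C\|\widehat{\bm\theta}_n-\widetilde{\bm\theta}_l\|,
\]
on an event whose probability tends to one; this is where $\widehat{\bm\theta}_n$ lies in the neighborhood. The right-hand side is $\mathcal O_p((nT)^{-1/2})$.

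Substitute $\bm e_n=\bm b_{\Delta,l}+\bm\epsilon_n$ into $(\bm g_{n,i}^\pi)^\top\bm e_n+\tfrac12\bm e_n^\top H_{n,i}^\pi\bm e_n$. Because $\bm b_{\Delta,l}$ is a fixed bounded vector (its dependence on $\Delta t$ is suppressed and assumed bounded), every cross term is bounded by a constant times $\|\bm\epsilon_n\|$ or $\|\widehat{\bm\theta}_n-\widetilde{\bm\theta}_l\|$. Hence the quadratic form equals $(\bm g_{\infty,i}^\pi)^\top\bm b_{\Delta,l}+\tfrac12\bm b_{\Delta,l}^\top H_{\infty,i}^\pi\bm b_{\Delta,l}+\mathcal O_p((nT)^{-1/2})$. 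Since $|\,|x|-|y|\,|\le|x-y|$, the absolute value preserves this rate. This proves the first display.

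\textbf{Step 2 (second display).} Write $\widetilde{\bm e}_n=\widehat{\bm b}_n+Z_n$ with $Z_n\mid\mathcal D_n\sim N(\bm 0,\widehat\Sigma_n)$, and let $q_{n}(x):=(\bm g_{n,i}^\pi)^\top x+\tfrac12x^\top H_{n,i}^\pi x$ and $q_\infty$ its limit analogue. We need
\[
\mathbb E_{Z_n}\bigl|\,|q_n(\widehat{\bm b}_n+Z_n)|-|q_\infty(\bm b^{\mathrm{plug}}_{\Delta,l}+Z_n)|\,\bigr|=\mathcal O_p((nT)^{-1/2}).
\]
Bound the integrand pointwise by
\[
|q_n(x)-q_\infty(x)|+|q_\infty(\widehat{\bm b}_n+Z_n)-q_\infty(\bm b^{\mathrm{plug}}_{\Delta,l}+Z_n)|,\qquad x=\widehat{\bm b}_n+Z_n.
\]
The first term is at most $\|\bm g_{n}-\bm g_\infty\|\,\|x\|+\tfrac12\|H_n-H_\infty\|\,\|x\|^2$. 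The second term is at most $\|\widehat{\bm b}_n-\bm b^{\mathrm{plug}}_{\Delta,l}\|\,(\|\bm g_\infty\|+\|H_\infty\|(\|\widehat{\bm b}_n\|+\|\bm b^{\mathrm{plug}}\|+2\|Z_n\|))$.

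Taking conditional expectations uses $\mathbb E\|Z_n\|^2=\operatorname{tr}\widehat\Sigma_n=\mathcal O_p((nT)^{-1})$, which follows from $\widehat\Sigma_n=\widehat C_{l,n}/n$ and $\widehat C_{l,n}=\mathcal O_p(T^{-1})$. Together with Step 1's derivative rates and the assumed rate for $\widehat{\bm b}_n$, every piece is $\mathcal O_p((nT)^{-1/2})$.

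\textbf{Step 3 (third display).} Compare $|q_\infty(\bm b^{\mathrm{plug}}+Z_n)|$ with $|q_\infty(\bm b^{\mathrm{plug}})|$ using the same Lipschitz bound:
\[
\mathbb E\bigl|q_\infty(\bm b^{\mathrm{plug}}+Z_n)-q_\infty(\bm b^{\mathrm{plug}})\bigr|\le(\|\bm g_\infty\|+\|H_\infty\|\|\bm b^{\mathrm{plug}}\|)\,\mathbb E\|Z_n\|+\tfrac12\|H_\infty\|\,\mathbb E\|Z_n\|^2 .
\]
This is $\mathcal O_p((nT)^{-1/2})+\mathcal O_p((nT)^{-1})$, which gives the third display.

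\textbf{Main obstacle.} The delicate point is uniformity in $T$. The remainders must not grow as $T\to\infty$, which requires the following:
\begin{enumerate}
\item The derivative bounds on $Q_i^\pi$ hold uniformly in $T$; this is assumed in the statement.
\item $\bm b_{\Delta,l}$ and $\bm b^{\mathrm{plug}}_{\Delta,l}$ remain bounded as $\Delta t\to0$. I would obtain this from the remainder bound $\|R_{q,l}\|\le C_1(\Delta t)^{l+1}(1+\|\pmb s\|)^{C_2}$ together with $\lambda_{\min}(\mathcal I_\sigma)>0$, which controls $\overline\Lambda_l^{-1}$ after $T$-normalization.
\item $\widehat{\bm\theta}_n$ falls in the $C^3$ neighborhood with probability tending to one, which follows from consistency.
\end{enumerate}
I would state item 2 explicitly as part of the dense-sampling conditions behind \eqref{eq:dense-qmle-rate} rather than rederive it.
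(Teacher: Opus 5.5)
Your proposal is correct and follows essentially the same argument as the paper: Lipschitz control of $\bm g_{n,i}^\pi$ and $H_{n,i}^\pi$ via the dense-sampling QMLE rate, the reverse triangle inequality for the absolute value, and the conditional Gaussian moment bounds $\mathbb E[\|Z_n\|\mid\mathcal D_n]=\mathcal O_p((nT)^{-1/2})$ and $\mathbb E[\|Z_n\|^2\mid\mathcal D_n]=\mathcal O_p((nT)^{-1})$. The only difference is the order of the last two perturbations, which is immaterial. The paper first removes the Gaussian noise around the fitted quantities, comparing $\widehat u_{n,i}^\pi$ with $\bigl|(\bm g_{n,i}^\pi)^\top\widehat{\bm b}_n+\tfrac12\widehat{\bm b}_n^\top H_{n,i}^\pi\widehat{\bm b}_n\bigr|$, and then passes to the limiting quantities, whereas you route the final display through the second one; your explicit flagging that $\bm b_{\Delta,l}$ and $\bm b_{\Delta,l}^{\mathrm{plug}}$ must stay bounded makes precise what the paper uses implicitly.
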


We next quantify how residual calibration error propagates to physical-system policy performance. When the number of sampling intervals varies, write
\[
J(\pi;\bm\theta)
:=\mathbb E_{\bm\theta}^{\pi}\!\left[
\sum_{i=0}^{T-1}\gamma^i r(\pmb s_{t_i},\pmb a_{t_i})
\right],
\qquad \Delta t=(t_T-t_0)/T.
\]
To compare the simulator value at $\widetilde{\bm\theta}_l$ with the physical-system value at $\bm\theta^\ast$, we decompose the value error into bias and uncertainty components. The leading bias contribution is
\[
C_{\mathrm{bias}}(\pi)
:=
\nabla_\theta J(\pi;\widetilde{\bm\theta}_l)^\top
\bm b_{\Delta,l}
+
\frac12
\bm b_{\Delta,l}^\top
\nabla_\theta^2J(\pi;\widetilde{\bm\theta}_l)
\bm b_{\Delta,l},
\]
which captures the value distortion induced by the truncation bias $\bm b_{\Delta,l}=\bm\theta^\ast-\widetilde{\bm\theta}_l$. 

The leading uncertainty contribution is $\frac{1}{2n} \operatorname{tr}\!\Bigl( \nabla_\theta^2J(\pi;\widetilde{\bm\theta}_l) C_l(\widetilde{\bm\theta}_l) \Bigr)$ which captures the effect of QMLE estimation uncertainty on policy value.
Under the uniform value-smoothness condition in Appendix~\ref{app:proofs}, the bias rate $\|\bm b_{\Delta,l}\|=\mathcal O((\Delta t)^l)$ from Appendix~\ref{app:score-bias}, and $C_l(\widetilde{\bm\theta}_l)=\mathcal O(T^{-1})$ from Appendix~\ref{app:dense-sampling-rate}, the remaining higher-order error satisfies
\begin{equation}
\begin{aligned}
\epsilon_{\Delta t,n}
:={}&\sup_{\pi\in\Pi}\Bigl|
J(\pi;\bm\theta^\ast)
-J(\pi;\widetilde{\bm\theta}_l)-C_{\mathrm{bias}}(\pi)-\frac{1}{2n}\operatorname{tr}\!\left(
\nabla_\theta^2J(\pi;\widetilde{\bm\theta}_l)
C_l(\widetilde{\bm\theta}_l)\right)\Bigr|\\
={}&\mathcal O((\Delta t)^{3l})+\mathcal O((nT)^{-1}).
\end{aligned}
\label{eq:oracle-value-remainder}
\end{equation}

Proposition~\ref{app:prop:one-step-accuracy} in Appendix~\ref{app:one-step-bias} shows that the plug-in correction
$\bm b_{\Delta,l}^{\mathrm{plug}}$ approximates the exact truncation bias
$\bm b_{\Delta,l}
=
\overline\Lambda_l^{-1}\bm\delta_l$
with error
$\mathcal O((\Delta t)^{l+1})$.
Therefore, the residual error consists of two components: a deterministic bias-approximation error of order $\mathcal O((\Delta t)^{l+1})$ and a statistical estimation error of order $\mathcal O_p((nT)^{-1/2})$. This decomposition leads directly to the following physical-system performance guarantee.

\begin{theorem}[Physical-system policy performance]
\label{convergence2}
Under the dense-sampling regime, the uniform value-smoothness condition in Appendix~\ref{app:proofs}, the conditions of Proposition~\ref{app:prop:dense-qmle} in Appendix~\ref{app:dense-sampling-rate}, and those of Propositions~\ref{app:prop:dense-bias}--\ref{app:prop:one-step-accuracy} in Appendix~\ref{app:one-step-bias}, the target policy $\widehat\pi_n$ satisfies
\begin{align*}
0
\leq
J(\pi^\ast;\bm\theta^\ast)
-
J(\widehat\pi_n;\bm\theta^\ast)
=
\mathcal O((\Delta t)^{l+1})
+\mathcal O_p((nT)^{-1/2}).
\end{align*}
\end{theorem}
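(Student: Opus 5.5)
The proof is a standard optimization-error argument. The nonnegativity $0\le J(\pi^\ast;\bm\theta^\ast)-J(\widehat\pi_n;\bm\theta^\ast)$ follows from the definition of $\pi^\ast$ as a maximizer of $J(\cdot;\bm\theta^\ast)$ over $\Pi$. For the upper bound, define the uniform approximation error
\[
\Delta_n:=\sup_{\pi\in\Pi}\bigl|J(\pi;\bm\theta^\ast)-\bar J_n(\pi)\bigr|.
\]
Since $\bar J_n(\widehat\pi_n)\ge \bar J_n(\pi^\ast)$, we obtain
\[
J(\pi^\ast;\bm\theta^\ast)-J(\widehat\pi_n;\bm\theta^\ast)\le \bigl[J(\pi^\ast;\bm\theta^\ast)-\bar J_n(\pi^\ast)\bigr]+\bigl[\bar J_n(\widehat\pi_n)-J(\widehat\pi_n;\bm\theta^\ast)\bigr]\le 2\Delta_n .
\]
It therefore suffices to show $\Delta_n=\mathcal O((\Delta t)^{l+1})+\mathcal O_p((nT)^{-1/2})$.

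To bound $\Delta_n$, I insert the oracle expansion \eqref{eq:oracle-value-remainder}. For each $\pi$,
\[
J(\pi;\bm\theta^\ast)=J(\pi;\widetilde{\bm\theta}_l)+C_{\mathrm{bias}}(\pi)+\tfrac{1}{2n}\operatorname{tr}\bigl(\nabla_\theta^2J(\pi;\widetilde{\bm\theta}_l)C_l(\widetilde{\bm\theta}_l)\bigr)+r_n(\pi),
\]
where $\sup_\pi|r_n(\pi)|\le\epsilon_{\Delta t,n}=\mathcal O((\Delta t)^{3l})+\mathcal O((nT)^{-1})$. I then compare this expression with $\bar J_n(\pi)$ term by term, writing $\widehat{\bm\theta}_n=\widetilde{\bm\theta}_l+\bm d_n$ with $\bm d_n=\mathcal O_p((nT)^{-1/2})$ by \eqref{eq:dense-qmle-rate}.

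\textbf{Zeroth-order term.} By a Taylor expansion with the uniform value-smoothness condition (bounded derivatives uniformly in $\pi$ and $T$),
\[
J(\pi;\widehat{\bm\theta}_n)-J(\pi;\widetilde{\bm\theta}_l)=\nabla_\theta J(\pi;\widetilde{\bm\theta}_l)^\top\bm d_n+\mathcal O_p(\|\bm d_n\|^2).
\]
The difficulty is that $\widehat{\bm b}_n$ is centered at $\widetilde{\bm\theta}$-type quantities, not at $\bm\theta^\ast-\widehat{\bm\theta}_n$. The linear term $\nabla_\theta J^\top\bm d_n$ is not cancelled by anything, but it is already $\mathcal O_p((nT)^{-1/2})$ uniformly in $\pi$, because the gradient is uniformly bounded. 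So it is simply absorbed into the stated rate.

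\textbf{Bias correction.} Write
\[
\widehat{\bm b}_n-\bm b_{\Delta,l}=\bigl(\widehat{\bm b}_n-\bm b^{\mathrm{plug}}_{\Delta,l}\bigr)+\bigl(\bm b^{\mathrm{plug}}_{\Delta,l}-\bm b_{\Delta,l}\bigr).
\]
The first piece is $\mathcal O_p((nT)^{-1/2})$ under the conditions of Propositions~\ref{app:prop:dense-bias}, which are the same ones assumed in Theorem~\ref{convergence1}. The second piece is $\mathcal O((\Delta t)^{l+1})$ by Proposition~\ref{app:prop:one-step-accuracy}. Evaluating the gradient and Hessian at $\widehat{\bm\theta}_n$ instead of $\widetilde{\bm\theta}_l$ costs $\mathcal O_p(\|\bm d_n\|)\cdot\|\bm b_{\Delta,l}\|$. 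Since $\|\bm b_{\Delta,l}\|=\mathcal O((\Delta t)^l)$ is bounded, this is also $\mathcal O_p((nT)^{-1/2})$. The quadratic bias term is handled in the same way: its difference is bounded by
\[
\|\widehat{\bm b}_n-\bm b_{\Delta,l}\|\,\bigl(\|\widehat{\bm b}_n\|+\|\bm b_{\Delta,l}\|\bigr),
\]
which is of no larger order than the linear one.

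\textbf{Uncertainty correction.} Using $\widehat\Sigma_n=\widehat C_{l,n}/n$ with $\widehat C_{l,n}=\mathcal O_p(T^{-1})$ and $C_l(\widetilde{\bm\theta}_l)=\mathcal O(T^{-1})$, both trace terms are $\mathcal O_p((nT)^{-1})$ uniformly in $\pi$. Their difference is therefore negligible, and I do not need to establish consistency of $\widehat C_{l,n}$.

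Collecting all terms gives
\[
\Delta_n=\mathcal O((\Delta t)^{l+1})+\mathcal O_p((nT)^{-1/2})+\mathcal O((\Delta t)^{3l})+\mathcal O_p((nT)^{-1}).
\]
Since $l\ge1$ gives $3l\ge l+1$, the terms reduce to the claimed rate.

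\textbf{Main obstacle.} The key technical step is uniformity over $\Pi$. Each $\mathcal O_p$ bound must hold for $\sup_\pi$ and, at the same time, for the data-dependent policy $\widehat\pi_n$. I handle this by factoring every random error as (a $\pi$-free random vector, namely $\bm d_n$, $\widehat{\bm b}_n-\bm b^{\mathrm{plug}}_{\Delta,l}$, or $\widehat C_{l,n}$) times (a $\pi$-dependent derivative bounded uniformly by the value-smoothness condition). This avoids any empirical-process argument over policies. A secondary check is that the Taylor remainders around $\widetilde{\bm\theta}_l$ remain in the neighborhood where the smoothness bounds apply. This holds with probability tending to one because $\widehat{\bm\theta}_n\to\widetilde{\bm\theta}_l$ and $\|\bm b_{\Delta,l}\|\to0$ as $\Delta t\to0$.
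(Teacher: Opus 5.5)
Your proposal is correct and follows essentially the same route as the paper: you bound the regret by $2\sup_{\pi\in\Pi}|\bar J_n(\pi)-J(\pi;\bm\theta^\ast)|$ and control that supremum term by term. The pieces match the paper's: the oracle expansion \eqref{eq:oracle-value-remainder}, the fitted-value error at $\widehat{\bm\theta}_n$, the bias correction split through $\bm b_{\Delta,l}^{\mathrm{plug}}$ using Propositions~\ref{app:prop:dense-bias}--\ref{app:prop:one-step-accuracy}, and the two trace corrections, each $\mathcal O_p((nT)^{-1})$ without needing consistency of $\widehat C_{l,n}$. Your explicit factorization of every random error into a $\pi$-free factor times a uniformly bounded value derivative makes the uniformity over $\Pi$ (and hence the validity at the data-dependent $\widehat\pi_n$) more transparent than in the paper, which leaves this step implicit.
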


Theorem~\ref{convergence2} shows that the optimality gap vanishes as the sampling interval decreases and the amount of data increases. Consequently, dense sampling not only improves calibration accuracy but also ensures asymptotically optimal policy performance in the physical system.

\section{Empirical Study}
\label{sec: experiments}

We evaluate the finite-sample performance of the proposed Actor--Simulator framework using validated iPSC monolayer and aggregate-culture models \citep{zheng2024stochastic,wang2023metabolic}. Because feeding policies must balance productivity and cell viability, these systems provide representative testbeds for digital-twin calibration and process control. 
Details of the monolayer case study are provided in Appendix~\ref{subsec: kinetic model}.

All experiments are conducted in a synthetic environment, where a simulator validated against experimental data serves as the physical system \(\mathcal M^p\). Although trajectories are generated using the ground-truth parameter vector $\bm\theta^\ast$, the Actor--Simulator observes only the resulting state--action trajectories during calibration, uncertainty-guided exploration, bias correction, and policy optimization. The ground-truth parameters are used solely to specify the initialization range and to compute post hoc calibration error. At each policy checkpoint, the current policy is frozen and evaluated on 1,000 independent simulator trajectories. These evaluation trajectories are excluded from $\mathcal D_n$, do not update the model or policy, and are not counted toward the training budget $N_{\max}$. Each experimental configuration is repeated over five independent training seeds. Unless otherwise stated, solid curves denote seed-wise means and shaded regions indicate pointwise 95\% confidence intervals.

Calibration-error curves quantify recovery of the underlying mechanistic parameters, whereas reward curves measure the downstream decision quality induced by the learned model and policy. Both metrics are necessary: parameter accuracy does not necessarily translate into improved decisions along insensitive directions, while high reward alone does not imply accurate recovery of the underlying mechanisms.


We use the state $\pmb s_t=(X_t,\pmb u_t^\top)^\top$, where $X_t$ is cell density and $\pmb u_t$ contains extracellular metabolite concentrations. At decision time $t_i$, the action $a_{t_i}=\pi(\pmb s_{t_i})\in[0,1]$ replaces that fraction of medium, giving $\pmb u_{t_i}^+=a_{t_i}\pmb u_{t_0}+(1-a_{t_i})\pmb u_{t_i}$ for fresh-medium composition $\pmb u_{t_0}$.

In suspension culture, iPSCs form three-dimensional aggregates whose internal microenvironments can differ substantially from the bulk medium. To capture this heterogeneity, the digital twin couples a population-balance model for aggregate-size evolution with a reaction--diffusion model for intra-aggregate nutrient and metabolite transport. Aggregate growth and coalescence determine the size distribution, while diffusion limitations create nutrient gradients that alter local metabolic activity, growth, and death.

Population- and volume-weighted averaging of these local responses produces the effective growth rate $\bar\mu$, death rate $\bar\mu_d$, and reaction-rate vector $\overline{\pmb{v}}$. With stoichiometric matrix $\pmb N$, the resulting macroscopic dynamics are
\[ dX_t=(\bar\mu-\bar\mu_d)X_t\,dt
 +\sqrt{(\bar\mu+\bar\mu_d)X_t}\,dW_t^X,
 \qquad
 d\pmb u_t=\pmb N\overline{\pmb{v}}X_t\,dt
 +\pmb N\sqrt{\operatorname{diag}(\overline{\pmb{v}}X_t)}\,dW_t.
\]
Consequently, adequate bulk nutrient levels may still coexist with nutrient-starved aggregate cores. This multiscale coupling allows feeding decisions to influence both overall productivity and spatial heterogeneity.

The policy balances biomass production and aggregate heterogeneity. Let $q_t^{\mathrm{core}}$ denote the fraction of cells in aggregates with nutrient-starved cores (central glucose concentration below $2.5$ mM), and define $\Delta q_{t_i}^{\mathrm{core}}=q_{t_{i+1}}^{\mathrm{core}}-q_{t_i}^{\mathrm{core}}$. The aggregate-culture reward is
\begin{equation}
r(\pmb s_{t_i}, \pmb a_{t_i}) = c_r \Delta X_{t_i} - c_m b_{t_i} - c_l \Delta\mbox{[ELAC]}_{t_i} - c_h \Delta q_{t_i}^{\mathrm{core}},
\end{equation}
where $\Delta X_{t_i}=X_{t_{i+1}}-X_{t_i}$ and $\Delta\mbox{[ELAC]}_{t_i}$ denotes the change in extracellular lactate concentration. The weights $c_r$, $c_m$, $c_l$, and $c_h$ quantify the relative importance of biomass production, medium consumption, lactate accumulation, and nutrient-starved core expansion.

Experiments use 48-h episodes with 4-h decision intervals, discount factor $\gamma=0.99$, and 20, 30, or 40 unknown parameters. For the proposed method, we set the conditional-moment expansion order to $l=2$. Initial estimates $\widehat\theta_{0,k}$ are sampled uniformly from $[0,4\theta_k^\ast]$. For each setting, the Actor--Simulator has a trajectory budget of $N_{\max}=160$, including $n_0=10$ pilot trajectories before uncertainty-guided exploration. The remaining episodes follow Algorithm~\ref{actorsimulatoralgo} in Appendix~\ref{sec:algorithm}. We compare against Gaussian-process and random-exploration baselines, evaluating calibration accuracy using $\left\Vert(\widehat{\bm\theta}_n-\bm\theta^\ast)/\bm\theta^\ast\right\Vert$ (componentwise division). Confidence intervals and independent policy evaluation follow the protocol above.

As shown in Figure~\ref{fig: calibration performance_agg}, the proposed Actor--Simulator framework scales effectively to the more complex Bio-SoS setting. Compared with the Gaussian Process surrogate and Random Policy baselines, it consistently delivers more accurate calibration and parameter recovery. These gains in model fidelity translate into improved control performance, as illustrated by the reward trajectories in Figure~\ref{fig: reward_agg}. Across all three parameter settings, the Actor--Simulator achieves faster policy improvement and higher final rewards. 

\begin{figure}[ht]
 \centering
 \vspace{-0.in}
 \includegraphics[width=0.8\textwidth]{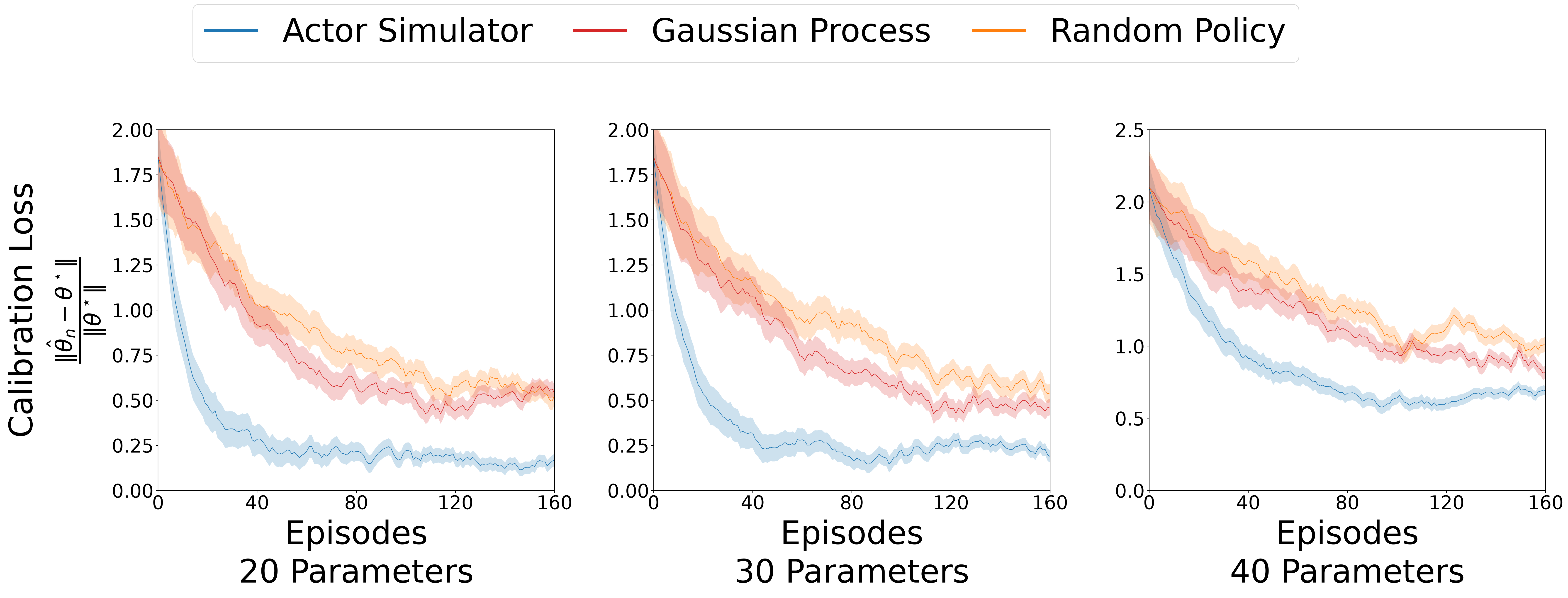}
 \vspace{-0.in}
 \caption{Comparison of calibration parameter estimation error
 $\left\Vert \frac{\widehat{\bm\theta}_n-\bm\theta^\ast}{\bm\theta^\ast}\right\Vert$,
 obtained by the three candidate approaches in the 20-, 30-, and 40-unknown-parameter settings. Solid curves report the mean and shaded regions denote pointwise 95\% confidence intervals.
 }\label{fig: calibration performance_agg}
 \vspace{-0.in}
\end{figure}

Accurate prediction of \emph{Bio-SoS heterogeneity} is essential for biomanufacturing and quality control. As aggregate size increases, diffusion limitations generate distinct core and peripheral microenvironments that can induce different metabolic states. We evaluate whether the calibrated digital twin captures this spatial heterogeneity by comparing inner- and outer-shell fluxes in the 40-parameter aggregate-culture case study. Using the calibrated parameters from the 15th RL iteration of each random seed, we simulate each of the five parameter estimates separately and then average the resulting flux-heterogeneity measures pointwise. Averaging outputs rather than parameters preserves nonlinear effects in the digital twin.

\begin{figure}[ht]
 \centering
 \vspace{-0.in}
 \includegraphics[width=0.8\textwidth]{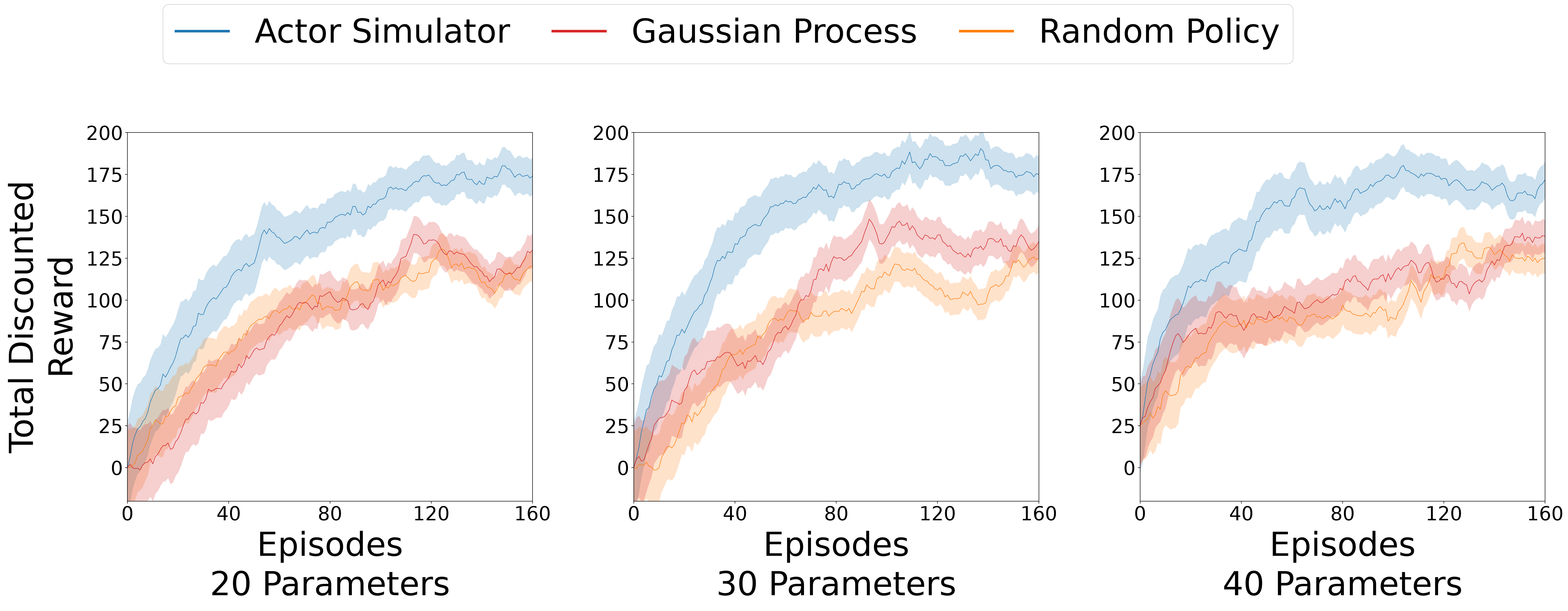}
 \vspace{-0.in}
 \caption{Comparison of policy optimization performance in terms of $J(\widehat\pi_n; \mathcal{M}^p)$ obtained by the three candidate approaches in the 20-, 30-, and 40-unknown-parameter settings. Solid curves report the mean and shaded regions denote pointwise 95\% confidence intervals.
 }\label{fig: reward_agg}
 \vspace{-0.in}
\end{figure}

Although policy training uses 48-h episodes, the calibrated mechanistic models are propagated for up to 73 h with a 1-h update interval, so the 72-h evaluations assess predictive performance beyond the training horizon. For spatial flux analysis, the aggregate simulator assumes a single-cell radius of 7.5~$\mu$m, radial discretization of 15~$\mu$m, aggregate porosity of 0.27, and tortuosity of 1.46. Inner- and outer-shell fluxes are extracted at 24, 48, and 72 h for nominal aggregate radii of 60, 120, 240, and 360~$\mu$m. For reaction or transport flux $j$, time $t$, and aggregate radius $R$, we define the flux-heterogeneity contrast as
\[
C^{\mathrm{flux}}_{j,t,R}
:=
\frac{\nu^{\mathrm{inner}}_{j,t,R}-\nu^{\mathrm{outer}}_{j,t,R}}
{|\nu^{\mathrm{inner}}_{j,t,R}|+|\nu^{\mathrm{outer}}_{j,t,R}|}.
\]
Positive values indicate greater flux activity in the aggregate interior, whereas negative values indicate greater activity near the surface.

As shown in Figure~\ref{fig:metabolic-heterogeneity-mechanism}, the calibrated digital twin preserves the key pathway-level heterogeneity patterns observed in the physical system, 
including the increasingly negative pyruvate-transport contrast with aggregate radius at 24 and 48 h, the pronounced negative SAL and biomass contrasts at larger radii at 48 h, and the opposing PGK and PK spatial trends at 72 h. The model also captures the localized 72-h heterogeneity signatures of malate-associated reactions and PC near the nominal 240~$\mu$m radius.

\begin{figure}[htbp]
    \centering
    \vspace{-0.05in}
    \includegraphics[width=\textwidth]{reference_calibrated_metabolic_heterogeneity_comparison.png}
    \vspace{-0.in}
    \caption{Reference and calibrated inner--outer metabolic-flux contrasts at 24, 48, and 72 h for aggregate radii of 60, 120, 240, and 360~$\mu$m. At the 15th iteration, the absolute error in the normalized contrast is below 0.05 for every displayed entry.}
    \label{fig:metabolic-heterogeneity-mechanism}
\end{figure}

In both the aggregate-culture study and the monolayer study (Appendix~\ref{subsec: kinetic model}), the bias-aware Actor-Simulator framework achieves faster mechanistic parameter calibration and higher policy rewards than Gaussian-process and random-exploration baselines under the same trajectory budget, demonstrating superior sample efficiency. Spatial flux analysis further shows that the calibrated digital twin accurately captures spatiotemporal metabolic heterogeneity within aggregates. This mechanistic fidelity is particularly important for gene and cell therapy manufacturing, organoid production, and 3D bioprinting, where local microenvironments strongly influence product quality and consistency. Overall, the results highlight the benefits of jointly integrating uncertainty-aware digital-twin calibration and policy optimization within mechanistic Bio-SoS models.

\section{Conclusion}
\label{sec:conclusion}
This paper develops a bias-aware Actor--Simulator framework that integrates uncertainty-aware calibration, adaptive experimental design, and policy optimization for SDE-based digital twins. The proposed approach combines generator-based moment expansions to characterize truncation-induced calibration bias with trajectory-level adjoint sensitivity analysis to quantify how parameter uncertainty propagates to action values and policy performance. These components enable policy-directed experiment selection and uncertainty-aware policy optimization through a second-order Gaussian-averaged objective.
Theoretical analysis establishes improved dense-sampling calibration rates, characterizes the asymptotic limit of the proposed exploration score, and proves asymptotically optimal physical-system performance. Numerical studies on iPSC monolayer and aggregate-culture systems demonstrate faster calibration, greater sample efficiency, and superior policy performance relative to Gaussian-process and random-exploration baselines. Overall, the proposed framework establishes a principled link between uncertainty quantification, model calibration, and decision optimization in mechanistic Bio-SoS digital twins.

\begin{sloppypar}
\section*{Acknowledgements}
The authors acknowledge support from the National Institute of Standards and Technology (Grants 70NANB24H293, 70NANB17H002, 70NANB21H086) and the National Science Foundation (CMMI-2442970).
\end{sloppypar}

\bibliography{wscbib,adjoint_references,uncertainty_references,biosos_references}
\clearpage

\appendix
\numberwithin{figure}{section}
\numberwithin{table}{section}
\numberwithin{theorem}{section}

\section{Reaction-Network Motivation}
\label{app:reaction-network}

\begin{sloppypar}
Consider a reaction network with $M$ species and $R$ reactions. Let
$\pmb N=(\pmb N_1,\ldots,\pmb N_R)\in\mathbb R^{M\times R}$
denote the stoichiometric matrix, and let
$\pmb v(\pmb s,\bm\theta)$ denote the vector of parameterized reaction rates.
Over a microscopic interval $(t,t+\Delta\tau]$, the state evolves according to
\begin{equation}
\pmb s_{t+\Delta\tau}
=\pmb s_t+\pmb N\pmb R_t,
\qquad
\pmb R_t\sim\operatorname{Poisson}
\!\left(\pmb v(\pmb s_t,\bm\theta)\Delta\tau\right),
\label{app:eq:reaction-count}
\end{equation}
where the components of $\pmb R_t$ are conditionally independent reaction counts. For sufficiently small $\Delta\tau$, the reaction-count vector admits the Gaussian approximation
\[
\pmb s_{t+\Delta\tau}\mid\pmb s_t
\ \dot\sim\
N\!\left(
\pmb s_t+\pmb N\pmb v(\pmb s_t,\bm\theta)\Delta\tau,
\pmb N\operatorname{diag}(\pmb v(\pmb s_t,\bm\theta))\pmb N^\top\Delta\tau
\right).
\]
Taking $\Delta\tau\downarrow0$ yields the chemical Langevin equation underlying Equation~\eqref{SDE_state}. The framework considered in this paper allows more general drift and diffusion specifications, enabling reaction, transport, population-balance, and spatial modules to be coupled within a unified Bio-SoS model.

\end{sloppypar}

\section{iPSC Monolayer Culture Case Study}
\label{subsec: kinetic model}

Control of iPSC cultures aims to improve cell growth, viability, and product quality consistency. Because culture performance is highly sensitive to nutrient depletion and metabolic waste accumulation, feeding policies play a critical role in maintaining productivity and product quality. We consider a setting in which the metabolic-network structure is known, but key kinetic and regulatory parameters are uncertain. To model this system, we adopt the metabolic network of \cite{wang2023metabolic}, comprising 30 central metabolic reactions, 34 state variables, and 73 parameters. Figure~\ref{fig: culture} illustrates the metabolic network and medium-exchange control problem. The control action is the medium-exchange ratio.

\begin{figure}[htbp]
\centering
\vspace{-0.1in}
\includegraphics[width=0.8\textwidth]{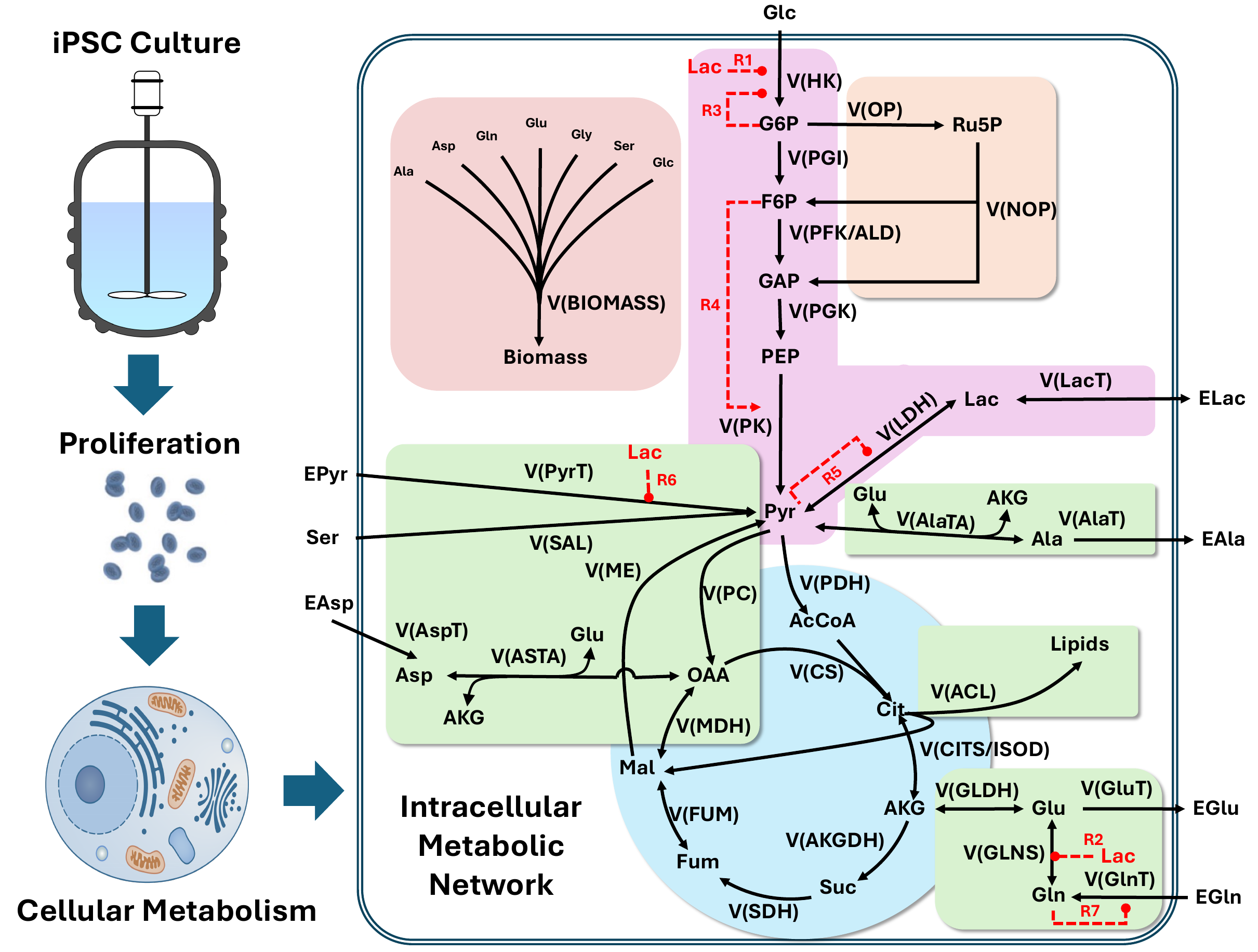}
\vspace{-0.in}
\caption{Schematic of the iPSC culture metabolisms and medium-exchange decision problem.}
\label{fig: culture}
\end{figure}

\subsection{iPSC Model, Parameter Settings, and Experimental Setup}
\label{app:example-model}

The monolayer-culture model uses the state
$\pmb s_t=(X_t,\pmb u_t^\top)^\top$, where $X_t$ denotes cell density
(cells/cm$^2$) and $\pmb u_t$ contains extracellular metabolite
concentrations. Following the study \citep{kornecki2018process}, cell growth is promoted by glucose ([GLC]) and glutamine ([EGLN]) and inhibited by lactate ([ELAC]):
\begin{equation*}
\begin{aligned}
dX_t
&=(\mu-\mu_d)X_t\,dt
+\sqrt{(\mu+\mu_d)X_t}\,dW_t^X,\\
\mu_d
&=k_d\frac{\mbox{[ELAC]}}{\mbox{[ELAC]}+K_{Dlac}},\\
\mu
&=\mu_{max}
\frac{\mbox{[GLC]}}{K_{glc}+\mbox{[GLC]}}
\frac{\mbox{[EGLN]}}{K_{gln}+\mbox{[EGLN]}}
\cdot\frac{K_{Ilac}}{K_{Ilac}+\mbox{[ELAC]}}.
\end{aligned}
\end{equation*}
Here $\mu_{max}$ and $k_d$ denote the maximum growth and death rates,
$K_{glc}$, $K_{gln}$, $K_{Dlac}$, and $K_{Ilac}$ are the corresponding
half-saturation and inhibition constants. Following the feeding update, extracellular metabolites evolve according to
\begin{equation*}
\begin{aligned}
d\pmb u_t
=\pmb N\pmb v(\pmb s_t,\bm\theta)X_t\,dt
+\pmb N\sqrt{\operatorname{diag}
(\pmb v(\pmb s_t,\bm\theta)X_t)}\,dW_t,
\end{aligned}
\end{equation*}
where \(\pmb N\) and \(\pmb v\) denote the stoichiometric matrix and reaction-rate vector, respectively (Appendix~\ref{app:reaction-network}). The network structure and nominal parameter values are adopted from \citet{wang2023metabolic} and validated against experimental data.

We consider nested calibration problems with 20, 30, and 40 unknown parameters, summarized in Table~\ref{table: calibration parameter}. The table reports the corresponding ground-truth values, while all remaining parameters are fixed at the baseline values from \citet{wang2023metabolic}.

\begin{table}[htbp]
\centering
\caption{Calibration parameters used in the monolayer and aggregate experiments.}
\label{table: calibration parameter}
\begin{tabular}{@{}llllll@{}}
\toprule
\multicolumn{6}{c}{20 parameters in all settings}\\
\midrule
$v_{max,HK}$&2.92&$v_{max,PGI}$&1.43&$v_{max,PFK/ALD}$&2.16\\
$v_{max,PGK}$&4.00&$v_{max,PK}$&3.98&$v_{max,fLDH}$&3.28\\
$v_{max,PyrT}$&0.17&$v_{max,fLacT}$&2.97&$v_{max,OP}$&0.01\\
$v_{max,NOP}$&0.02&$v_{max,PDH}$&0.22&$v_{max,CS}$&0.43\\
$v_{max,ME}$&0.51&$v_{max,fMDH}$&1.44&$v_{max,GlnT}$&1.81\\
$K_{m,NH4}$&0.17&$K_{m,ALA}$&0.20&$K_{m,GLC}$&1.46\\
$K_{m,GLN}$&0.26&$K_{m,GLU}$&0.30&&\\
\midrule
\multicolumn{6}{c}{Additional 10 parameters in the 30- and 40-parameter settings}\\
\midrule
$v_{max,fCITS/ISOD}$&1.32&$v_{max,AKGDH}$&2.84&$v_{max,SDH}$&0.32\\
$v_{max,fFUM}$&0.32&$v_{max,PC}$&0.06&$v_{max,fGLNS}$&1.14\\
$v_{max,fGLDH}$&0.26&$v_{max,fAlaTA}$&0.82&$v_{max,AlaT}$&0.47\\
$v_{max,GluT}$&0.17&&&&\\
\midrule
\multicolumn{6}{c}{Additional 10 parameters in the 40-parameter setting}\\
\midrule
$K_{m,SER}$&0.01&$v_{m,SAL}$&0.01&$K_{m,Ru5P}$&0.02\\
$K_{m,PYR}$&0.21&$K_{m,AcCoA}$&0.09&$K_{m,OAA}$&0.08\\
$K_{m,CIT}$&0.39&$K_{m,AKG}$&2.92&$K_{m,MAL}$&0.11\\
$K_{m,EGLN}$&1.00&&&&\\
\bottomrule
\end{tabular}
\end{table}

Observations are collected every $\Delta t=4$ h as transitions $(\pmb{s}_{t_i},\pmb{a}_{t_i},\pmb{s}_{t_{i+1}})$, where the post-decision state is given by $\pmb{s}_{t_i^+}=\pmb f(\pmb{s}_{t_i},\pmb a_{t_i})$. The continuous-time dynamics satisfy
\begin{equation}
  \pmb{s}_{t_{i+1}} = \pmb{s}_{t_i^+}
  +\int_{t_i}^{t_{i+1}}\pmb{\mu}(\pmb{s}_{u},\bm{\theta})\,du
  +\int_{t_i}^{t_{i+1}}\pmb{\sigma}(\pmb{s}_{u},\bm{\theta})\,dW_u,
  \label{eq.stateTransitionExample}
\end{equation}
and are simulated using an internal step size $h_{\mathrm{sim}}\le\Delta t$.

At decision epoch $t_i$, the policy selects a medium-exchange fraction $ a_{t_i}=\pi(\pmb{s}_{t_i})\in[0,1]$, yielding the post-intervention update $\pmb{u}_{t_i}^+ = a_{t_i}\pmb{u}_{t_0} + (1-a_{t_i})\pmb{u}_{t_i}$, where $\pmb{u}_{t_0}$ denotes the fresh-medium composition.
The reward for taking action $ a_{t_i}$ at state $\pmb{s}_{t_i}$ during the $i$-th interval is
\begin{equation}
r(\pmb{s}_{t_i}, a_{t_i}) = c_r \Delta X_{t_i} - c_m a_{t_i} - c_l \Delta\mbox{[ELAC]}_{t_i},
\label{eq.exampleReward}
\end{equation}
where $\Delta X_{t_i}=X_{t_{i+1}}-X_{t_i}$ and $\Delta\mbox{[ELAC]}_{t_i}$ is the change in extracellular lactate concentration; $c_r$, $c_m$, and $c_l$ weight cell production, medium use, and lactate accumulation, respectively. The objective is to maximize the discounted return $J(\pi;\mathcal M^p)$ with discount factor $\gamma=0.99$.

Digital-twin calibration targets the mechanistic parameters governing iPSC growth and metabolism. Equation~\eqref{eq.stateTransitionExample} captures nutrient- and lactate-dependent cell-growth dynamics together with extracellular-metabolite evolution, while $\pmb f$ represents the instantaneous feeding intervention. The model equations and calibration parameter values are given above.

\subsection{Finite-Sample Performance}
\label{subsec: Finite-Sample Performance}

For each  $k$-th unknown parameter, the initial estimate $\widehat\theta_{0,k}$ is sampled uniformly from $[0,4\theta_k^\ast]$. Training proceeds over repeated 48-hour culture episodes, each comprising $T=12$ four-hour observation and decision intervals. Calibration accuracy is measured by the relative error $\left\Vert(\widehat{\bm\theta}_n-\bm\theta^\ast)/\bm\theta^\ast\right\Vert$, with componentwise division.

We compare the proposed approach against two baselines: a \textit{Gaussian Process} surrogate model and a \textit{Random Policy}. The Gaussian Process represents a standard black-box surrogate commonly used in Bayesian optimization, whereas the Random Policy provides a benchmark for unguided exploration.

The calibration and policy-optimization results are reported in Figures~\ref{fig: calibration performance} and \ref{fig: RL performance}, respectively. Across all settings, the proposed bias-aware Actor-Simulator framework consistently improves both parameter estimation and process control.
For the 20- and 30-parameter cases, the relative calibration error decreases rapidly to approximately $0.2$--$0.25$ within 20 episodes and further to about $0.1$ by episode 80. In contrast, the Gaussian Process and Random Policy baselines remain near $0.2$--$0.3$ and $0.3$--$0.4$, respectively. The advantage persists in the more challenging 40-parameter setting, where the proposed method attains a final calibration error of roughly $0.35$, compared with approximately $0.65$ for the Gaussian Process and $0.8$ for the Random Policy.

\begin{figure}[ht]
 \centering
 \includegraphics[width=0.8\textwidth]{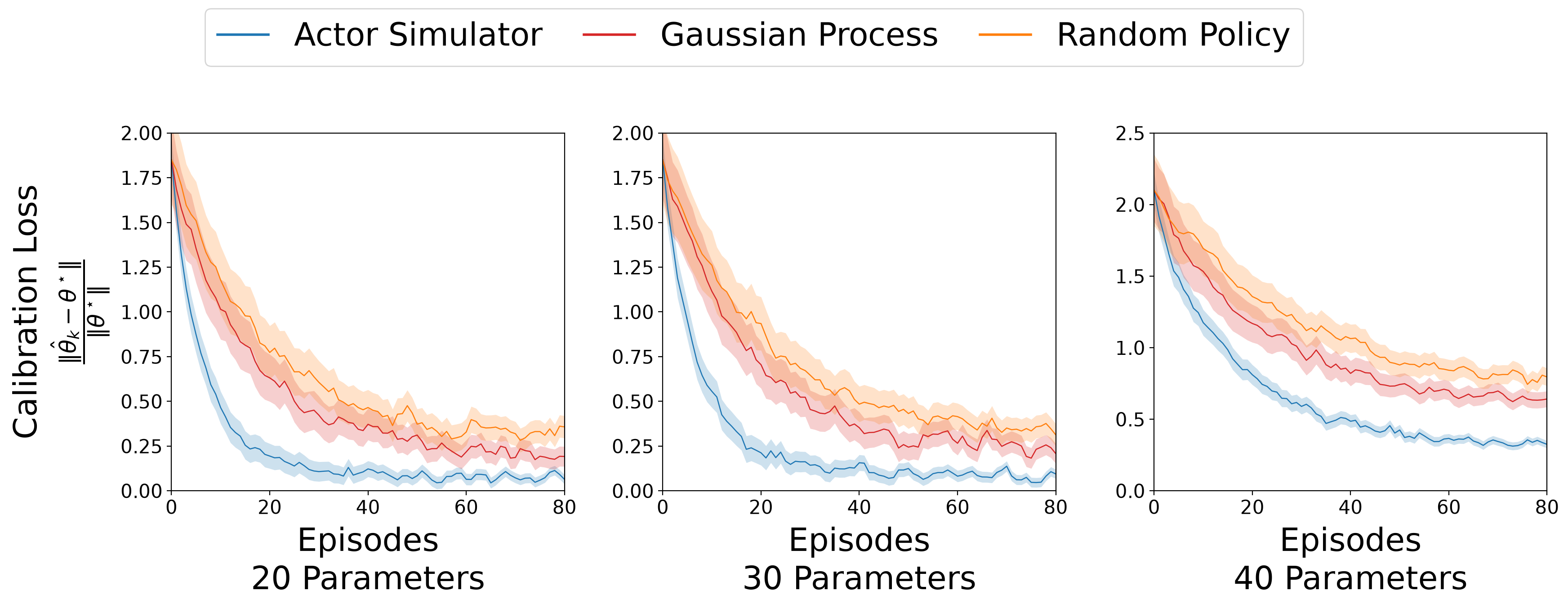}
 \vspace{-0.in}
 \caption{Comparison of calibration performance, measured by the relative parameter-estimation error
 $\left\Vert \frac{\widehat{\bm\theta}_n-\bm\theta^\ast}{\bm\theta^\ast}\right\Vert$,
 for the three methods under 20-, 30-, and 40-parameter calibration settings. Solid curves denote means and shaded regions indicate pointwise 95\% confidence intervals.
 }\label{fig: calibration performance}
\end{figure}

\begin{figure}[ht]
 \centering
 \includegraphics[width=0.8\textwidth]{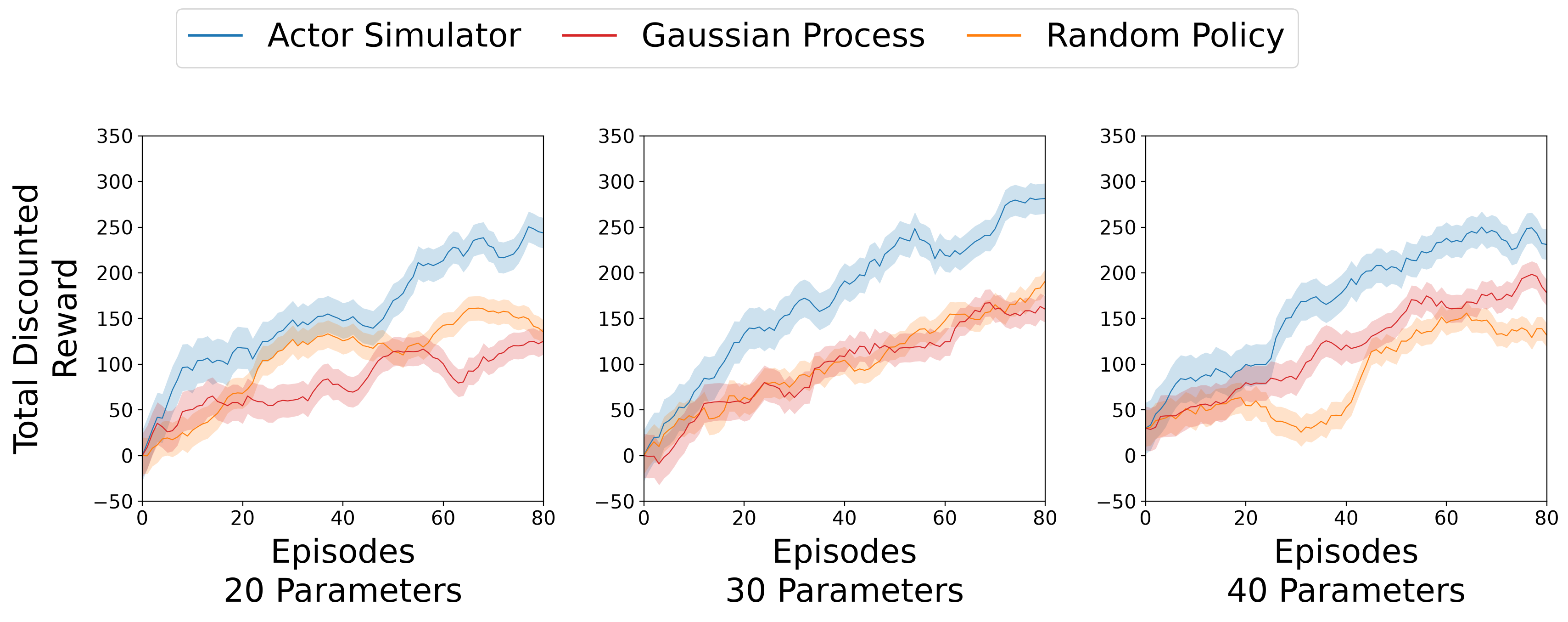}
 \vspace{-0.in}
 \caption{Comparison of policy optimization performance, measured by $J(\widehat{\pi}_n;\mathcal{M}^p)$, for the three methods under 20-, 30-, and 40-parameter calibration settings. Solid curves show means across five training seeds; shaded regions denote pointwise 95\% confidence intervals.}\label{fig: RL performance}
\end{figure}

This improved model fidelity translates into superior control performance. As shown in Figure~\ref{fig: RL performance}, the proposed method consistently achieves the highest total discounted reward. By episode 80, it attains mean rewards of approximately $230$--$280$ across the 20- and 30-parameter settings, compared with roughly $125$--$190$ for the Gaussian Process and Random Policy baselines. The advantage persists in the more challenging 40-parameter case, where the Actor--Simulator reaches about $230$, versus approximately $180$ for the Gaussian Process and $135$ for the Random Policy.

\section{Transition-Level Score Bias and Parameter Bias}
\label{app:score-bias}

We begin with the full transition quasi-score used in the bias calculation. Its $k$-th component is
\begin{align}
\partial_{\theta_k}\ell_{l,i}^j(\bm\theta)
=
(\partial_{\theta_k}m_{1,i}^j)^\top
(m_{2,i}^j)^{-1}\varepsilon_i^j
+\frac12(\varepsilon_i^j)^\top
(m_{2,i}^j)^{-1}(\partial_{\theta_k}m_{2,i}^j)
(m_{2,i}^j)^{-1}\varepsilon_i^j
-\frac12\operatorname{tr}
\!\left((m_{2,i}^j)^{-1}\partial_{\theta_k}m_{2,i}^j\right),
\label{app:eq:quasi-score}
\end{align}
where all moment functions are evaluated at $\bm\theta$. The trajectory score preserves temporal dependence within a trajectory, while the independent sampling unit in the sandwich covariance remains the complete trajectory.

Fix a transition $(i,j)$ and evaluate the truncated moments at $\bm\theta^\ast$. Define
\[
\widetilde M_1=m_{1,i}^j(\bm\theta^\ast),\qquad
\widetilde M_2=m_{2,i}^j(\bm\theta^\ast),\qquad
G_{q,k}=\partial_{\theta_k}m_{q,i}^j(\bm\theta^\ast),
\]
and let $R_1$ and $R_2$ denote the corresponding conditional-moment truncation errors. Conditional on the post-decision state,
\[
\mathbb E[\pmb s_{t_{i+1}}^j\mid\pmb s_{t_i^+}^j]
=\widetilde M_1+R_1,
\qquad
\operatorname{Var}(\pmb s_{t_{i+1}}^j\mid\pmb s_{t_i^+}^j)
=\widetilde M_2+R_2.
\]

Let $\varepsilon := \pmb s_{t_{i+1}}^j-\widetilde M_1$. Then,
\begin{equation}
\begin{aligned}
\mathbb E[\varepsilon\mid\pmb s_{t_i^+}^j]&=R_1,\\
\mathbb E[\varepsilon\varepsilon^\top\mid\pmb s_{t_i^+}^j]
&=\widetilde M_2+R_2+R_1R_1^\top.
\end{aligned}
\label{app:eq:score-residual-moments}
\end{equation}
Equations~\eqref{app:eq:quasi-score} and
\eqref{app:eq:score-residual-moments} cancel the two terms involving
$\widetilde M_2$ and yield
\begin{align}
\delta_{l,i,j}^{(k)}
:={}&
\mathbb E[\partial_{\theta_k}\ell_{l,i}^j(\bm\theta^\ast)\mid\pmb s_{t_i^+}^j]\nonumber\\
={}&
G_{1,k}^\top\widetilde M_2^{-1}R_1
+\frac12\operatorname{tr}
\!\left(\widetilde M_2^{-1}G_{2,k}\widetilde M_2^{-1}R_2\right)
+\frac12\operatorname{tr}
\!\left(\widetilde M_2^{-1}G_{2,k}\widetilde M_2^{-1}R_1R_1^\top\right).
\label{app:eq:transition-score-bias}
\end{align}
Equation~\eqref{app:eq:transition-score-bias} shows that the expected Gaussian quasi-score is unbiased whenever the exact conditional moments are used. Indeed, setting $R_1=R_2=0$ yields $\delta_{l,i,j}^{(k)}=0$ without requiring the transition distribution itself to be Gaussian.

Let $\bm\delta_{l,i,j}$ collect the $K$ components in
Equation~\eqref{app:eq:transition-score-bias}. Summing over all stages
yields Equation~\eqref{eq:population-score-bias}. Applying the fundamental
theorem of calculus to $G_l$ along the segment joining $\bm\theta^\ast$
and $\widetilde{\bm\theta}_l$ gives
\begin{equation}
G_l(\widetilde{\bm\theta}_l)-G_l(\bm\theta^\ast)
=
\overline\Lambda_l
(\widetilde{\bm\theta}_l-\bm\theta^\ast).
\label{app:eq:population-score-mean-value}
\end{equation}
Equation~\eqref{eq:exact-parameter-bias} follows because
$G_l(\widetilde{\bm\theta}_l)=0$ and
$G_l(\bm\theta^\ast)=\bm\delta_l$ make
Equation~\eqref{app:eq:population-score-mean-value} become
\[
-\bm\delta_l
=\overline\Lambda_l
(\widetilde{\bm\theta}_l-\bm\theta^\ast),
\]

\begin{sloppypar}
\vspace{0.1in}
\paragraph{Small-interval scaling}
For observation interval $\Delta t$, Assumption~\ref{Regularity Cond} implies
$R_{1,l},R_{2,l}=\mathcal O((\Delta t)^{l+1})$ uniformly on the relevant state-parameter domain. Moreover, the leading transition covariance satisfies
$\widetilde M_2=\mathcal O(\Delta t)$ and
$\widetilde M_2^{-1}=\mathcal O((\Delta t)^{-1})$.
For every parameter coordinate, $G_{1,k}=\mathcal O(\Delta t)$ and
$G_{2,k}=\mathcal O(\Delta t)$ under the differentiated moment expansion.

The three terms in Equation~\eqref{app:eq:transition-score-bias}
therefore have orders
\begin{equation}
\begin{aligned}
G_{1,k}^\top\widetilde M_2^{-1}R_1
&=\mathcal O((\Delta t)^{l+1}),\\
\operatorname{tr}(\widetilde M_2^{-1}G_{2,k}
\widetilde M_2^{-1}R_2)
&=\mathcal O((\Delta t)^l),\\
\operatorname{tr}(\widetilde M_2^{-1}G_{2,k}
\widetilde M_2^{-1}R_1R_1^\top)
&=\mathcal O((\Delta t)^{2l}).
\end{aligned}
\label{app:eq:score-term-orders}
\end{equation}
Summing Equation~\eqref{app:eq:score-term-orders} over $T$
observation intervals gives
\begin{equation}
\|\bm\delta_l\|=\mathcal O\!\bigl(T(\Delta t)^l\bigr).
\label{app:eq:score-bias-rate}
\end{equation}

Under the full-rank covariance condition $\lambda_{\min}\{\mathcal I_\sigma(\bm\theta^\ast)\}>0$ in
Equation~\eqref{eq:dense-covariance-information} and the uniform convergence assumed in Appendix~\ref{app:dense-sampling-rate}, the population score
sensitivity obeys
\[ -T^{-1}\overline\Lambda_l \to \mathcal I_\sigma(\bm\theta^\ast) \qquad (\Delta t\to0), \] implying \[ \|\overline\Lambda_l^{-1}\| = \mathcal O(T^{-1}). \] Combining this with Equation~\eqref{eq:exact-parameter-bias} yields
\begin{equation}
\|\bm b_{\Delta,l}\| = \mathcal O((\Delta t)^l).
\label{app:eq:parameter-bias-rate}
\end{equation}
Thus an order-$(l+1)$ conditional-moment truncation error induces an order-$l$ parameter bias because the covariance contribution to the score is amplified by the factor $\widetilde M_2^{-1}=\mathcal O((\Delta t)^{-1})$.

\end{sloppypar}

\vspace{0.1in}

\paragraph{Implementable score-bias estimate}
The exact residuals $R_{q,l}$ in Equation~\eqref{app:eq:transition-score-bias}
are unknown. We approximate them using the next term in the generator expansion and evaluate all quantities at the QMLE $\widehat{\bm\theta}_n$:
\begin{equation}
\begin{aligned}
\widehat R_{q,i,j}
&:=\widehat R_{q,l}^{\mathrm{next}}
(\Delta t,\pmb s_{t_i^+}^j,\widehat{\bm\theta}_n)\\
&=\widetilde{\pmb m}_{q,l+1}
(\Delta t,\pmb s_{t_i^+}^j,\widehat{\bm\theta}_n)-\widetilde{\pmb m}_{q,l}
(\Delta t,\pmb s_{t_i^+}^j,\widehat{\bm\theta}_n),\\
\widehat M_{2,i,j}
&:=m_{2,i}^j(\widehat{\bm\theta}_n),\qquad
\widehat G_{q,k,i,j}
:=\partial_{\theta_k}m_{q,i}^j(\widehat{\bm\theta}_n),
\quad q=1,2.
\end{aligned}
\label{app:eq:estimated-score-residuals}
\end{equation}

Equations~\eqref{app:eq:transition-score-bias} and
\eqref{app:eq:estimated-score-residuals} give the estimated contribution
to the $k$th score component:
\begin{align}
\widehat\delta_{l,i,j}^{(k)}
:= \widehat G_{1,k,i,j}^\top
\widehat M_{2,i,j}^{-1}\widehat R_{1,i,j}
+\frac12\operatorname{tr}\!\left(
\widehat M_{2,i,j}^{-1}\widehat G_{2,k,i,j}
\widehat M_{2,i,j}^{-1}\widehat R_{2,i,j}
\right)
+\frac12\operatorname{tr}\!\left(
\widehat M_{2,i,j}^{-1}\widehat G_{2,k,i,j}
\widehat M_{2,i,j}^{-1}
\widehat R_{1,i,j}\widehat R_{1,i,j}^\top
\right).
\label{eq:empirical-transition-score-bias}
\end{align}

Let
$\widehat{\bm\delta}_{l,i,j}
:=(\widehat\delta_{l,i,j}^{(1)},\ldots,
\widehat\delta_{l,i,j}^{(K)})^\top$.
Averaging over the $n$ independent trajectories while retaining all $T$ within-trajectory transitions gives
\begin{equation}
\widehat{\bm\delta}_n
:=\frac1n\sum_{j=1}^{n}\sum_{i=0}^{T-1}
\widehat{\bm\delta}_{l,i,j}.
\label{eq:empirical-score-bias}
\end{equation}
Applying the one-step curvature correction to $\widehat{\bm\delta}_n$ yields the bias estimator $\widehat{\bm b}_n$ defined in Equation~\eqref{eq:empirical-parameter-bias}.

\section{Adjoint Sensitivity Derivations and Proofs}
\label{app:adjoint}

This section derives the backward adjoint equations and proves Proposition~\ref{prop:shared-adjoint-sensitivities} on each intervention-free interval, then composes the sensitivities across decision times for finite-horizon returns. On each interval $[u,v]$, the forward and backward dynamics use the same Brownian realization $\xi$.

\subsection{Regularity Conditions for Adjoint Sensitivities}
\label{app:adjoint-regularity}
The results below use three levels of regularity. First, on each finite intervention-free interval, assume the coefficients of the augmented state equation generate a nonexplosive stochastic flow $\mathcal F_{u,t}$ that is almost surely twice continuously differentiable in $(\pmb s,\bm\theta)$. At fixed $\bm\theta$, its state Jacobian is invertible and its inverse state flow is twice continuously differentiable; the coefficient derivatives in the backward equations exist and the Stratonovich chain and inverse rules apply. Assume also that the terminal function $\rho$ is twice continuously differentiable on the reachable state domain. These are the pathwise-flow conditions used in Proposition~\ref{prop:shared-adjoint-sensitivities}; standard coefficient-level smoothness, growth, and moment conditions that ensure such a flow are given in \cite{kunita2019flowsjump}.

Second, when interval sensitivities are composed across decision times, assume the intervention map $\pmb f$, reward $r$, and policy $\pi$ are twice continuously differentiable on the encountered state--action domain. Finally, to identify derivatives of $J$ or $Q_i^\pi$ with expectations of pathwise derivatives, assume that the first- and second-order derivatives of the corresponding pathwise return, uniformly over a neighborhood of the parameter of interest, are dominated by integrable random variables.

\subsection{Forward and Inverse Flows}
\label{app:adjoint-flows}

\begin{sloppypar}
Let $R$ denote the number of Brownian components, let $\pmb\sigma_k$ be the $k$th column of the diffusion matrix $\pmb\sigma$, and define the Stratonovich drift
\[
\pmb\mu^{\mathrm{Strat}}(\pmb s,\bm\theta)
:=\pmb\mu(\pmb s,\bm\theta)
-\frac12\sum_{k=1}^{R}
D_s\pmb\sigma_k(\pmb s,\bm\theta)\,
\pmb\sigma_k(\pmb s,\bm\theta).
\]
The forward state flow satisfies
\begin{equation}
\begin{aligned}
\Phi_{u,v}(\pmb x;\bm\theta)
={}&\pmb x
+\int_u^v
\pmb\mu^{\mathrm{Strat}}(\Phi_{u,t},\bm\theta)\,dt+\sum_{k=1}^{R}\int_u^v
\pmb\sigma_k(\Phi_{u,t},\bm\theta)\circ dW_t^k.
\end{aligned}
\label{app:eq:forward-flow}
\end{equation}
Under the pathwise-flow conditions in Subsection~\ref{app:adjoint-regularity}, the flow is almost surely a diffeomorphism and satisfies the composition property
\[
\Phi_{u,w}(\pmb x;\bm\theta,\xi)
=
\Phi_{v,w}\!\left(
\Phi_{u,v}(\pmb x;\bm\theta,\xi);
\bm\theta,\xi
\right),
\qquad u\leq v\leq w.
\]
\end{sloppypar}

For a terminal state $\pmb y$, define the inverse flow
\[\psi_{u,v}(\pmb y;\bm\theta,\xi)
:=\Phi_{u,v}(\cdot;\bm\theta,\xi)^{-1}(\pmb y).
\]
Using the same Brownian realization as the forward flow, $\psi_{u,v}$ satisfies the backward Stratonovich equation
\begin{equation}
\begin{aligned}
\psi_{u,v}(\pmb y;\bm\theta)
=\pmb y
-\int_u^v
\pmb\mu^{\mathrm{Strat}}(\psi_{t,v},\bm\theta)\,dt
-\sum_{k=1}^{R}\int_u^v
\pmb\sigma_k(\psi_{t,v},\bm\theta)
\circ d\widetilde W_t^k,
\end{aligned}
\label{app:eq:inverse-flow}
\end{equation}
where $\widetilde W_t=W_t-W_v$ and the stochastic integrals are backward Stratonovich integrals with respect to the backward filtration generated by increments on $[t,v]$. The map $\psi_{u,v}$ uses the same Brownian path as $\Phi_{u,v}$.

\subsection{State and Parameter Sensitivities}
\label{app:adjoint-state-parameter}

Use Jacobians whose rows index output coordinates and define the state and parameter sensitivity matrices
\[
{ F_{u,t}:=D_s\Phi_{u,t}(\pmb s;\bm\theta)
\in\mathbb R^{M\times M},
\qquad
P_{u,t}:=D_\theta\Phi_{u,t}(\pmb s;\bm\theta)
\in\mathbb R^{M\times K}. }
\]
For a coefficient $v$, let $v_s:=D_sv$ and $v_\theta:=D_\theta v$. In the forward sensitivity equations, these derivatives are evaluated at $(\Phi_{u,t}(\pmb s;\bm\theta),\bm\theta)$. Differentiating Equation~\eqref{app:eq:forward-flow} yields
\begin{equation}
\begin{aligned}
F_{u,v}
&=I_M
+\int_u^v\mu_s^{\mathrm{Strat}}(t)F_{u,t}\,dt
+\sum_{k=1}^{R}\int_u^v
\sigma_{k,s}(t)F_{u,t}\circ dW_t^k,\\
P_{u,v}
&=\int_u^v
\left[\mu_s^{\mathrm{Strat}}(t)P_{u,t}
+\mu_\theta^{\mathrm{Strat}}(t)\right]dt+\sum_{k=1}^{R}\int_u^v
\left[\sigma_{k,s}(t)P_{u,t}
+\sigma_{k,\theta}(t)\right]\circ dW_t^k.
\end{aligned}
\label{app:eq:forward-sensitivity-flow}
\end{equation}
The initial conditions are $F_{u,u}=I_M$ and $P_{u,u}=0$. Although $\bm\theta$ is constant in time, $P_{u,t}$ is generally nonzero due to the forcing terms $\mu_\theta^{\mathrm{Strat}}$ and $\sigma_{k,\theta}$.

For the inverse flow $\psi_{u,v}$, define
\[
J_{u,v}(\pmb y;\bm\theta)
:=D_y\psi_{u,v}(\pmb y;\bm\theta),
\qquad
K_{u,v}(\pmb y;\bm\theta):=J_{u,v}(\pmb y;\bm\theta)^{-1}.
\]
In the backward equations below, coefficient derivatives are evaluated at $(\psi_{t,v}(\pmb y;\bm\theta),\bm\theta)$. Differentiating Equation~\eqref{app:eq:inverse-flow} with respect to the terminal state gives
\begin{equation}
J_{u,v}
=I_M
-\int_u^v\mu_s^{\mathrm{Strat}}(t)J_{t,v}\,dt
-\sum_{k=1}^{R}\int_u^v
\sigma_{k,s}(t)J_{t,v}\circ d\widetilde W_t^k.
\label{app:eq:inverse-jacobian}
\end{equation}

The inverse-function identity and the Stratonovich matrix-inverse rule give
\begin{equation}
K_{u,v}
=
D_x\Phi_{u,v}\!\left(
\psi_{u,v}(\pmb y;\bm\theta);\bm\theta
\right),
\qquad
d(J^{-1})=-J^{-1}(dJ)J^{-1}.
\label{app:eq:inverse-function-matrix-rule}
\end{equation}
Since $J_{t,v}K_{t,v}=I_M$, Equations~\eqref{app:eq:inverse-jacobian}
and \eqref{app:eq:inverse-function-matrix-rule} yield
\begin{equation}
K_{u,v}
=I_M
+\int_u^vK_{t,v}\mu_s^{\mathrm{Strat}}(t)\,dt
+\sum_{k=1}^{R}\int_u^v
K_{t,v}\sigma_{k,s}(t)\circ d\widetilde W_t^k.
\label{app:eq:inverse-jacobian-inverse}
\end{equation}

\subsection{Pathwise First- and Second-Order Adjoints}
\label{app:adjoint-pathwise}

For a terminal augmented state $z_v=(\pmb y,\bm\theta)$, define
\[
\mathcal F_{t,v}^{-1}(z_v,\xi)
:=(\psi_{t,v}(\pmb y;\bm\theta,\xi),\bm\theta),
\]
and the endpoint-parametrized adjoint fields
\begin{equation}
\begin{aligned}
\widetilde p_{t,v}(z_v,\xi)
&:=p_{t,v}(\mathcal F_{t,v}^{-1}(z_v,\xi),\xi),\\
\widetilde{\mathscr H}_{t,v}(z_v,\xi)
&:=\mathscr H_{t,v}(\mathcal F_{t,v}^{-1}(z_v,\xi),\xi).
\end{aligned}
\label{app:eq:backward-field-definitions}
\end{equation}
Here $p$ and $\mathscr H$ denote the forward first- and second-order sensitivity fields in Equation~\eqref{eq:augmented-pathwise-sensitivities}. In Equation~\eqref{eq:shared-adjoint-composition}, the backward fields are evaluated at the realized endpoint $\mathcal F_{u,v}(z,\xi)$, whereas the forward derivatives are evaluated at the initial state $z$.

For the scalar functional $\rho$ and augmented state $z=(\pmb s,\bm\theta)$
defined in Section~\ref{sec:adjoint-engine}, the forward chain rule yields the row-vector sensitivity blocks of $D_z\rho(\Phi_{u,v}(\pmb s;\bm\theta))$:
\begin{equation}
\begin{aligned}
A^s_{u,v}(\pmb x;\bm\theta,\xi)
&=D_s\rho(\Phi_{u,v})F_{u,v},\\
A^\theta_{u,v}(\pmb x;\bm\theta,\xi)
&=D_s\rho(\Phi_{u,v})P_{u,v}.
\end{aligned}
\label{app:eq:pathwise-sensitivities}
\end{equation}
At a fixed endpoint $\pmb y$, the inverse-function theorem gives
$\widetilde A^s_{u,v}(\pmb y;\bm\theta)
=D_s\rho(\pmb y)K_{u,v}$.
Note that $\widetilde A^\theta$ is not the total parameter derivative of $\rho(\Phi_{u,v}(\psi_{u,v}(\pmb y)))=\rho(\pmb y)$; it represents only the parameter-sensitivity block of the adjoint. Evaluating coefficient derivatives along $\psi_{t,v}$, the adjoint blocks satisfy
\begin{equation}
\begin{aligned}
\widetilde A^s_{u,v}
={}&D_s\rho(\pmb y)
+\int_u^v\widetilde A^s_{t,v}\mu_s^{\mathrm{Strat}}(t)\,dt
+\sum_{k=1}^{R}\int_u^v
\widetilde A^s_{t,v}\sigma_{k,s}(t)\circ d\widetilde W_t^k,\\
\widetilde A^\theta_{u,v}
={}&\int_u^v\widetilde A^s_{t,v}\mu_\theta^{\mathrm{Strat}}(t)\,dt
+\sum_{k=1}^{R}\int_u^v
\widetilde A^s_{t,v}\sigma_{k,\theta}(t)\circ d\widetilde W_t^k.
\end{aligned}
\label{gradbflow_wA}
\end{equation}

Let $v_0=\pmb\mu^{\mathrm{Strat}}$ and $v_k=\pmb\sigma_k$, and define
the augmented vector fields for $\mathcal F$ by
$V_q(z):=(v_q(\pmb s,\bm\theta),\bm0)$, $q=0,\ldots,R$.
For a fixed endpoint $(\pmb y,\bm\theta)$, write
$z_t:=\mathcal F_{t,v}^{-1}((\pmb y,\bm\theta))
=(\psi_{t,v}(\pmb y;\bm\theta),\bm\theta)$ in this backward-flow calculation.
Multiplying Equation~\eqref{app:eq:inverse-jacobian-inverse} by
$D_s\rho(\pmb y)$ gives the state block of
Equation~\eqref{gradbflow_wA}. For these augmented vector fields, the
coefficient Jacobian has block form
\begin{equation}
D_zV_q(z_t)
=\begin{pmatrix}
D_s v_q(z_t)&D_\theta v_q(z_t)\\
0&0
\end{pmatrix},
\qquad q=0,\ldots,R.
\label{app:eq:augmented-field-jacobian}
\end{equation}
Thus
\begin{equation}
(\widetilde A^s_{t,v},\widetilde A^\theta_{t,v})D_zV_q
=(\widetilde A^s_{t,v}D_s v_q,
\widetilde A^s_{t,v}D_\theta v_q).
\label{app:eq:adjoint-block-product}
\end{equation}
The forward sensitivities in
Equation~\eqref{app:eq:pathwise-sensitivities} are recovered by
composing the backward fields in Equation~\eqref{gradbflow_wA}
with the realized endpoint. Equation~\eqref{app:eq:adjoint-block-product}
gives the two coefficient blocks in the adjoint equations.

For the second-order adjoint, let
$p_{u,v}$ and $\mathscr H_{u,v}$ denote the pathwise gradient and Hessian in Equation~\eqref{eq:augmented-pathwise-sensitivities}. Define the
endpoint-parametrized adjoints $\widetilde p_{t,v}$ and
$\widetilde{\mathscr H}_{t,v}$ from
Equation~\eqref{app:eq:backward-field-definitions}.
With $d=M+K$, define the symmetric matrix operator
\begin{equation}
\mathscr B_q(z;p,H)
:=
D_zV_q(z)^\top H+H D_zV_q(z)
+\sum_{a=1}^{d}p_aD_z^2V_{q,a}(z).
\label{app:eq:second-adjoint-operator}
\end{equation}
Only the first $M$ components of $V_q$ contribute to the final sum.
In particular, if $H$ is partitioned into state and parameter blocks,
the parameter--parameter block of this operator is
\[
[\mathscr B_q(z;p,H)]_{\theta\theta}
=
(D_\theta v_q)^\top H_{s\theta}
+H_{\theta s}D_\theta v_q
+\sum_{a=1}^{M}p_{s,a}D_{\theta\theta}^2v_{q,a},
\]
showing that the parameter Hessian generally depends on the mixed blocks $H_{s\theta}$ and $H_{\theta s}$.

The coupled backward Stratonovich equations are
\begin{equation}
\begin{aligned}
\widetilde p_{u,v}
={}&\begin{pmatrix}\nabla_s\rho(\pmb y)\\\bm0\end{pmatrix}
+\int_u^v D_zV_0(z_t)^\top\widetilde p_{t,v}\,dt
+\sum_{k=1}^{R}\int_u^v
D_zV_k(z_t)^\top\widetilde p_{t,v}
\circ d\widetilde W_t^k,\\
\widetilde{\mathscr H}_{u,v}
={}&\operatorname{diag}(\nabla_s^2\rho(\pmb y),0_{K\times K})
+\int_u^v
\mathscr B_0(z_t;\widetilde p_{t,v},
\widetilde{\mathscr H}_{t,v})\,dt\\
&+\sum_{k=1}^{R}\int_u^v
\mathscr B_k(z_t;\widetilde p_{t,v},
\widetilde{\mathscr H}_{t,v})
\circ d\widetilde W_t^k.
\end{aligned}
\label{app:eq:backward-second-adjoint}
\end{equation}
To derive Equation~\eqref{app:eq:backward-second-adjoint}, use the flow
composition
$\mathcal F_{u,v}=\mathcal F_{t,v}\circ\mathcal F_{u,t}$.
For a smooth scalar $h$ and vector map $G$, the second-order chain rule is
\begin{equation}
D_z^2(h\circ G)
=D_zG^\top D_z^2h(G)D_zG
+\sum_{a=1}^{d}\partial_a h(G)D_z^2G_a.
\label{app:eq:second-order-chain-rule}
\end{equation}
Apply it with $h(z)=\rho(\Phi_{t,v}(\pmb s;\bm\theta))$ and
$G=\mathcal F_{u,t}$. For this forward-flow calculation, write
$\zeta_t:=\mathcal F_{u,t}(z_u)$ and let
$\Gamma_{u,t}:=D_z\mathcal F_{u,t}(z_u)$ and
$\Xi_{u,t}^{(a)}:=D_z^2\mathcal F_{u,t}^{,a}(z_u)$.
Flow composition and Equation~\eqref{app:eq:second-order-chain-rule} give the
exact identities
\begin{equation}
\begin{aligned}
p_{u,v}(z_u)
&=\Gamma_{u,t}^\top p_{t,v}(\zeta_t),\\
\mathscr H_{u,v}(z_u)
&=\Gamma_{u,t}^\top\mathscr H_{t,v}(\zeta_t)\Gamma_{u,t}
+\sum_{a=1}^{d}p_{t,v,a}(\zeta_t)\Xi_{u,t}^{(a)}.
\end{aligned}
\label{app:eq:second-adjoint-chain-rule}
\end{equation}
The Stratonovich variational equations for the two flow derivatives are
\begin{equation}
\begin{aligned}
d\Gamma_{u,t}
&=D_zV_0(\zeta_t)\Gamma_{u,t}\,dt
+\sum_{k=1}^{R}D_zV_k(\zeta_t)\Gamma_{u,t}\circ dW_t^k,\\
d\Xi_{u,t}^{(a)}
&=\left\{\sum_{b=1}^{d}\partial_bV_{0,a}(\zeta_t)
\Xi_{u,t}^{(b)}
+\Gamma_{u,t}^\top D_z^2V_{0,a}(\zeta_t)\Gamma_{u,t}\right\}dt\\
&+\sum_{k=1}^{R}\left\{
\sum_{b=1}^{d}\partial_bV_{k,a}(\zeta_t)
\Xi_{u,t}^{(b)}
+\Gamma_{u,t}^\top D_z^2V_{k,a}(\zeta_t)\Gamma_{u,t}
\right\}\circ dW_t^k.
\end{aligned}
\label{app:eq:second-order-variational-flow}
\end{equation}
Compose Equations~\eqref{app:eq:second-adjoint-chain-rule} and
\eqref{app:eq:second-order-variational-flow} with the
map $\mathcal F_{t,v}^{-1}$ at the fixed endpoint $(\pmb y,\bm\theta)$. The Stratonovich chain
rule gives $D_zV_q^\top H+H D_zV_q$ from the two Jacobian factors
and $\sum_a p_aD_z^2V_{q,a}$ from the second flow derivative. This
is exactly Equation~\eqref{app:eq:second-adjoint-operator} for the
drift and each diffusion vector field, proving
Equation~\eqref{app:eq:backward-second-adjoint}. At the realized endpoint
$z_v=\mathcal F_{u,v}(z_u,\xi)$, the backward fields therefore satisfy
\[
\begin{aligned}
\widetilde p_{u,v}(z_v,\xi)
&=\left.\nabla_z[\rho(\Phi_{u,v}(\pmb s;\bm\theta,\xi))]\right|_{z=z_u},\\
\widetilde{\mathscr H}_{u,v}(z_v,\xi)
&=\left.D_z^2[\rho(\Phi_{u,v}(\pmb s;\bm\theta,\xi))]\right|_{z=z_u},
\end{aligned}
\]proving Proposition~\ref{prop:shared-adjoint-sensitivities}.

The $\theta$ block of $p_{u,v}$ is the transpose of
$A^\theta_{u,v}$ in Equation~\eqref{app:eq:pathwise-sensitivities}.
With $P_{u,v}:=D_\theta\Phi_{u,v}$, the ordinary second-order chain rule gives
\begin{equation}
\begin{aligned}
\mathcal H_{u,v}
:=D_\theta^2[\rho(\Phi_{u,v})]
=P_{u,v}^\top\nabla_s^2\rho(\Phi_{u,v})P_{u,v}
+\sum_{j=1}^{M}
\frac{\partial\rho}{\partial s_j}(\Phi_{u,v})
D_\theta^2\Phi_{u,v}^{\,j}.
\end{aligned}
\label{app:eq:pathwise-hessian}
\end{equation}
Thus $[\mathscr H_{u,v}]_{\theta\theta}=\mathcal H_{u,v}$.


\subsection{Finite-Horizon and Expected Sensitivities}
\label{app:adjoint-finite-horizon}

For a fixed state--action pair $(\pmb s,\pmb a)$ at decision time $t_i$, define the conditional pathwise return
\begin{equation}
\mathcal R_i^\pi(\pmb s,\pmb a;\bm\theta,\xi)
:=\sum_{k=i}^{T-1}\gamma^{k-i}
r(\pmb s_{t_k}^{(i)},\pmb a_{t_k}^{(i)}),
\label{app:eq:conditional-pathwise-return}
\end{equation}
where $(\pmb s_{t_i}^{(i)},\pmb a_{t_i}^{(i)})=(\pmb s,\pmb a)$ and $\pmb a_{t_k}^{(i)}=\pi(\pmb s_{t_k}^{(i)})$ for $k>i$. 
For the finite-horizon return, compose interval flows with the
deterministic post-decision maps. On the augmented state, define
\[
\mathcal D_i(\pmb s,\bm\theta)
:=
(\pmb f(\pmb s,\pi(\pmb s)),\bm\theta),
\qquad
c_i(\pmb s,\bm\theta)
:=\gamma^i r(\pmb s,\pi(\pmb s)).
\]
Let $L_i^+(z^+)$ denote the remaining pathwise return from the
post-decision augmented state $z^+$, and define
\[
p^+:=\left.\nabla_{z^+}L_i^+(z^+)\right|_{z^+=\mathcal D_i(z)},
\qquad
H^+:=\left.D_{z^+}^2L_i^+(z^+)\right|_{z^+=\mathcal D_i(z)}.
\]
Applying the first- and second-order chain rules to $c_i(z)+L_i^+(\mathcal D_i(z))$ gives
\begin{equation}
\begin{aligned}
p^-&=\nabla_z c_i+D_z\mathcal D_i^\top p^+,\\
H^-&=D_z^2c_i
+D_z\mathcal D_i^\top H^+D_z\mathcal D_i
+\sum_{a=1}^{M+K}p_a^+D_z^2\mathcal D_{i,a}.
\end{aligned}
\label{app:eq:second-adjoint-decision-jump}
\end{equation}
At ordinary decision times, the state Jacobian of the decision map is $f_s+f_aD_s\pi$. For the conditional rollout defining $Q_i^\pi$, replace $\pi(\pmb s)$ at stage $i$ by the fixed action $\pmb a$ and use the discount powers in Equation~\eqref{eq:conditional-Q}; subsequent decisions follow $\pi$.

Applying Equations~\eqref{app:eq:backward-second-adjoint} and \eqref{app:eq:second-adjoint-decision-jump} across a realized trajectory gives the derivatives of the full return $\mathcal R^\pi$ and conditional return $\mathcal R_i^\pi$. 

Under the integrable-domination condition in Subsection~\ref{app:adjoint-regularity}, differentiation and expectation can be interchanged. For the single-interval terminal functional, this gives
\begin{equation}
\begin{aligned}
D_s\mathbb E_\xi[\rho(\Phi_{u,v})]
&=\mathbb E_\xi[A^s_{u,v}],\\
D_\theta\mathbb E_\xi[\rho(\Phi_{u,v})]
&=\mathbb E_\xi[A^\theta_{u,v}],\\
\nabla_\theta^2\mathbb E_\xi[\rho(\Phi_{u,v})]
&=\mathbb E_\xi[\mathcal H_{u,v}],
\end{aligned}
\label{app:eq:expected-adjoint-sensitivities}
\end{equation}
Applying the same interchange to the full return and averaging its
pathwise derivatives over $M_{\mathrm{MC}}$ independent rollouts gives
the Monte Carlo estimators:
\begin{equation}
\begin{aligned}
\widehat g_n^\pi
&:=\frac1{M_{\mathrm{MC}}}\sum_{r=1}^{M_{\mathrm{MC}}}
\nabla_\theta\mathcal R^\pi(\widehat{\bm\theta}_n,\xi_r),\\
\widehat H_n^\pi
&:=\frac1{M_{\mathrm{MC}}}\sum_{r=1}^{M_{\mathrm{MC}}}
D_\theta^2\mathcal R^\pi(\widehat{\bm\theta}_n,\xi_r).
\end{aligned}
\label{app:eq:full-return-MC}
\end{equation}
The corresponding conditional-return estimators are
\begin{equation}
\begin{aligned}
\widehat{\bm g}_{n,i}^\pi(\pmb s,\pmb a)
&:=\frac1{M_{\mathrm{MC}}}\sum_{r=1}^{M_{\mathrm{MC}}}
\nabla_\theta\mathcal R_i^\pi
(\pmb s,\pmb a;\widehat{\bm\theta}_n,\xi_r),\\
\widehat H_{n,i}^\pi(\pmb s,\pmb a)
&:=\frac1{M_{\mathrm{MC}}}\sum_{r=1}^{M_{\mathrm{MC}}}
D_\theta^2\mathcal R_i^\pi
(\pmb s,\pmb a;\widehat{\bm\theta}_n,\xi_r).
\end{aligned}
\label{app:eq:conditional-MC}
\end{equation}


\subsection{Proof of Proposition~\ref{prop:adjoint-size}}
\label{app:adjoint-size-proof}

Fix a Brownian realization and consider the arithmetic required to propagate sensitivities through a single numerical SDE step. For each vector field $v_q$, $q=0,\ldots,R$, define
$A_q:=D_s v_q\in\mathbb R^{M\times M}$ and
$B_q:=D_\theta v_q\in\mathbb R^{M\times K}$.
The coefficient derivatives and the forward state are treated as
available inputs.

For first-order forward sensitivities,
$P_t:=D_\theta\pmb s_t\in\mathbb R^{M\times K}$ satisfies the update
$A_qP_t+B_q$ in Equation~\eqref{app:eq:forward-sensitivity-flow}.
The matrix product $A_qP_t$ requires $\mathcal O(M^2K)$ operations per
field. In contrast, the backward adjoint update in
Equation~\eqref{gradbflow_wA} involves the vector--Jacobian products
$(\widetilde A^s A_q,\widetilde A^s B_q)$ from
Equation~\eqref{gradbflow_wA}, which require only $\mathcal O(M^2+MK)$ operations.
Moreover, all $K$ parameter-gradient components are obtained in the same backward sweep. The adjoint pair has $M+K$ entries, whereas the
forward parameter-sensitivity matrix $P_t$ has $MK$ entries; this count
excludes storage or replay of the forward path.

For second-order forward sensitivities, let
$\tau_t^{(a)}:=D_\theta^2s_t^a\in\mathbb R^{K\times K}$,
$a=1,\ldots,M$, and define
$U_t:=\bigl(P_t^\top,I_K\bigr)^\top\in\mathbb R^{(M+K)\times K}$.
Let $\mathcal T_t\in\mathbb R^{M\times K^2}$ have row $a$ equal to
$\operatorname{vec}(\tau_t^{(a)})^\top$.
The second-order variational update contains the terms $A_q\mathcal T_t$ and the $M$ contractions
$U_t^\top D_z^2v_{q,a}U_t$. 
Their per-field costs satisfy
\begin{equation}
\mathcal O(M^2K^2)+\mathcal O\bigl(MK(M+K)^2\bigr)
=\mathcal O\bigl(MK(M+K)^2\bigr).
\label{app:eq:forward-hessian-cost}
\end{equation}

For the backward second-order adjoint, $\widetilde{\mathscr H}_t\in\mathbb R^{(M+K)\times(M+K)}$ is propagated through the operator $\mathscr B_q$ in Equation~\eqref{app:eq:second-adjoint-operator}. Since $D_zV_q$ has only its first $M$ rows potentially nonzero, each Hessian--Jacobian product \[ D_zV_q^\top \widetilde{\mathscr H}_t, \qquad \widetilde{\mathscr H}_t D_zV_q, \] requires $\mathcal O(M(M+K)^2)$ operations. The curvature term \[ \sum_{a=1}^{M} p_a D_z^2V_{q,a} \] has the same order. Hence each backward second-order update costs \[ \mathcal O\!\left( M(M+K)^2 \right) \] per vector field.

Summing the first- and second-order costs over the $R+1$ vector fields,
including Equation~\eqref{app:eq:forward-hessian-cost}, gives the
per-step bounds in Proposition~\ref{prop:adjoint-size}.


\section{Gaussian Calibration-Error Approximation}
\label{app:gaussian-error}

Equation~\eqref{eq:gaussian-error-surrogate} specifies
$\widetilde{\bm e}_n\mid\mathcal D_n
\sim N(\widehat{\bm b}_n,\widehat\Sigma_n)$.
It can be simulated as
\begin{equation}
\widetilde{\bm e}_n
=\widehat{\bm b}_n+\widehat\Sigma_n^{1/2}Z,
\qquad Z\sim N(\bm0,I_K),
\label{app:eq:gaussian-error-draw}
\end{equation}
where $Z$ is independent of $\mathcal D_n$ and
$\widehat\Sigma_n^{1/2}$ is any positive-semidefinite square root.
To approximate Equation~\eqref{eq:implementable-uncertainty}, use the
derivatives in Equation~\eqref{app:eq:conditional-MC} and
$M_{\mathrm{err}}$ independent draws $\widetilde{\bm e}_{n,m}$ from
Equation~\eqref{app:eq:gaussian-error-draw}. The resulting Monte Carlo
score is
\begin{equation}
\begin{aligned}
\widehat u_{n,i}^{\pi,\mathrm{MC}}(\pmb s,\pmb a)
:=\frac1{M_{\mathrm{err}}}
\sum_{m=1}^{M_{\mathrm{err}}}\Bigl|
\bigl(\widehat{\bm g}_{n,i}^\pi(\pmb s,\pmb a)\bigr)^\top
\widetilde{\bm e}_{n,m} +\frac12\widetilde{\bm e}_{n,m}^\top
\widehat H_{n,i}^\pi(\pmb s,\pmb a)
\widetilde{\bm e}_{n,m}\Bigr|.
\end{aligned}
\label{eq:uncertainty-MC}
\end{equation}

For fixed gradient $\bm g$ and symmetric Hessian $H$, let
$q=\bm g^\top\widetilde{\bm e}_n
+\tfrac12\widetilde{\bm e}_n^\top H\widetilde{\bm e}_n$.
The Gaussian moment identities \citep{magnus1978moments} give
\begin{equation}
\begin{aligned}
\mathbb E[q\mid\mathcal D_n]
&=\bm g^\top\widehat{\bm b}_n
+\tfrac12\widehat{\bm b}_n^\top H\widehat{\bm b}_n
+\tfrac12\operatorname{tr}(H\widehat\Sigma_n),\\
\operatorname{Var}(q\mid\mathcal D_n)
&=(\bm g+H\widehat{\bm b}_n)^\top\widehat\Sigma_n
(\bm g+H\widehat{\bm b}_n)
+\tfrac12\operatorname{tr}(H\widehat\Sigma_nH\widehat\Sigma_n).
\end{aligned}
\label{app:eq:gaussian-quadratic-moments}
\end{equation}
Exploration uses $\mathbb E[|q|\mid\mathcal D_n]$, as in
Equation~\eqref{eq:implementable-uncertainty}.

For the full return, set
$\bm g=\nabla_\theta J(\pi;\widehat{\bm\theta}_n)$ and
$H=\nabla_\theta^2J(\pi;\widehat{\bm\theta}_n)$.
If $J(\pi;\widehat{\bm\theta}_n+\widetilde{\bm e}_n)$ is integrable,
Taylor's theorem gives
\begin{equation}
\begin{aligned}
J_n^{\mathrm G}(\pi)
&=J(\pi;\widehat{\bm\theta}_n)
+\mathbb E[q\mid\mathcal D_n]
+\mathbb E[R_{3,n}^{\pi}(\widetilde{\bm e}_n)
\mid\mathcal D_n],\\
R_{3,n}^{\pi}(\bm e)
&:=J(\pi;\widehat{\bm\theta}_n+\bm e)
-J(\pi;\widehat{\bm\theta}_n)
-\bm g^\top\bm e-\tfrac12\bm e^\top H\bm e.
\end{aligned}
\label{app:eq:gaussian-return-taylor}
\end{equation}
Equation~\eqref{eq.policyOptimizationObjective} retains the first two
terms of Equation~\eqref{app:eq:gaussian-return-taylor}. Thus, if the
third derivatives of $J$ are uniformly bounded along the stated line
segments,
\begin{equation}
\begin{aligned}
|J_n^{\mathrm G}(\pi)-\bar J_n(\pi)|
&=\left|\mathbb E[R_{3,n}^{\pi}(\widetilde{\bm e}_n)
\mid\mathcal D_n]\right|\\
&\le C\,\mathbb E[\|\widetilde{\bm e}_n\|^3
\mid\mathcal D_n].
\end{aligned}
\label{app:eq:gaussian-return-remainder}
\end{equation}


\section{Actor--Simulator Algorithm}
\label{sec:algorithm}

Algorithm~\ref{actorsimulatoralgo} combines trajectory acquisition, calibration, and Gaussian-averaged target-policy optimization using the shared adjoint sensitivities.

\refstepcounter{breakablealgorithm}\label{actorsimulatoralgo}
\begingroup
\makeatletter
\def\@captype{figure}
\@makecaption{Algorithm~\thebreakablealgorithm}{\textbf{Bias-Aware Actor--Simulator for Digital-Twin Calibration and Policy Optimization.} Each iteration uses trajectory-level adjoint sensitivities to guide a physical experiment, updates the calibrated model and its uncertainty, and optimizes the next target policy.}
\makeatother
\endgroup
\par\smallskip\noindent\rule{\linewidth}{0.4pt}

\algrenewcommand{\algorithmicrequire}{\textbf{Input:}}
\algrenewcommand{\algorithmicensure}{\textbf{Output:}}
\begin{algorithmic}[1]
\Require Trajectory budget $N_{\max}$, pilot size $n_0$, reward $r$, moment order $l$, observation interval $\Delta t$, simulation step $h_{\rm sim}$, sensitivity sample size $M_{\rm MC}$, and calibration-error sample size $M_{\rm err}$.

\Ensure Calibrated parameters $\widehat{\bm\theta}_{N_{\max}}$ and target policy $\widehat\pi_{N_{\max}}$.
\State Collect $n_0$ complete pilot trajectories to form $\mathcal D_{n_0}$ and fit $\widehat{\bm\theta}_{n_0}$.
\State Compute $\widehat C_{l,n_0}$ by Equation~\eqref{eq:uncertainty-covariance}; set $\widehat\Sigma_{n_0}=\widehat C_{l,n_0}/n_0$.
\State Compute $\widehat{\bm\delta}_{n_0}$ by Equation~\eqref{eq:empirical-score-bias}; set $\widehat{\bm b}_{n_0}=\widehat\Lambda_{l,n_0}^{-1}\widehat{\bm\delta}_{n_0}$.
\State Set $\widehat\pi_{n_0}\in\arg\max_{\pi\in\Pi}\bar J_{n_0}(\pi)$.
\For{$n=n_0+1,\ldots,N_{\max}$}
    \Statex \hspace*{\algorithmicindent}\textbf{\makebox[2.4em][l]{(i)}Trajectory-Level Adjoint Sensitivity Analysis}
    \State On each simulated trajectory, propagate the stochastic adjoints $\widetilde p$ and $\widetilde{\mathscr H}$ backward to evaluate the gradient and Hessian estimates required in Steps~(ii) and~(iv).

    \Statex \textbf{\makebox[2.4em][l]{(ii)}Policy-Relevant Digital Twin Calibration}
    \State Generate $\pmb s_{t_0}^n\sim p_0$.
    \State Draw $M_{\mathrm{err}}$ independent calibration-error samples from $N(\widehat{\bm b}_{n-1},\widehat\Sigma_{n-1})$ as in Equation~\eqref{eq:uncertainty-MC}.
    \For{$i=0,\ldots,T-1$}
        \State For each candidate action $\pmb a$, perform conditional rollouts from $(\pmb s_{t_i}^n,\pmb a)$ and follow $\widehat\pi_{n-1}$ thereafter.
        \State Use Step~(i) to evaluate $\widehat{\bm g}_{n-1,i}^{\widehat\pi_{n-1}}$ and $\widehat H_{n-1,i}^{\widehat\pi_{n-1}}$ by Equation~\eqref{app:eq:conditional-MC}.
        \State Compute $\widehat u_{n-1,i}^{\widehat\pi_{n-1},\mathrm{MC}}$ by Equation~\eqref{eq:uncertainty-MC}.
        \State Select and execute the action:
        $\displaystyle
        \begin{aligned}[t]
        \pmb a_{t_i}^n&=\eta_{n,i}(\pmb s_{t_i}^n)\in\arg\max_{\pmb a\in\mathcal A}
        \widehat u_{n-1,i}^{\widehat\pi_{n-1},\mathrm{MC}}(\pmb s_{t_i}^n,\pmb a).
        \end{aligned}$
    \EndFor
    \State Append the resulting trajectory $\pmb\tau_n$ to update $\mathcal D_n=\mathcal D_{n-1}\cup\{\pmb\tau_n\}$.
    \Statex \textbf{\makebox[2.4em][l]{(iii)}Bias-Aware Model Estimation}
    \State Estimate model parameters: $\displaystyle
    \widehat{\bm\theta}_n
    \in\arg\max_{\bm\theta\in\Theta}\frac1n\sum_{j=1}^n\ell_l(\pmb\tau_j;\bm\theta).$
    \State Update $\widehat C_{l,n}$ using Equation~\eqref{eq:uncertainty-covariance} and set $\widehat\Sigma_n=\widehat C_{l,n}/n$.
    \State Implement Equation~\eqref{eq:empirical-parameter-bias}: evaluate $\widehat{\bm\delta}_n$ using Equation~\eqref{eq:empirical-score-bias}, then set $\widehat{\bm b}_n=\widehat\Lambda_{l,n}^{-1}\widehat{\bm\delta}_n$.
    \Statex \textbf{\makebox[2.4em][l]{(iv)}Bias-Corrected Policy Optimization}
    \State For each candidate policy $\pi$, use Step~(i) to evaluate $\widehat g_n^\pi$ and $\widehat H_n^\pi$ by Equation~\eqref{app:eq:full-return-MC}.
    \State Approximate the objective by
    \Statex $\displaystyle
    \begin{aligned}[t]
    \bar J_n(\pi)\approx {}&J(\pi;\widehat{\bm\theta}_n)
    +(\widehat g_n^\pi)^\top\widehat{\bm b}_n+\frac12\widehat{\bm b}_n^\top\widehat H_n^\pi\widehat{\bm b}_n
    +\frac12\operatorname{tr}(\widehat H_n^\pi\widehat\Sigma_n).
    \end{aligned}$
    \State Update $\widehat\pi_n\in\arg\max_{\pi\in\Pi}\bar J_n(\pi)$.
\EndFor
\end{algorithmic}
\noindent\rule{\linewidth}{0.4pt}


\section{Dense-Sampling Information and QMLE Rate}
\label{app:dense-sampling-rate}

This section gives sufficient conditions for the sampling rate used
in Section~\ref{sec:convergence}. Keep the physical duration
$t_T-t_0=T\Delta t$ fixed while $n,T\to\infty$, and let
$x_{i,j}:=\pmb s_{t_i^+}^j$ and $Y_{i,j}:=\pmb s_{t_{i+1}}^j$.
All expectations below are under the true parameter and $\eta_T$. The parameter dimension and expansion order $l\geq1$ do not
increase with $n$ or $T$.

For comparison with the order-$l$ criterion, use the exact moments
$\pmb M_1$ and $\pmb M_2$ from
Equation~\eqref{eq:exact-conditional-moments} in the quasi-likelihood
of Equation~\eqref{eq:transition-quasilikelihood}; call the resulting
criterion $\ell_{0,i}^j(\bm\theta)$.
At $\bm\theta^\ast$, define
\begin{equation}
\begin{aligned}
Z_{i,j}&:=
\pmb M_2(\Delta t,x_{i,j},\bm\theta^\ast)^{-1/2}
\{Y_{i,j}-\pmb M_1(\Delta t,x_{i,j},\bm\theta^\ast)\},\\
c_{k,i,j}&:=
\pmb M_2^{-1/2}\partial_{\theta_k}\pmb M_1,
\qquad
D_{k,i,j}:=
\pmb M_2^{-1/2}
(\partial_{\theta_k}\pmb M_2)
\pmb M_2^{-1/2},
\end{aligned}
\label{app:eq:dense-standardization}
\end{equation}
where all moment quantities on the second line are evaluated at
$(\Delta t,x_{i,j},\bm\theta^\ast)$.
A direct differentiation of the exact-moment Gaussian criterion, followed by whitening through $\pmb M_2^{-1/2}$, yields
\begin{equation}
\partial_{\theta_k}\ell_{0,i}^j(\bm\theta^\ast)
=c_{k,i,j}^\top Z_{i,j}
+\frac12\{Z_{i,j}^\top D_{k,i,j}Z_{i,j}
-\operatorname{tr}(D_{k,i,j})\}.
\label{app:eq:dense-score-decomposition}
\end{equation}
The exact conditional moments give
\begin{equation}
\mathbb E[Z_{i,j}\mid x_{i,j}]=0,
\qquad
\mathbb E[Z_{i,j}Z_{i,j}^\top\mid x_{i,j}]=I.
\label{app:eq:dense-standardized-moments}
\end{equation}
Thus both terms in Equation~\eqref{app:eq:dense-score-decomposition}
have conditional mean zero given the state--action history through $x_{i,j}$.

We impose the following joint dense-sampling conditions. The diffusion covariance $\mathsf a(x;\bm\theta)$ is uniformly positive definite on the visited state--parameter domain. Uniformly in an integrated $L^4$ sense over this domain, the exact conditional moments and their first two parameter derivatives satisfy
\begin{equation}
\pmb M_1=x+\pmb\mu(x,\bm\theta)\Delta t
+\mathcal O((\Delta t)^2),\qquad
\pmb M_2=\mathsf a(x;\bm\theta)\Delta t
+\mathcal O((\Delta t)^2).
\label{app:eq:dense-moment-expansions}
\end{equation}
The same leading covariance expansion holds for the order-$l$ approximation in Equation~\eqref{eq:moment-expansion-summary}. Indeed, the generator identities \[ \mathcal L_{\bm\theta}\pmb s = \pmb\mu(\pmb s,\bm\theta), \qquad \mathcal L_{\bm\theta}(\pmb s\pmb s^\top) = \pmb s\pmb\mu^\top +\pmb\mu\pmb s^\top +\mathsf a(\pmb s;\bm\theta) \] imply that the drift cross terms cancel when the outer product of the approximate first moment is subtracted from the approximate raw second moment.
Consequently, \[ \widetilde{\pmb m}_{2,l} = \mathsf a(x;\bm\theta)\Delta t +\mathcal O((\Delta t)^2). \]

Uniform parameter-differentiability and positive definiteness give
\[
\partial_{\theta_k}\pmb M_2
=(\partial_{\theta_k}\mathsf a)\Delta t
+\mathcal O((\Delta t)^2),
\qquad
\pmb M_2^{-1}
=(\Delta t)^{-1}\mathsf a^{-1}+\mathcal O(1).
\]
Multiplication yields
\begin{equation}
\pmb M_2^{-1}\partial_{\theta_k}\pmb M_2
=\mathsf a^{-1}\partial_{\theta_k}\mathsf a
+\mathcal O(\Delta t).
\label{app:eq:dense-covariance-derivative}
\end{equation}

The conditional standardized third moment of $Z_{i,j}$ is $\mathcal O((\Delta t)^{1/2})$, its fourth moment equals the Gaussian fourth moment up to $\mathcal O(\Delta t)$, and the exact-score array satisfies a conditional Lindeberg condition. Moreover, \[ c_{k,i,j} = \pmb M_2^{-1/2}\partial_{\theta_k}\pmb M_1 = \mathcal O((\Delta t)^{1/2}), \] so the mean component in Equation~\eqref{app:eq:dense-score-decomposition} contributes only $\mathcal O(\Delta t)$ information per transition. Under the fixed-horizon scaling $T\Delta t=t_T-t_0$, this contribution vanishes after normalization by $nT$. 

Therefore, the leading dense-sampling information arises from the covariance component.
Define 
\[ \mathsf a_{i,j} := \mathsf a(x_{i,j};\bm\theta^\ast), \qquad \mathsf a_{i,j,k} := \left. \partial_{\theta_k}\mathsf a(x_{i,j};\bm\theta) \right|_{\bm\theta=\bm\theta^\ast}. 
\] 
The empirical average of the leading conditional information obeys,
for every $k,h$, \begin{equation} \frac{1}{2nT} \sum_{j=1}^{n} \sum_{i=0}^{T-1} \operatorname{tr}\!\left\{ \mathsf a_{i,j}^{-1}\mathsf a_{i,j,k}\, \mathsf a_{i,j}^{-1}\mathsf a_{i,j,h} \right\} \xrightarrow{p} \bigl[\mathcal I_\sigma(\bm\theta^\ast)\bigr]_{kh}. \label{app:eq:dense-information-lln} \end{equation} We further assume \[ \lambda_{\min}\{\mathcal I_\sigma(\bm\theta^\ast)\}>0. \]

To transfer the exact-moment analysis to the order-$l$ criterion, define
the score perturbation
\begin{equation}
\bm D_{i,j}^{(l)}
:=
\nabla_\theta\ell_{l,i}^j(\widetilde{\bm\theta}_l)
-\nabla_\theta\ell_{0,i}^j(\bm\theta^\ast),
\label{app:eq:dense-score-perturbation-definition}
\end{equation}
which captures both the differentiated moment-truncation remainder and the displacement of the pseudo-true parameter $\widetilde{\bm\theta}_l$ from $\bm\theta^\ast$.
Assume that these two effects satisfy the uniform bound \begin{equation} \sup_{i,j} \mathbb E\|\bm D_{i,j}^{(l)}\|^2 \le C(\Delta t)^{2l}. \label{app:eq:dense-score-perturbation} \end{equation}

Also assume joint consistency
$\widehat{\bm\theta}_n-\widetilde{\bm\theta}_l=o_p(1)$ and a uniform
law of large numbers for the normalized observed Hessian on a shrinking
neighborhood of $\widetilde{\bm\theta}_l$ containing
$\widehat{\bm\theta}_n$ with probability tending to one. Specifically,
if $\mathcal N_{n,T}$ is such a neighborhood, require
\begin{equation}
\sup_{\bm\theta\in\mathcal N_{n,T}}
\left\|
-\frac1{nT}\sum_{j=1}^n
\nabla_\theta^2\ell_l(\pmb\tau_j;\bm\theta)
-\mathcal I_\sigma(\bm\theta^\ast)
\right\|
\xrightarrow{p}0.
\label{app:eq:dense-hessian-lln}
\end{equation}
The moment expansions and parameter-derivative bounds imply that the difference between the truncated and exact expected Hessians is $\mathcal O((\Delta t)^l)$ per transition. Condition \eqref{app:eq:dense-hessian-lln} strengthens this deterministic approximation by controlling the corresponding centered empirical fluctuations uniformly over a neighborhood of $\widetilde{\bm\theta}_l$.

\begin{proposition}[Joint dense-sampling QMLE rate]
\label{app:prop:dense-qmle}
Under the preceding joint-limit conditions, including the
conditional Lindeberg condition and
Equations~\eqref{app:eq:dense-information-lln}--\eqref{app:eq:dense-hessian-lln},
suppose $\widetilde{\bm\theta}_l$ and $\widehat{\bm\theta}_n$ are
interior points of the parameter space with probability tending to one.
Then
\begin{equation}
\sqrt{nT}\,
(\widehat{\bm\theta}_n-\widetilde{\bm\theta}_l)
\xrightarrow{d}
N\!\left(\bm0,\mathcal I_\sigma(\bm\theta^\ast)^{-1}\right).
\label{app:eq:dense-qmle-limit}
\end{equation}
If, in addition, the empirical trajectory-score outer product divided
by $T$ satisfies a triangular-array law of large numbers, then
\begin{equation}
\widehat C_{l,n}
=T^{-1}\mathcal I_\sigma(\bm\theta^\ast)^{-1}
+o_p(T^{-1}).
\label{app:eq:dense-sandwich-limit}
\end{equation}
\end{proposition}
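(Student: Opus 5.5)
The plan is a standard M-estimation argument in which the normalization $\sqrt n$ is replaced by $\sqrt{nT}$. Write $S_n(\bm\theta):=\sum_{j=1}^n\nabla_\theta\ell_l(\pmb\tau_j;\bm\theta)$. Because $\widehat{\bm\theta}_n$ is interior with probability tending to one, the first-order condition $S_n(\widehat{\bm\theta}_n)=\bm0$ holds. A mean-value expansion around $\widetilde{\bm\theta}_l$, taken row by row, gives
\[
\bm0=S_n(\widetilde{\bm\theta}_l)+\Bigl[\int_0^1\nabla_\theta S_n\bigl(\widetilde{\bm\theta}_l+u(\widehat{\bm\theta}_n-\widetilde{\bm\theta}_l)\bigr)du\Bigr](\widehat{\bm\theta}_n-\widetilde{\bm\theta}_l).
\]
Consistency, $\widehat{\bm\theta}_n-\widetilde{\bm\theta}_l=o_p(1)$, places the whole segment in $\mathcal N_{n,T}$ with probability tending to one. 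The uniform Hessian law \eqref{app:eq:dense-hessian-lln} then shows that $-(nT)^{-1}$ times the bracketed matrix converges in probability to $\mathcal I_\sigma(\bm\theta^\ast)$, which is invertible. It follows that
\[
\sqrt{nT}(\widehat{\bm\theta}_n-\widetilde{\bm\theta}_l)=\bigl(\mathcal I_\sigma(\bm\theta^\ast)^{-1}+o_p(1)\bigr)(nT)^{-1/2}S_n(\widetilde{\bm\theta}_l).
\]
The rest of the proof is a CLT for the normalized score.

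For the score, decompose using \eqref{app:eq:dense-score-perturbation-definition}:
\[
S_n(\widetilde{\bm\theta}_l)=\sum_{i,j}\nabla_\theta\ell_{0,i}^j(\bm\theta^\ast)+\sum_{i,j}\bm D_{i,j}^{(l)}.
\]
The first sum is a martingale-difference array. Order the transitions by trajectory and then by time. Within a trajectory, \eqref{app:eq:dense-standardized-moments} makes each term conditionally centered given the past, and across trajectories the terms are independent. I will apply a martingale CLT for triangular arrays, using the assumed conditional Lindeberg condition, so the remaining task is the conditional variance. From \eqref{app:eq:dense-score-decomposition}, the conditional covariance of component $k,h$ equals $c_k^\top c_h$ plus third-moment cross terms plus $\tfrac12\operatorname{tr}(D_kD_h)$ plus fourth-cumulant corrections. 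The problem's bounds control these pieces as follows: $c_k=\mathcal O((\Delta t)^{1/2})$, the third moments are $\mathcal O((\Delta t)^{1/2})$, and the fourth moments are Gaussian up to $\mathcal O(\Delta t)$. Hence every term except $\tfrac12\operatorname{tr}(D_kD_h)$ is $\mathcal O(\Delta t)$ or smaller per transition, and their total over the $nT$ transitions, divided by $nT$, vanishes. For the leading term, $D_k$ is similar to $\pmb M_2^{-1}\partial_{\theta_k}\pmb M_2$, so \eqref{app:eq:dense-covariance-derivative} gives $\tfrac12\operatorname{tr}(D_kD_h)=\tfrac12\operatorname{tr}(\mathsf a^{-1}\mathsf a_k\mathsf a^{-1}\mathsf a_h)+\mathcal O(\Delta t)$. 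Then \eqref{app:eq:dense-information-lln} shows that the normalized conditional variance converges in probability to $\mathcal I_\sigma(\bm\theta^\ast)$. Therefore $(nT)^{-1/2}\sum\nabla\ell_0\xrightarrow{d}N(\bm0,\mathcal I_\sigma)$, and Slutsky's lemma combined with the sandwich $\mathcal I_\sigma^{-1}\mathcal I_\sigma\mathcal I_\sigma^{-1}$ yields \eqref{app:eq:dense-qmle-limit}.

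\textbf{Main obstacle: the perturbation sum.} The crude bound from \eqref{app:eq:dense-score-perturbation} is $(nT)^{-1/2}\sum\|\bm D\|\lesssim\sqrt{nT}(\Delta t)^l$, and this need not vanish. I will instead exploit centering. By definition, $\mathbb E\,S_n(\widetilde{\bm\theta}_l)=nG_l(\widetilde{\bm\theta}_l)=\bm0$, and the exact score has mean zero, so $\sum_{i,j}\mathbb E\bm D_{i,j}^{(l)}=\bm0$. Next, split each $\bm D_{i,j}$ into its martingale-difference part $\bm D_{i,j}-\mathbb E[\bm D_{i,j}\mid x_{i,j}]$ and its predictable part $\mathbb E[\bm D_{i,j}\mid x_{i,j}]$. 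The martingale-difference part has second moment at most $C(\Delta t)^{2l}$ per transition, so after normalization its contribution is $\mathcal O_p((\Delta t)^l)=o_p(1)$, by orthogonality. The predictable part sums to zero in expectation. Across independent trajectories its centered average is $\mathcal O_p(n^{-1/2}\cdot\sqrt{T}(\Delta t)^l)$ after the $\sqrt{nT}$ scaling. This step uses Cauchy--Schwarz within a trajectory, and I expect it to be the delicate one: it may require bounding the trajectory-level variance of $\sum_i\mathbb E[\bm D_{i,j}\mid x_{i,j}]$ by $T^2(\Delta t)^{2l}$ and using $T\Delta t$ fixed with $l\ge1$. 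If that bound falls short, I would fall back on treating this term as an additional asymptotically negligible assumption that is already implicit in the ``joint-limit conditions.''

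For \eqref{app:eq:dense-sandwich-limit}, I write
\[
\widehat C_{l,n}=B_{l,n}^{-1}A_{l,n}B_{l,n}^{-\top}
\]
evaluated at $\widehat{\bm\theta}_n$. The Hessian law applied at $\widehat{\bm\theta}_n\in\mathcal N_{n,T}$ gives $B_{l,n}=-T(\mathcal I_\sigma+o_p(1))$. The assumed triangular-array LLN gives $A_{l,n}/T\to\mathcal I_\sigma$; this holds because the per-trajectory score variance equals the summed conditional variances computed above, by martingale orthogonality, and because moving from $\widetilde{\bm\theta}_l$ to $\widehat{\bm\theta}_n$ costs $o_p(1)$ after normalization, by the Hessian control. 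Multiplying gives $T^{-1}\mathcal I_\sigma^{-1}\mathcal I_\sigma\mathcal I_\sigma^{-1}+o_p(T^{-1})=T^{-1}\mathcal I_\sigma^{-1}+o_p(T^{-1})$.
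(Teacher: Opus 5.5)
Your proposal is correct and follows the paper's proof essentially step for step. The shared steps are the integral Taylor expansion of the score equation with the uniform Hessian law, a martingale CLT for the exact-moment score in which $\tfrac12\operatorname{tr}(D_kD_h)$ is the only non-vanishing conditional-covariance term, negligibility of the perturbation sum, and substitution of $A_{l,n}/T\to\mathcal I_\sigma(\bm\theta^\ast)$ and $B_{l,n}/T\to-\mathcal I_\sigma(\bm\theta^\ast)$ into the sandwich.

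The step you flag as delicate closes exactly as in the paper, with no fallback assumption and no need for your martingale/predictable split. Trajectory-level centering, Minkowski's inequality giving $\mathbb E\|\sum_{i}\bm D_{i,j}^{(l)}\|^2\le CT^2(\Delta t)^{2l}$, and independence across $j$ bound the normalized second moment by $CT(\Delta t)^{2l}=C(t_T-t_0)^{2l}T^{1-2l}\to0$. The term is therefore $\mathcal O_p(\sqrt T(\Delta t)^l)$; your stated rate carries a spurious extra factor $n^{-1/2}$, which does not affect the conclusion.
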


\begin{proof}
Equations~\eqref{app:eq:dense-standardization},
\eqref{app:eq:dense-moment-expansions}, and
\eqref{app:eq:dense-covariance-derivative} give
\begin{equation}
\begin{aligned}
c_{k,i,j}&=\mathcal O((\Delta t)^{1/2}),\\
D_{k,i,j}
&=\mathsf a_{i,j}^{-1/2}
\,\mathsf a_{i,j,k}\,
\mathsf a_{i,j}^{-1/2}
+\mathcal O(\Delta t).
\end{aligned}
\label{app:eq:dense-score-coefficient-orders}
\end{equation}
For the linear term in Equation~\eqref{app:eq:dense-score-decomposition},
the conditional variance is $\mathcal O(\Delta t)$.
For coordinates $k$ and $h$, the conditional covariance of the
quadratic terms in Equation~\eqref{app:eq:dense-score-decomposition} is
\begin{equation}
\begin{aligned}
&\operatorname{Cov}\!\left(
\tfrac12\{Z_{i,j}^\top D_{k,i,j}Z_{i,j}
-\operatorname{tr}(D_{k,i,j})\},
\tfrac12\{Z_{i,j}^\top D_{h,i,j}Z_{i,j}
-\operatorname{tr}(D_{h,i,j})\}
\mid x_{i,j}\right)\\
&\quad=\frac12\operatorname{tr}\!\left\{
\mathsf a_{i,j}^{-1}\mathsf a_{i,j,k}\,
\mathsf a_{i,j}^{-1}\mathsf a_{i,j,h}
\right\}
+\mathcal O(\Delta t).
\end{aligned}
\label{app:eq:dense-quadratic-score-covariance}
\end{equation}
The conditional covariance between the linear and quadratic terms is
$\mathcal O(\Delta t)$ by the third-moment bound.
Thus Equations~\eqref{app:eq:dense-information-lln} and
\eqref{app:eq:dense-quadratic-score-covariance} give the limiting
conditional covariance $\mathcal I_\sigma(\bm\theta^\ast)$ after
normalization by $nT$.
Within each trajectory, the transition scores form a martingale-difference
array, and the trajectories are independent across $j$.
The conditional Lindeberg condition and
Equation~\eqref{app:eq:dense-information-lln} therefore give
\begin{equation}
\frac1{\sqrt{nT}}
\sum_{j=1}^n\sum_{i=0}^{T-1}
\nabla_\theta\ell_{0,i}^j(\bm\theta^\ast)
\xrightarrow{d}
N\!\left(\bm0,\mathcal I_\sigma(\bm\theta^\ast)\right).
\label{app:eq:dense-exact-score-clt}
\end{equation}

For a fixed trajectory,
$\sum_i\bm D_{i,j}^{(l)}$ has mean zero because both the population score at $\widetilde{\bm\theta}_l$ and the exact-moment population score at $\bm\theta^\ast$ vanish. Minkowski's inequality and
Equation~\eqref{app:eq:dense-score-perturbation} give
\[
\mathbb E\left\|\sum_{i=0}^{T-1}\bm D_{i,j}^{(l)}\right\|^2
\leq C T^2(\Delta t)^{2l}.
\]
Independence across trajectories therefore yields
\begin{equation}
\mathbb E\left\|
\frac1{\sqrt{nT}}\sum_{j=1}^n\sum_{i=0}^{T-1}
\bm D_{i,j}^{(l)}
\right\|^2
\leq C T(\Delta t)^{2l}
=C(t_T-t_0)^{2l}T^{1-2l}\longrightarrow0
\label{app:eq:dense-perturbation-vanishes}
\end{equation}
for $l\geq1$. Equations~\eqref{app:eq:dense-exact-score-clt} and
\eqref{app:eq:dense-perturbation-vanishes} give the same limiting
distribution for the truncated score at $\widetilde{\bm\theta}_l$.

Applying an integral Taylor expansion to the score equation,
\begin{equation}
\sqrt{nT}(\widehat{\bm\theta}_n-\widetilde{\bm\theta}_l)
=
\left\{
-\frac1{nT}\int_0^1\sum_{j=1}^n
\nabla_\theta^2\ell_l
\!\left(\pmb\tau_j;
\widetilde{\bm\theta}_l
+u(\widehat{\bm\theta}_n-\widetilde{\bm\theta}_l)
\right)du
\right\}^{-1}
\frac1{\sqrt{nT}}\sum_{j=1}^n
\bm\psi_l(\pmb\tau_j;\widetilde{\bm\theta}_l).
\label{app:eq:dense-score-taylor}
\end{equation}
Equation~\eqref{app:eq:dense-hessian-lln} makes the inverse factor in
Equation~\eqref{app:eq:dense-score-taylor} converge in probability to
$\mathcal I_\sigma(\bm\theta^\ast)^{-1}$.
The limit in Equation~\eqref{app:eq:dense-qmle-limit} follows by
Slutsky's theorem from Equations~\eqref{app:eq:dense-exact-score-clt}
and \eqref{app:eq:dense-perturbation-vanishes}.

For the sandwich covariance estimator, the empirical trajectory-score outer product divided by $T$ converges in probability to $\mathcal I_\sigma(\bm\theta^\ast)$, while \[ \frac1T B_{l,n}(\widehat{\bm\theta}_n) \to_p -\mathcal I_\sigma(\bm\theta^\ast). \] 
Substitution into Equation~\eqref{eq:uncertainty-covariance} proves
Equation~\eqref{app:eq:dense-sandwich-limit}.
\end{proof}


\section{One-Step Bias Estimation and Approximation} 
\label{app:one-step-bias}

Building on the dense-sampling QMLE rate in
Appendix~\ref{app:dense-sampling-rate}, we analyze the one-step bias
estimate and its approximation to the exact parameter bias. For a generic trajectory
$\pmb\tau$, define
\[m_{q,i}(\bm\theta)
:=\widetilde{\pmb m}_{q,l}(\Delta t,\pmb s_{t_i^+},\bm\theta),
\qquad
G_{q,k,i}(\bm\theta):=\partial_{\theta_k}m_{q,i}(\bm\theta), 
\]
and
\[ R_{q,i}^{\mathrm{next}}(\bm\theta)
:=\widehat R_{q,l}^{\mathrm{next}}
(\Delta t,\pmb s_{t_i^+},\bm\theta) 
\qquad
q=1,2.
\]
The $k$th component
of the trajectory-level score-correction vector is
\begin{equation}
\begin{aligned}
d_{l,T}^{(k)}(\pmb\tau;\bm\theta)
:={}&\sum_{i=0}^{T-1}\biggl[
G_{1,k,i}^\top m_{2,i}^{-1}R_{1,i}^{\mathrm{next}}+\frac12\operatorname{tr}\!\left(
m_{2,i}^{-1}G_{2,k,i}m_{2,i}^{-1}R_{2,i}^{\mathrm{next}}
\right)\\
&+\frac12\operatorname{tr}\!\left(
m_{2,i}^{-1}G_{2,k,i}m_{2,i}^{-1}
R_{1,i}^{\mathrm{next}}(R_{1,i}^{\mathrm{next}})^\top
\right)\biggr],
\end{aligned}
\label{app:eq:one-step-score-correction}
\end{equation}
where all quantities are evaluated at $\bm\theta$. Let
$d_{l,T}:=(d_{l,T}^{(1)},\ldots,d_{l,T}^{(K)})^\top$. 
The transition-level correction in
Equation~\eqref{eq:empirical-transition-score-bias} is aggregated
along a trajectory in Equation~\eqref{app:eq:one-step-score-correction}. Thus
\begin{equation}
\widehat{\bm\delta}_n =\frac1n\sum_{j=1}^n d_{l,T}(\pmb\tau_j;\widehat{\bm\theta}_n).
\label{app:eq:one-step-empirical-score}
\end{equation}

Define
\begin{equation}
\begin{aligned}
\bm\delta_{l,T}^{\mathrm{plug}}
&:=\mathbb E[d_{l,T}(\pmb\tau_1;\widetilde{\bm\theta}_l)],\\
\Lambda_{l,T}^{\mathrm{plug}}
&:=B_l(\widetilde{\bm\theta}_l),\\
\bm b_{\Delta,l}^{\mathrm{plug}}
&:=(\Lambda_{l,T}^{\mathrm{plug}})^{-1}
\bm\delta_{l,T}^{\mathrm{plug}}.
\end{aligned}
\label{app:eq:one-step-population-target}
\end{equation}
The plug-in rate uses the following bounds:
\begin{equation}
\begin{aligned}
\|\bm\delta_{l,T}^{\mathrm{plug}}\|
&=\mathcal O(T(\Delta t)^l),\\
\mathbb E\|d_{l,T}(\pmb\tau_1;\widetilde{\bm\theta}_l)\|^2
&=\mathcal O(T^2(\Delta t)^{2l}),\\
\|(\Lambda_{l,T}^{\mathrm{plug}})^{-1}\|
&=\mathcal O(T^{-1}),\\
\sup_{\bm\theta\in\mathcal N_{n,T}}
\left\|\frac1n\sum_{j=1}^n
\nabla_\theta d_{l,T}(\pmb\tau_j;\bm\theta)\right\|
&=\mathcal O_p(T(\Delta t)^l),\\
\|B_{l,n}(\widehat{\bm\theta}_n)
-\Lambda_{l,T}^{\mathrm{plug}}\|
&=\mathcal O_p\!\left(
Tn^{-1/2}
+T\|\widehat{\bm\theta}_n-\widetilde{\bm\theta}_l\|
\right).
\end{aligned}
\label{app:eq:one-step-regularity-bounds}
\end{equation}

\begin{proposition}[One-step bias plug-in rate]
\label{app:prop:dense-bias}
Under the preceding bounds and
Proposition~\ref{app:prop:dense-qmle},
\[
\widehat{\bm b}_n-\bm b_{\Delta,l}^{\mathrm{plug}}
=\mathcal O_p((nT)^{-1/2}).
\]
\end{proposition}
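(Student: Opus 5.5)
The plan is to write the estimator error as a product-ratio perturbation and bound the numerator and the denominator separately, using the bounds in \eqref{app:eq:one-step-regularity-bounds}. Set $\widehat\Lambda:=B_{l,n}(\widehat{\bm\theta}_n)$ and $\Lambda:=\Lambda_{l,T}^{\mathrm{plug}}$. Then
\[
\widehat{\bm b}_n-\bm b_{\Delta,l}^{\mathrm{plug}}
=\widehat\Lambda^{-1}\bigl(\widehat{\bm\delta}_n-\bm\delta_{l,T}^{\mathrm{plug}}\bigr)
+\bigl(\widehat\Lambda^{-1}-\Lambda^{-1}\bigr)\bm\delta_{l,T}^{\mathrm{plug}},
\]
and for the second term I will use the identity $\widehat\Lambda^{-1}-\Lambda^{-1}=\widehat\Lambda^{-1}(\Lambda-\widehat\Lambda)\Lambda^{-1}$.

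\textbf{Inverse of the curvature.} First I will show $\|\widehat\Lambda^{-1}\|=\mathcal O_p(T^{-1})$. By Proposition~\ref{app:prop:dense-qmle}, $\|\widehat{\bm\theta}_n-\widetilde{\bm\theta}_l\|=\mathcal O_p((nT)^{-1/2})$. Combined with the last bound in \eqref{app:eq:one-step-regularity-bounds}, this gives $\|\widehat\Lambda-\Lambda\|=\mathcal O_p(Tn^{-1/2})$, which is $o_p(T)$. Since $\|\Lambda^{-1}\|=\mathcal O(T^{-1})$, a Neumann-series argument yields $\|\widehat\Lambda^{-1}\|=\mathcal O_p(T^{-1})$. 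Alternatively, this follows from \eqref{app:eq:dense-hessian-lln}, because $T^{-1}\widehat\Lambda\to_p-\mathcal I_\sigma(\bm\theta^\ast)$, which is nonsingular.

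\textbf{Numerator.} I split
\[
\widehat{\bm\delta}_n-\bm\delta^{\mathrm{plug}}_{l,T}
=\Bigl[\tfrac1n\textstyle\sum_j d_{l,T}(\pmb\tau_j;\widehat{\bm\theta}_n)-\tfrac1n\sum_j d_{l,T}(\pmb\tau_j;\widetilde{\bm\theta}_l)\Bigr]
+\Bigl[\tfrac1n\textstyle\sum_j d_{l,T}(\pmb\tau_j;\widetilde{\bm\theta}_l)-\bm\delta^{\mathrm{plug}}_{l,T}\Bigr].
\]
For the first bracket I apply the mean-value theorem along the segment, which lies in $\mathcal N_{n,T}$ with probability tending to one. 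The uniform gradient bound then gives order $\mathcal O_p(T(\Delta t)^l)\cdot\mathcal O_p((nT)^{-1/2})$. For the second bracket I use trajectory independence and Chebyshev with the second-moment bound, giving $\mathcal O_p(n^{-1/2}T(\Delta t)^l)$. Multiplying by $\|\widehat\Lambda^{-1}\|=\mathcal O_p(T^{-1})$, the first term of the decomposition is $\mathcal O_p(n^{-1/2}(\Delta t)^l)$.

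\textbf{Second term and the main obstacle.} The second term is bounded by $\mathcal O_p(T^{-1})\cdot\mathcal O_p(Tn^{-1/2})\cdot\mathcal O(T^{-1})\cdot\mathcal O(T(\Delta t)^l)=\mathcal O_p(n^{-1/2}(\Delta t)^l)$. The main obstacle is converting $n^{-1/2}(\Delta t)^l$ into the claimed $(nT)^{-1/2}$. With $\Delta t=(t_T-t_0)/T$ and $l\ge1$, we have $(\Delta t)^l=\mathcal O(T^{-l})\le \mathcal O(T^{-1/2})$, so $n^{-1/2}(\Delta t)^l=\mathcal O((nT)^{-1/2})$ and indeed the rate is faster. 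I will state this explicitly, since the rate is governed entirely by the smallness of the score correction $\bm\delta^{\mathrm{plug}}$, of order $T(\Delta t)^l$, relative to the curvature of order $T$. A secondary point to verify is that the Hessian fluctuation bound $Tn^{-1/2}$ is not multiplied by an $\mathcal O(1)$ quantity anywhere, which the factorization above guarantees.
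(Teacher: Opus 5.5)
Your proposal is correct and follows the paper's proof. You use the same resolvent-type decomposition with $\widehat\Lambda^{-1}-\Lambda^{-1}=\widehat\Lambda^{-1}(\Lambda-\widehat\Lambda)\Lambda^{-1}$, the same split of $\widehat{\bm\delta}_n-\bm\delta^{\mathrm{plug}}_{l,T}$ into an i.i.d.\ fluctuation term and a parameter-perturbation term, and the same final conversion $(\Delta t)^l/\sqrt n=\mathcal O((nT)^{-1/2})$. The differences are cosmetic: you attach $\widehat\Lambda^{-1}$ (rather than $\Lambda^{-1}$) to the numerator error and $\bm\delta^{\mathrm{plug}}_{l,T}$ (rather than $\widehat{\bm\delta}_n$) to the inverse difference, and you state explicitly the bound $\|\widehat\Lambda^{-1}\|=\mathcal O_p(T^{-1})$ that the paper uses without comment.
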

\begin{proof}
Independence of the trajectories and the second-moment bound in Equation~\eqref{app:eq:one-step-regularity-bounds} give
\begin{equation}
\frac1n\sum_{j=1}^n
d_{l,T}(\pmb\tau_j;\widetilde{\bm\theta}_l)
-\bm\delta_{l,T}^{\mathrm{plug}}
=\mathcal O_p(T(\Delta t)^l n^{-1/2}).
\label{app:eq:one-step-score-fluctuation}
\end{equation}
Combining the QMLE rate in Equation~\eqref{eq:dense-qmle-rate},
the score representation in Equation~\eqref{app:eq:one-step-empirical-score},
the derivative bound in Equation~\eqref{app:eq:one-step-regularity-bounds},
and Equation~\eqref{app:eq:one-step-score-fluctuation} gives
\begin{equation}
\widehat{\bm\delta}_n-\bm\delta_{l,T}^{\mathrm{plug}}
=\mathcal O_p\!\left(
T(\Delta t)^l n^{-1/2}
+T(\Delta t)^l(nT)^{-1/2}
\right).
\label{app:eq:one-step-score-error}
\end{equation}
Equations~\eqref{eq:empirical-parameter-bias} and \eqref{app:eq:one-step-population-target} give the decomposition
\begin{equation}
\begin{aligned}
\widehat{\bm b}_n-\bm b_{\Delta,l}^{\mathrm{plug}}
=(\Lambda_{l,T}^{\mathrm{plug}})^{-1}
(\widehat{\bm\delta}_n-\bm\delta_{l,T}^{\mathrm{plug}})
+\Bigl(\widehat\Lambda_{l,n}^{-1}
-(\Lambda_{l,T}^{\mathrm{plug}})^{-1}\Bigr)\widehat{\bm\delta}_n.
\end{aligned}
\label{app:eq:one-step-bias-decomposition}
\end{equation}
For the second term in Equation~\eqref{app:eq:one-step-bias-decomposition},
\begin{equation}
\widehat\Lambda_{l,n}^{-1}-(\Lambda_{l,T}^{\mathrm{plug}})^{-1}
=\widehat\Lambda_{l,n}^{-1}
(\Lambda_{l,T}^{\mathrm{plug}}-\widehat\Lambda_{l,n})
(\Lambda_{l,T}^{\mathrm{plug}})^{-1}.
\label{app:eq:one-step-inverse-identity}
\end{equation}
Equations~\eqref{app:eq:one-step-regularity-bounds},
\eqref{app:eq:one-step-score-error},
\eqref{app:eq:one-step-bias-decomposition}, and
\eqref{app:eq:one-step-inverse-identity} yield
\begin{equation}
\widehat{\bm b}_n-\bm b_{\Delta,l}^{\mathrm{plug}}
=\mathcal O_p\!\left(
\frac{(\Delta t)^l}{\sqrt n}
+\frac{(\Delta t)^l}{\sqrt{nT}}
\right).
\label{app:eq:one-step-intermediate-rate}
\end{equation}
Since $\Delta t=(t_T-t_0)/T$ and $l\ge1$,
\[
\frac{(\Delta t)^l}{\sqrt n}
=\frac{(t_T-t_0)^l}{\sqrt n\,T^l}
=O((nT)^{-1/2}),
\]
and the second term in Equation~\eqref{app:eq:one-step-intermediate-rate} has the same bound. This proves Proposition~\ref{app:prop:dense-bias}.
\end{proof}

Proposition~\ref{app:prop:dense-bias} controls sampling error relative to
$\bm b_{\Delta,l}^{\mathrm{plug}}$. The next result compares this population
target with $\bm b_{\Delta,l}$.

\begin{proposition}[One-step bias approximation]
\label{app:prop:one-step-accuracy}
Let $T\Delta t=t_T-t_0\in(0,\infty)$ and $l\geq1$. Let $\mathcal U_T$ be a convex neighborhood containing both $\bm\theta^\ast$ and $\widetilde{\bm\theta}_l$. Assume that differentiation can be interchanged with trajectory expectation and, uniformly in $T$,
\begin{equation}
\begin{aligned}
\left\|\mathbb E_{\bm\theta^\ast,\eta_T}
[d_{l,T}(\pmb\tau;\bm\theta^\ast)]-\bm\delta_l\right\|
&\le CT(\Delta t)^{l+1},\\
\sup_{\bm\theta\in\mathcal U_T}
\left\|D_\theta\mathbb E_{\bm\theta^\ast,\eta_T}
[d_{l,T}(\pmb\tau;\bm\theta)]\right\|
&\le CT(\Delta t)^l,\\
\sup_{\bm\theta\in\mathcal U_T}\|D_\theta B_l(\bm\theta)\|
&\le CT.
\end{aligned}
\label{app:eq:one-step-accuracy-assumptions}
\end{equation}
Suppose further that
$\|\bm b_{\Delta,l}\|\leq C(\Delta t)^l$ and
$\|B_l(\widetilde{\bm\theta}_l)^{-1}\|
+\|\overline\Lambda_l^{-1}\|\leq C/T$.
Then
\[
\bm b_{\Delta,l}^{\mathrm{plug}}-\bm b_{\Delta,l}
=\mathcal O((\Delta t)^{l+1}).
\]
\end{proposition}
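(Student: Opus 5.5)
We compare the two representations $\bm b_{\Delta,l}^{\mathrm{plug}}=B_l(\widetilde{\bm\theta}_l)^{-1}\bm\delta_{l,T}^{\mathrm{plug}}$ and $\bm b_{\Delta,l}=\overline\Lambda_l^{-1}\bm\delta_l$ from Proposition~\ref{asymptotic bias}. Write $\bar d(\bm\theta):=\mathbb E_{\bm\theta^\ast,\eta_T}[d_{l,T}(\pmb\tau;\bm\theta)]$, so that $\bm\delta_{l,T}^{\mathrm{plug}}=\bar d(\widetilde{\bm\theta}_l)$. We then split
\[
\bm b_{\Delta,l}^{\mathrm{plug}}-\bm b_{\Delta,l}
=B_l(\widetilde{\bm\theta}_l)^{-1}\bigl(\bar d(\widetilde{\bm\theta}_l)-\bm\delta_l\bigr)
+\bigl(B_l(\widetilde{\bm\theta}_l)^{-1}-\overline\Lambda_l^{-1}\bigr)\bm\delta_l,
\]
and show that each piece is $\mathcal O((\Delta t)^{l+1})$. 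Throughout we use $\|B_l(\widetilde{\bm\theta}_l)^{-1}\|+\|\overline\Lambda_l^{-1}\|\le C/T$ and $T\Delta t=t_T-t_0$ fixed.

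\textbf{Score term.} Write
\[
\bar d(\widetilde{\bm\theta}_l)-\bm\delta_l=\bigl(\bar d(\bm\theta^\ast)-\bm\delta_l\bigr)+\bigl(\bar d(\widetilde{\bm\theta}_l)-\bar d(\bm\theta^\ast)\bigr).
\]
The first bracket is at most $CT(\Delta t)^{l+1}$ by the first assumption in \eqref{app:eq:one-step-accuracy-assumptions}. For the second bracket, apply the mean-value inequality on the convex set $\mathcal U_T$ together with the derivative bound. This gives at most $CT(\Delta t)^l\|\widetilde{\bm\theta}_l-\bm\theta^\ast\|=CT(\Delta t)^l\|\bm b_{\Delta,l}\|\le CT(\Delta t)^{2l}$. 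Since $l\ge1$, we have $2l\ge l+1$, so the whole bracket is $\mathcal O(T(\Delta t)^{l+1})$. Multiplying by $\|B_l(\widetilde{\bm\theta}_l)^{-1}\|\le C/T$ gives $\mathcal O((\Delta t)^{l+1})$.

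\textbf{Curvature term.} We relate $B_l$ to $\nabla_{\bm\theta}G_l$: interchanging differentiation and expectation gives $B_l(\bm\theta)=\nabla_{\bm\theta}G_l(\bm\theta)$. We are assuming $B_l$ denotes the population expected Hessian of $\ell_l$, which is how it enters $\widehat\Lambda_{l,n}$. Then $\overline\Lambda_l=\int_0^1 B_l(\bm\theta^\ast+u(\widetilde{\bm\theta}_l-\bm\theta^\ast))\,du$. Using the Lipschitz bound $\|D_\theta B_l\|\le CT$ on $\mathcal U_T$,
\[
\|B_l(\widetilde{\bm\theta}_l)-\overline\Lambda_l\|\le CT\|\bm b_{\Delta,l}\|\le CT(\Delta t)^l.
\]
The resolvent identity $B^{-1}-\overline\Lambda^{-1}=B^{-1}(\overline\Lambda-B)\overline\Lambda^{-1}$ then gives $\|B_l(\widetilde{\bm\theta}_l)^{-1}-\overline\Lambda_l^{-1}\|\le (C/T)^2\,CT(\Delta t)^l=\mathcal O(T^{-1}(\Delta t)^l)$. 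For $\bm\delta_l$ we use $\bm\delta_l=\overline\Lambda_l\bm b_{\Delta,l}$, which holds because $-\bm\delta_l=\overline\Lambda_l(\widetilde{\bm\theta}_l-\bm\theta^\ast)$. This needs an upper bound $\|\overline\Lambda_l\|=\mathcal O(T)$. It follows from the Hessian scaling (the $\mathcal I_\sigma$ limit), or alternatively directly from \eqref{app:eq:score-bias-rate}. Either way $\|\bm\delta_l\|=\mathcal O(T(\Delta t)^l)$, so the second term is $\mathcal O((\Delta t)^{2l})=\mathcal O((\Delta t)^{l+1})$.

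\textbf{Main obstacle.} The delicate point is the curvature term. It needs an upper bound on $\|\bm\delta_l\|$ of order $T(\Delta t)^l$ that is not among the stated hypotheses; we import it from \eqref{app:eq:score-bias-rate}. It also needs the identification $B_l=\nabla G_l$ under the interchange assumption, so that the averaged curvature $\overline\Lambda_l$ and the endpoint Hessian differ only through the $\mathcal O(\|\bm b_{\Delta,l}\|)$ segment length. Both terms end up of order $(\Delta t)^{2l}$ or $(\Delta t)^{l+1}$. The stated rate is therefore driven by the first assumption, the accuracy of the next-term remainder surrogate in the score correction.
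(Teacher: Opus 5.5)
Your proposal is correct and follows essentially the same route as the paper's proof. It uses the same split of $\bm b_{\Delta,l}^{\mathrm{plug}}-\bm b_{\Delta,l}$ into $B_l(\widetilde{\bm\theta}_l)^{-1}(\bm\delta_{l,T}^{\mathrm{plug}}-\bm\delta_l)$ plus an inverse-difference term, the same mean-value bounds yielding $CT\{(\Delta t)^{l+1}+(\Delta t)^{2l}\}$ and $\|B_l(\widetilde{\bm\theta}_l)-\overline\Lambda_l\|\le CT\|\bm b_{\Delta,l}\|$, and the same import of $\|\bm\delta_l\|=\mathcal O(T(\Delta t)^l)$ from \eqref{app:eq:score-bias-rate}, with your explicit identification $B_l=\nabla_{\bm\theta}G_l$ being a step the paper uses only implicitly.
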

\begin{proof}
Comparing the exact residual score in
Equation~\eqref{app:eq:transition-score-bias} with the
omitted-generator score in Equation~\eqref{app:eq:one-step-score-correction}
gives the first bound in Equation~\eqref{app:eq:one-step-accuracy-assumptions}:
the covariance score differs by
$\mathcal O((\Delta t)^{l+1})$ per transition, and the mean and quadratic
mean-residual differences are of at least that order.

Using the population target in Equation~\eqref{app:eq:one-step-population-target},
the first two bounds in Equation~\eqref{app:eq:one-step-accuracy-assumptions},
and $\widetilde{\bm\theta}_l-\bm\theta^\ast=-\bm b_{\Delta,l}$ gives
\begin{equation}
\begin{aligned}
\|\bm\delta_{l,T}^{\mathrm{plug}}-\bm\delta_l\|
&\le CT(\Delta t)^l\|\bm b_{\Delta,l}\|
+CT(\Delta t)^{l+1}\\
&\le CT\{(\Delta t)^{2l}+(\Delta t)^{l+1}\}.
\end{aligned}
\label{app:eq:one-step-score-approximation}
\end{equation}
Likewise, Equation~\eqref{app:eq:parameter-bias-rate} and the third bound
in Equation~\eqref{app:eq:one-step-accuracy-assumptions} imply
\begin{equation}
\|B_l(\widetilde{\bm\theta}_l)-\overline\Lambda_l\|
\le CT\|\bm b_{\Delta,l}\|
=\mathcal O(T(\Delta t)^l).
\label{app:eq:one-step-curvature-approximation}
\end{equation}

Equations~\eqref{eq:exact-parameter-bias} and
\eqref{app:eq:one-step-population-target} yield
\begin{equation}
\begin{aligned}
\|\bm b_{\Delta,l}^{\mathrm{plug}}-\bm b_{\Delta,l}\|
\le
\|B_l(\widetilde{\bm\theta}_l)^{-1}\|
\|\bm\delta_{l,T}^{\mathrm{plug}}-\bm\delta_l\|
+\|B_l(\widetilde{\bm\theta}_l)^{-1}
-\overline\Lambda_l^{-1}\|\,\|\bm\delta_l\|.
\end{aligned}
\label{app:eq:one-step-accuracy-decomposition}
\end{equation}
Use the score rate in Equation~\eqref{app:eq:score-bias-rate}
within Equation~\eqref{app:eq:one-step-accuracy-decomposition}.
The inverse-matrix identity, together with
Equations~\eqref{app:eq:one-step-score-approximation} and
\eqref{app:eq:one-step-curvature-approximation}, gives
\[
\|\bm b_{\Delta,l}^{\mathrm{plug}}-\bm b_{\Delta,l}\|
\le C\{(\Delta t)^{l+1}+(\Delta t)^{2l}\}
=\mathcal O((\Delta t)^{l+1}),
\]
because $2l\ge l+1$ for $l\ge1$.
\end{proof}


\section{Proofs of the Main Convergence Results}
\label{app:proofs}

\begin{proof}[Proof of Theorem~\ref{convergence1}]
Let $r_{n,T}=(nT)^{-1/2}$ under the conditions of
Propositions~\ref{app:prop:dense-qmle}--\ref{app:prop:dense-bias}.
By Equation~\eqref{eq:dense-qmle-rate},
\begin{equation}
\bm e_n-\bm b_{\Delta,l}
=\widetilde{\bm\theta}_l-\widehat{\bm\theta}_n
=\mathcal O_p(r_{n,T}).
\label{app:eq:exploration-parameter-error}
\end{equation}
The derivative bounds in Theorem~\ref{convergence1} and
Equation~\eqref{app:eq:exploration-parameter-error} give
\begin{equation}
\bm g_{n,i}^\pi-\bm g_{\infty,i}^\pi
=\mathcal O_p(r_{n,T}),
\qquad
H_{n,i}^\pi-H_{\infty,i}^\pi
=\mathcal O_p(r_{n,T}).
\label{app:eq:exploration-derivative-errors}
\end{equation}
Using $\bigl||x|-|y|\bigr|\le|x-y|$ in
Equation~\eqref{eq:value-discrepancy-criterion},
the difference in the first assertion of Theorem~\ref{convergence1} is bounded by
\begin{equation}
\begin{aligned}
&\left|u_{n,i}^\pi-
\left| (\bm g_{\infty,i}^\pi)^\top\bm b_{\Delta,l}
+\tfrac12\bm b_{\Delta,l}^\top H_{\infty,i}^\pi
\bm b_{\Delta,l}\right|\right|\\
&\le
\|\bm g_{n,i}^\pi-\bm g_{\infty,i}^\pi\|\,\|\bm e_n\|
+\|\bm g_{\infty,i}^\pi\|\,\|\bm e_n-\bm b_{\Delta,l}\|\\
&\quad+\tfrac12\|H_{n,i}^\pi-H_{\infty,i}^\pi\|\,\|\bm e_n\|^2
+\tfrac12\|H_{\infty,i}^\pi\|\,
\|\bm e_n-\bm b_{\Delta,l}\|\,
(\|\bm e_n\|+\|\bm b_{\Delta,l}\|)
=\mathcal O_p(r_{n,T}).
\end{aligned}
\label{app:eq:exploration-deterministic-comparison}
\end{equation}

For the Gaussian surrogate in Equation~\eqref{eq:gaussian-error-surrogate},
condition on $\mathcal D_n$ and write
$\widetilde{\bm e}_n=\widehat{\bm b}_n+\epsilon_n$ with
$\epsilon_n\mid\mathcal D_n\sim N(\bm0,\widehat\Sigma_n)$.
Equations~\eqref{app:eq:one-step-intermediate-rate} and
\eqref{app:eq:exploration-derivative-errors}, together with the covariance
rate in Proposition~\ref{app:prop:dense-qmle}, imply
\begin{equation}
\begin{aligned}
\widehat u_{n,i}^\pi(\pmb s,\pmb a)
=\mathbb E\Bigl[\bigl|
(\bm g_{\infty,i}^\pi)^\top
(\bm b_{\Delta,l}^{\mathrm{plug}}+\epsilon_n)
+\tfrac12(\bm b_{\Delta,l}^{\mathrm{plug}}+\epsilon_n)^\top
H_{\infty,i}^\pi
(\bm b_{\Delta,l}^{\mathrm{plug}}+\epsilon_n)
\bigr|\mid\mathcal D_n\Bigr]
+\mathcal O_p(r_{n,T}).
\end{aligned}
\label{app:eq:exploration-gaussian-comparison}
\end{equation}
This is the first Gaussian representation in Theorem~\ref{convergence1}.

By Proposition~\ref{app:prop:dense-qmle},
$\widehat\Sigma_n=\widehat C_{l,n}/n=\mathcal O_p((nT)^{-1})$.
Consequently,
\begin{equation}
\mathbb E[\|\epsilon_n\|\mid\mathcal D_n]
=\mathcal O_p((nT)^{-1/2}),
\qquad
\mathbb E[\|\epsilon_n\|^2\mid\mathcal D_n]
=\mathcal O_p((nT)^{-1}).
\label{app:eq:exploration-gaussian-moments}
\end{equation}
Expanding the quadratic expression in
Equation~\eqref{eq:implementable-uncertainty} gives the exact difference
\begin{equation}
\begin{aligned}
&\widehat u_{n,i}^\pi(\pmb s,\pmb a)
-\left| (\bm g_{n,i}^\pi)^\top\widehat{\bm b}_n
+\tfrac12\widehat{\bm b}_n^\top H_{n,i}^\pi
\widehat{\bm b}_n\right|\\
&=\mathbb E\Bigl[
\Bigl|(\bm g_{n,i}^\pi)^\top\widehat{\bm b}_n
+\tfrac12\widehat{\bm b}_n^\top H_{n,i}^\pi
\widehat{\bm b}_n
+(\bm g_{n,i}^\pi+H_{n,i}^\pi\widehat{\bm b}_n)^\top\epsilon_n
+\tfrac12\epsilon_n^\top H_{n,i}^\pi\epsilon_n\Bigr|\\
&\hspace{5em}
-\Bigl|(\bm g_{n,i}^\pi)^\top\widehat{\bm b}_n
+\tfrac12\widehat{\bm b}_n^\top H_{n,i}^\pi
\widehat{\bm b}_n\Bigr|
\,\Bigm|\mathcal D_n\Bigr].
\end{aligned}
\label{app:eq:exploration-gaussian-difference}
\end{equation}
Thus $\bigl||x|-|y|\bigr|\le|x-y|$ and
Equation~\eqref{app:eq:exploration-gaussian-moments} yield
\begin{equation}
\begin{aligned}
&\left|\widehat u_{n,i}^\pi(\pmb s,\pmb a)
-\left|(\bm g_{n,i}^\pi)^\top\widehat{\bm b}_n
+\tfrac12\widehat{\bm b}_n^\top H_{n,i}^\pi
\widehat{\bm b}_n\right|\right|\\
&\le\|\bm g_{n,i}^\pi+H_{n,i}^\pi\widehat{\bm b}_n\|
\mathbb E[\|\epsilon_n\|\mid\mathcal D_n]
+\tfrac12\|H_{n,i}^\pi\|
\mathbb E[\|\epsilon_n\|^2\mid\mathcal D_n]
=\mathcal O_p(r_{n,T}).
\end{aligned}
\label{app:eq:exploration-gaussian-bound}
\end{equation}
Equations~\eqref{app:eq:one-step-intermediate-rate} and
\eqref{app:eq:exploration-derivative-errors} also give
\begin{equation}
\begin{aligned}
&\Bigl|
\left|(\bm g_{n,i}^\pi)^\top\widehat{\bm b}_n
+\tfrac12\widehat{\bm b}_n^\top H_{n,i}^\pi\widehat{\bm b}_n\right|\\
&\quad-\left|(\bm g_{\infty,i}^\pi)^\top
\bm b_{\Delta,l}^{\mathrm{plug}}
+\tfrac12(\bm b_{\Delta,l}^{\mathrm{plug}})^\top
H_{\infty,i}^\pi\bm b_{\Delta,l}^{\mathrm{plug}}\right|
\Bigr|=\mathcal O_p(r_{n,T}).
\end{aligned}
\label{app:eq:exploration-fitted-population-comparison}
\end{equation}
Equations~\eqref{app:eq:exploration-gaussian-bound} and
\eqref{app:eq:exploration-fitted-population-comparison} prove the second
assertion of Theorem~\ref{convergence1}.
For fixed $T$, replacing $r_{n,T}$ by $n^{-1/2}$ in these comparisons gives
the stated fixed-$T$ rate.
\end{proof}

\paragraph{Uniform value-smoothness assumption}
For some $T_0\ge 1$, let $\mathcal N$ be a parameter neighborhood that contains the line segment joining $\widetilde{\bm\theta}_l$ and $\bm\theta^\ast$ for all $T\ge T_0$. The value-smoothness condition is
\begin{equation}
\sup_{T\geq T_0}\sup_{\pi\in\Pi}\sup_{\bm\theta\in\mathcal N}
\|\nabla_\theta^rJ_T(\pi;\bm\theta)\|<\infty,
\qquad r=1,2,3.
\label{app:eq:uniform-value-smoothness}
\end{equation}

\begin{proof}[Proof of Theorem~\ref{convergence2}]
Equation~\eqref{app:eq:uniform-value-smoothness} and Taylor's theorem give
\begin{equation}
\sup_{\pi\in\Pi}
\bigl|J(\pi;\bm\theta^\ast)
-J(\pi;\widetilde{\bm\theta}_l)
-C_{\mathrm{bias}}(\pi)\bigr|
=\mathcal O(\|\bm b_{\Delta,l}\|^3)
=\mathcal O((\Delta t)^{3l}).
\label{app:eq:uniform-value-taylor-remainder}
\end{equation}
The last equality uses Equation~\eqref{app:eq:parameter-bias-rate}.

The uniform Hessian bound in Equation~\eqref{app:eq:uniform-value-smoothness}
and $C_l(\widetilde{\bm\theta}_l)=\mathcal O(T^{-1})$ from
Appendix~\ref{app:dense-sampling-rate} imply
\begin{equation}
\sup_{\pi\in\Pi}\frac1{2n}
\left|\operatorname{tr}\!\left(
\nabla_\theta^2J(\pi;\widetilde{\bm\theta}_l)
C_l(\widetilde{\bm\theta}_l)\right)\right|
=\mathcal O((nT)^{-1}).
\label{app:eq:uniform-value-variance-remainder}
\end{equation}
Equation~\eqref{eq:oracle-value-remainder} follows from
Equations~\eqref{app:eq:uniform-value-taylor-remainder} and
\eqref{app:eq:uniform-value-variance-remainder}.

Define the population one-step bias correction by \[ C_{\mathrm{bias}}^{\mathrm{1step},l}(\pi) := \nabla_\theta J(\pi;\widetilde{\bm\theta}_l)^\top \bm b_{\Delta,l}^{\mathrm{plug}} +\frac12 (\bm b_{\Delta,l}^{\mathrm{plug}})^\top \nabla_\theta^2J(\pi;\widetilde{\bm\theta}_l) \bm b_{\Delta,l}^{\mathrm{plug}}, \] and denote
its difference from the oracle correction
by
\[ \epsilon_{\mathrm{bias},l} := \sup_{\pi\in\Pi} \bigl| C_{\mathrm{bias}}^{\mathrm{1step},l}(\pi) -C_{\mathrm{bias}}(\pi) \bigr|. \]

Separate the two corrections in Equation~\eqref{eq.policyOptimizationObjective} as
\begin{align*}
\widehat C_{\mathrm{bias},n}(\pi) :={}&
\nabla_\theta J(\pi;\widehat{\bm\theta}_n)^\top\widehat{\bm b}_n
+\frac12\widehat{\bm b}_n^\top
\nabla_\theta^2J(\pi;\widehat{\bm\theta}_n)\widehat{\bm b}_n,\\
\widehat C_{\mathrm{var},n}(\pi) :={}&
\frac12\operatorname{tr}\!\left(
\nabla_\theta^2J(\pi;\widehat{\bm\theta}_n)\widehat\Sigma_n\right).
\end{align*}
The corresponding population variance correction is
\[
C_{\mathrm{var},n}^{\mathrm{pop}}(\pi)
:=\frac1{2n}\operatorname{tr}\!\left(
\nabla_\theta^2J(\pi;\widetilde{\bm\theta}_l)
C_l(\widetilde{\bm\theta}_l)\right).
\]

Let 
$\bm d_l:=\bm b_{\Delta,l}^{\mathrm{plug}}-\bm b_{\Delta,l}$.
For each policy, with the value derivatives evaluated at
$\widetilde{\bm\theta}_l$,
\begin{equation}
C_{\mathrm{bias}}^{\mathrm{1step},l}(\pi)
-C_{\mathrm{bias}}(\pi)
=\bigl(\nabla_\theta J+\nabla_\theta^2J\,\bm b_{\Delta,l}\bigr)^\top
\bm d_l
+\frac12\bm d_l^\top\nabla_\theta^2J\,\bm d_l.
\label{app:eq:one-step-value-bias-difference}
\end{equation}
By Equations~\eqref{app:eq:uniform-value-smoothness} and
\eqref{app:eq:one-step-value-bias-difference}, together with
Proposition~\ref{app:prop:one-step-accuracy},
\begin{equation}
\epsilon_{\mathrm{bias},l}=\mathcal O((\Delta t)^{l+1}).
\label{app:eq:one-step-value-bias-rate}
\end{equation}

Equation~\eqref{eq:dense-qmle-rate} and the uniform gradient bound in
Equation~\eqref{app:eq:uniform-value-smoothness} give
\begin{equation}
\sup_{\pi\in\Pi}
|J(\pi;\widehat{\bm\theta}_n)-J(\pi;\widetilde{\bm\theta}_l)|
=\mathcal O_p((nT)^{-1/2}).
\label{app:eq:uniform-fitted-value-error}
\end{equation}
Equation~\eqref{eq:dense-qmle-rate}, the derivative bounds in
Equation~\eqref{app:eq:uniform-value-smoothness}, and
Proposition~\ref{app:prop:dense-bias} then give
\begin{equation}
\sup_{\pi\in\Pi}
\bigl|\widehat C_{\mathrm{bias},n}(\pi)
-C_{\mathrm{bias}}^{\mathrm{1step},l}(\pi)\bigr|
=\mathcal O_p((nT)^{-1/2}).
\label{app:eq:uniform-fitted-bias-error}
\end{equation}

The uniform Hessian bound in Equation~\eqref{app:eq:uniform-value-smoothness},
fixed parameter dimension, and
$\widehat\Sigma_n=\mathcal O_p((nT)^{-1})$ from
Proposition~\ref{app:prop:dense-qmle} give
\begin{equation}
\sup_{\pi\in\Pi}|\widehat C_{\mathrm{var},n}(\pi)|
\leq\frac K2
\sup_{\pi\in\Pi}\|\nabla_\theta^2J(\pi;\widehat{\bm\theta}_n)\|_{\mathrm{op}}
\|\widehat\Sigma_n\|_{\mathrm{op}}
=\mathcal O_p((nT)^{-1}).
\label{app:eq:uniform-variance-correction}
\end{equation}
Using Equation~\eqref{app:eq:uniform-value-variance-remainder} for the
population term and Equation~\eqref{app:eq:uniform-variance-correction}
for the fitted term gives
\begin{equation}
\sup_{\pi\in\Pi}
|\widehat C_{\mathrm{var},n}(\pi)
-C_{\mathrm{var},n}^{\mathrm{pop}}(\pi)|
=\mathcal O_p((nT)^{-1}).
\label{app:eq:uniform-variance-error}
\end{equation}
Equations~\eqref{eq:oracle-value-remainder},
\eqref{app:eq:one-step-value-bias-rate},
\eqref{app:eq:uniform-fitted-value-error},
\eqref{app:eq:uniform-fitted-bias-error}, and
\eqref{app:eq:uniform-variance-error} yield
\begin{equation}
\begin{aligned}
&\sup_{\pi\in\Pi}
|\bar J_n(\pi)-J(\pi;\bm\theta^\ast)|\\
&\leq\epsilon_{\Delta t,n}+\epsilon_{\mathrm{bias},l}
+\mathcal O_p((nT)^{-1/2})+\mathcal O_p((nT)^{-1})
=\mathcal O((\Delta t)^{l+1})
+\mathcal O_p((nT)^{-1/2}).
\end{aligned}
\label{app:eq:uniform-policy-value-error}
\end{equation}
since $3l\geq l+1$ for $l\geq1$.

By Equation~\eqref{eq:bias_corrected_objective_intro},
$\widehat\pi_n$ maximizes $\bar J_n$, while $\pi^\ast$ maximizes
$J(\pi;\bm\theta^\ast)$. Hence
\begin{equation}
J(\pi^\ast;\bm\theta^\ast)
-J(\widehat\pi_n;\bm\theta^\ast)
\le 2\sup_{\pi\in\Pi}
|\bar J_n(\pi)-J(\pi;\bm\theta^\ast)|.
\label{app:eq:policy-regret-comparison}
\end{equation}
Substitution of Equation~\eqref{app:eq:uniform-policy-value-error}
into Equation~\eqref{app:eq:policy-regret-comparison} completes the proof.
\end{proof}

\end{document}